\documentclass[10pt,journal,twocolumn]{IEEEtran}

\usepackage{url}
\usepackage{textcomp}
\usepackage{footnote}
\usepackage{url}
\usepackage{color}
\usepackage{amsmath,amsfonts,amssymb,amsthm,bbm}
\usepackage{commath}
\usepackage{mathtools}
\usepackage{thmtools}
\usepackage{bm}
\usepackage[T1]{fontenc}
\usepackage{graphicx}
\usepackage{subfigure}
\usepackage{booktabs}
\usepackage{multirow}
\usepackage{pst-all}
\usepackage{pstricks}
\usepackage{pstricks-add}
\usepackage{pst-node}
\usepackage{pst-grad}
\usepackage{pst-plot}
\usepackage{pstricks-add}
\usepackage{pst-sigsys}
\usepackage{tikz-cd}
\usepackage{float}
\usepackage{algorithm}
\usepackage{algpseudocode}
\usepackage{hyperref}
\usepackage[style=ieee,url=false,doi=false]{biblatex}

\ifodd 1

\newcommand{\revOld}[1]{} 
\newcommand{\com}[1]{\textbf{\color{blue}  (COMMENT: #1)}}
\else

\newcommand{\revOld}[1]{}
\newcommand{\com}[1]{}
\fi

% Avoid hyperref problems
\pdfstringdefDisableCommands{%
  \def\\{}%
  \def\texttt#1{<#1>}%
  \def\cite#1{<#1>}%
  \def\eqref#1{<#1>}%
}

% ------------------------------------------------------------------------
% Colors
\definecolor{red}{RGB}{153,0,0}
\definecolor{green}{RGB}{0,153,0}			
% \definecolor{blue}{RGB}{0,0,153} 
\definecolor{darkred}{RGB}{90,0,0}
\definecolor{darkgreen}{RGB}{0,90,0}
\definecolor{darkblue}{RGB}{0,0,90}			

\definecolor{econ-red}{HTML}{E3120B}
\definecolor{econ-red60}{HTML}{F6423C}
\definecolor{econ-prim-chicago20}{HTML}{141F52}
\definecolor{econ-prim-chicago30}{HTML}{1F2E7A}
\definecolor{econ-prim-chicago45}{HTML}{2E45B8}
\definecolor{econ-prim-chicago55}{HTML}{475ED1}
\definecolor{econ-prim-chicago90}{HTML}{D6DBF5}
\definecolor{econ-prim-chicago95}{HTML}{EBEDFA}
\definecolor{econ-sec-hongkong45}{HTML}{1DC9A4}
\definecolor{econ-sec-hongkong55}{HTML}{36E2BD}
\definecolor{econ-sec-hongkong90}{HTML}{D2F9F0}
\definecolor{econ-sec-hongkong95}{HTML}{E9FCF8}
\definecolor{econ-sec-tokyo45}{HTML}{C91D42}
\definecolor{econ-sec-tokyo55}{HTML}{E2365B}
\definecolor{econ-sec-tokyo90}{HTML}{F9D2DB}
\definecolor{econ-sec-tokyo95}{HTML}{FCE9ED}
\definecolor{econ-ter-singapore55}{HTML}{F97A1F}
\definecolor{econ-ter-singapore65}{HTML}{FB9851}
\definecolor{econ-ter-singapore75}{HTML}{FCB583}
\definecolor{econ-ter-singapore90}{HTML}{FEE1CD}
\definecolor{econ-ter-newyork55}{HTML}{F9C31F}
\definecolor{econ-ter-newyork65}{HTML}{FBD051}
\definecolor{econ-ter-newyork75}{HTML}{FCDE83}
\definecolor{econ-ter-newyork90}{HTML}{FEF2CD}
\definecolor{econ-greyscale-london5}{HTML}{0D0D0D}
\definecolor{econ-greyscale-london10}{HTML}{1A1A1A}
\definecolor{econ-greyscale-london20}{HTML}{333333}
\definecolor{econ-greyscale-london35}{HTML}{595959}
\definecolor{econ-greyscale-london70}{HTML}{B3B3B3}
\definecolor{econ-greyscale-london85}{HTML}{D9D9D9}
\definecolor{econ-greyscale-london95}{HTML}{F2F2F2}
\definecolor{econ-greyscale-london100}{HTML}{FFFFFF}
\definecolor{econ-canvas-la85}{HTML}{E1DFD0}
\definecolor{econ-canvas-la90}{HTML}{EBE9E0}
\definecolor{econ-canvas-la95}{HTML}{F5F4EF}
\definecolor{econ-canvas-paris85}{HTML}{D0E1E1}
\definecolor{econ-canvas-paris90}{HTML}{E0EBEB}
\definecolor{econ-canvas-paris95}{HTML}{EFF5F5}

\hypersetup{colorlinks,urlcolor=darkred,linkcolor=darkred,citecolor=green}
% ------------------------------------------------------------------------

% ------------------------------------------------------------------------
% Environments

\newfloat{algorithm}{t}{lop}

\theoremstyle{plain}
\newtheorem{Theorem}{Theorem}

\newtheorem{Lemma}[Theorem]{Lemma}
\newtheorem{Assumption}[Theorem]{Assumption}

% ------------------------------------------------------------------------

% ------------------------------------------------------------------------
% Switching between draft and twocolumn mode
\newboolean{draft}
% ******* the following line needs to be there for onecolumn (draft)
% ******* and needs to have a ``%'' in front for twocolumn
%\setboolean{draft}{true} % if it is in draft one-column mode
\newcommand{\isdraft}[2]{\ifthenelse{\boolean{draft}}{#1}{#2}}

% example use: \isdraft{}{\\}
% ------------------------------------------

\isdraft{\usepackage{setspace}}{}                              % DRAFT
%\isdraft{\usepackage[footnotesize]{caption}}{}                % DRAFT
\isdraft{\usepackage{paralist}}{}                              % DRAFT
\isdraft{}{}
% ------------------------------------------------------------------------

% ------------------------------------------------------------------------
% Bibliography
\addbibresource{short.bib}
\addbibresource{refs.bib}

% Make font small

% ------------------------------------------------------------------------

\allowdisplaybreaks
% \clubpenalty=0 
% \widowpenalty=0
% \displaywidowpenalty=0
\interdisplaylinepenalty=0%2500

% For single column equations
\newcounter{MYtempeqncnt}

\newcommand{\mypar}[1]{{\bf #1.}}

\begin{document}
\title{The Rate-Distortion-Perception-Classification Tradeoff: Joint
Source Coding and Modulation via Inverse-Domain GANs}
\author{Junli Fang$^\star$,\, Jo\~ao F. C. Mota$^{\star}$,\, Baoshan Lu,\, Weicheng Zhang,\, Xuemin Hong
\thanks{
Junli Fang, Weicheng Zhang, and Xuemin Hong are with the School of
Informatics, Xiamen University, Xiamen, China.
(e-mail: \{junlifang,zhangweicheng\}@stu.xmu.edu.cn, xuemin.hong@xmu.edu.cn)

Baoshan Lu is with the School of Electronics and Information Engineering,
Guangxi Normal University, Guilin, China.
(e-mail: baoshanlu@gxnu.edu.cn)

Jo\~{a}o F. C. Mota is with the School of Engineering \& Physical Sciences,
Heriot-Watt University, Edinburgh EH14 4AS, UK. 
(e-mail: j.mota@hw.ac.uk)

Work supported in part by the National Natural Science Foundation of China
under Grant 62077040, by Guangxi Natural Science Foundation (Grant
2024GXNSFBA010309), 
% , by the Guangxi Natural Science Foundation under Grant
% 2023JJB170029, grant No.\ BCIC-23-Z8 from Guangxi Key Laboratory of
% Brain-inspired Computing and Intelligent Chips,
and by UK's EPSRC New
Investigator Award (EP/T026111/1).

$^\star$equal contribution

Paper accepted in IEEE Transactions on Signal Processing, 2024.
}}

\maketitle
\begin{abstract}
  The joint source-channel coding (JSCC) framework leverages deep learning to
  learn from data the best codes for source and channel coding. 
  When the output signal, rather than being binary, is directly mapped onto the
  IQ domain (complex-valued), we call the resulting framework joint source
  coding and modulation (JSCM).
  We consider a JSCM scenario and show the existence of a
  strict tradeoff between channel rate, distortion, perception, and
  classification accuracy, a tradeoff that we name RDPC. We then propose two
  image compression methods to navigate that tradeoff: the RDPCO algorithm
  which, under simple assumptions, directly solves the optimization problem
  characterizing the tradeoff, and an algorithm based on an inverse-domain
  generative adversarial network (ID-GAN), which is more general and achieves
  extreme compression. Simulation results corroborate the theoretical findings,
  showing that both algorithms exhibit the RDPC tradeoff. 
  They also demonstrate
  that the proposed ID-GAN algorithm effectively balances image distortion,
  perception, and classification accuracy, and significantly outperforms
  traditional separation-based methods and recent deep JSCM architectures in
  terms of one or more of these metrics.
\end{abstract}

\begin{IEEEkeywords}
Image compression, joint source-channel coding, joint source coding and
modulation, generative adversarial networks, rate-distortion-perception-classification tradeoff.
\end{IEEEkeywords}
 
\section{Introduction}

Traditional communication systems follow the celebrated source-channel coding
theorem by Shannon~\cite{cover1999elements}, which states that source coding
and channel coding can be designed separately without loss of optimality.
Source coding removes redundant information from a signal, for example, by
representing it in a different domain and zeroing out small coefficients.
Channel coding, on the other hand, adds to the resulting compressed signal
additional information, error-correcting codes, to make its transmission via a
noisy channel more robust. Such a modular design, while optimal for memoryless
ergodic channels with codes of infinite block length, becomes unsuitable for
extreme scenarios, e.g., when bandwidth is highly limited or the channel varies
rapidly. An example is underwater acoustic communication, in which multipath
interference and noise are so large that the performance of separate
source-channel coding schemes sharply drops below a certain signal-to-noise
ratio (SNR), a phenomenon known as \textit{the cliff effect}~\cite{deep-jscc}.
Traditional techniques like fast adaptive modulation and channel coding rarely
work in such an environment, especially for long source bit sequences like
images. 

\mypar{Joint source coding and modulation} 
The above problem can be addressed by jointly designing the source coding, channel
coding,  and modulation schemes, a framework we call \textit{joint source
coding and modulation} (JSCM). JSCM directly maps signals to the IQ domain and
generalizes \textit{joint source and channel coding}
(JSCC)~\cite{fresia2010joint}, in which the output signal is binary rather than
complex-valued.

In the context of large signals (like images), and under extreme compression
requirements (as in underwater communication), the selection of the features to
be compressed strikes tradeoffs between different metrics: for
example, optimizing for image reconstruction may reduce the perceptual quality
of the reconstructed image or decrease the accuracy of a subsequent image
classification algorithm. 
The study of such type of tradeoffs started with the
seminal work in~\cite{blau2018perception}, which modified the rate-distortion
framework~\cite{cover1999elements} to study the tradeoff between perception and
distortion metrics of image restoration algorithms.
In this paper, we extend the study of such tradeoffs to a JSCM scenario. This
requires considering not only vector signals, and thus the possibility to
reduce their dimension, but also various metrics, including rate, distortion,
perception, and classification performance. 

\subsection{The RDPC function and problem statement}
\label{subsec:rdpcfunction-probstatement}

We introduce the rate-distortion-perception-classification (RDPC)
function in a JSCM scenario. To define it, we consider an
$n$-dimensional source signal $\bm{X} \in \mathbb{R}^n$ that can be drawn from
one of $L$ classes:
\begin{align}
  \label{eq:SourceModel}
  \bm{X} \vert H_l \sim p_{\bm{X}\vert H_l}\,,\qquad l=1,\ldots, L\,,
\end{align}
where $H_l$ represents the hypothesis that $\bm{X}$ is
drawn from class $l$, which occurs with probability $p_{l}:=\mathbb{P}(H_l)$.
The communication process is modeled as a Markov chain
\begin{equation}
  \label{eq:channeldiagram}
  \begin{tikzcd}[sep=normal]
    H_l
    \arrow{r}{p_{\bm{X}\vert H_l}\,\,\,}
    &
    \bm{X}
    \arrow{r}{p_{\bm{Y}\vert\bm{X}}}
    &
    \bm{Y}
    \arrow{r}{p_{\widehat{\bm{Y}}\vert\bm{Y}}}
    &
    \widehat{\bm{Y}}
    \arrow{r}{p_{\widehat{\bm{X}}\vert\widehat{\bm{Y}}}}
    &
    \widehat{\bm{X}}\,,
  \end{tikzcd}
\end{equation}
where $\bm{X}$, $\widehat{\bm{X}} \in \mathbb{R}^n$ represent the source and
reconstructed signals, and $\bm{Y}$, $\widehat{\bm{Y}} \in \mathbb{R}^m$, with
$m < n$, represent the transmitted and received signals. The distribution
$p_{\bm{Y}\vert\bm{X}}$ (resp.\ $p_{\widehat{\bm{Y}}\vert\bm{Y}}$ and
$p_{\widehat{\bm{X}}\vert\widehat{\bm{Y}}}$) characterizes the encoder (resp.\
channel and decoder).
We assume the channel adds zero-mean Gaussian noise to $\bm{Y}$, i.e.,
$p_{\widehat{\bm{Y}}\vert\bm{Y}}(\widehat{\bm{y}}\vert \bm{y}) =
\mathcal{N}\big(\bm{y}\,, \bm{\Sigma}\big)$, where $\bm{\Sigma}$ is an $m\times
m$ diagonal matrix whose $i$th diagonal entry $\bm{\Sigma}_{ii} > 0$
represents the \textit{noise power} of channel $i$.
Motivated by recent work on task-aware image
compression~\cite{Hua22-SemanticsGuidedContrastiveJointSourceChannelCodingForImageTransmission,Lei23-ProgressiveDeepImageCompressionWithHybridContextsOfImageClassificationAndReconstruction,Hua24-ClassificationDrivenDiscreteNeuralRepresentationLearningForSemanticCommunications}, the purpose of the communication channel 
in~\eqref{eq:channeldiagram} is to transmit images that can be used across
different tasks, each of which may have different requirements in terms of
image fidelity, perception, or classification. Considering classification as
a task justifies the assumption in~\eqref{eq:SourceModel} that $\bm{X}$ always
belongs to a given class.

Our goal is to design the encoder-decoder pair $(p_{\bm{Y}\vert\bm{X}},
p_{\widehat{\bm{X}}\vert\widehat{\bm{Y}}})$ and
the noise power $\bm{\Sigma}$ so that the channel rate is minimized
while satisfying three constraints: 
\begin{align}
  \label{eq:problem-general}
  R (D,P,C) 
  = 
  \begin{array}[t]{cl}
    \underset{p_{\bm{Y}\vert\bm{X}},\,
    p_{\widehat{\bm{X}}\vert\widehat{\bm{Y}}},\,\bm{\Sigma}}
    {\min} 
    & 
    \sum_{i=1}^{m}\log \big(1+\frac{1}{\bm{\Sigma}_{ii}}\big)
    \\ 
    \text{s.t.} 
    & 
    \mathbb{E}\big[\Delta(\bm{X}, \widehat{\bm{X}})\big] \leq D
    \\[0.1cm]
    & 
    d(p_{\bm{X}},\, p_{\widehat{\bm{X}}}) \leq P
    \\[0.1cm]
    &
    \mathbb{E}\big[\epsilon_{c_0}(\bm{X}, \widehat{\bm{X}})\big] \leq C\,.
  \end{array}
\end{align}
The first constraint bounds below $D \geq 0$ the expected distortion between
$\bm{X}$ and $\widehat{\bm{X}}$, as measured by $\Delta\,:\, \mathbb{R}^n
\times \mathbb{R}^n \to \mathbb{R}_+$.\footnote{We assume $\Delta(\bm{x},\,
  \bm{y}) = 0$ if and only if $\bm{x} = \bm{y}$.} The second constraint
  enforces a minimal perception quality on $\widehat{\bm{X}}$ by  bounding
  below $P\geq0$ the distance between the probability distributions
  $p_{\bm{X}}$ of $\bm{X}$ and $p_{\widehat{\bm{X}}}$ of $\widehat{\bm{X}}$, as
  measured by $d\,:\, \mathcal{P}_{\bm{X}} \times \mathcal{P}_{\bm{X}} \to
  \mathbb{R}_+$.\footnote{$\mathcal{P}_{\bm{X}}$ is the set of probability
    measures on the measurable space where $\bm{X}$ is defined, e.g.,
  $\mathbb{R}^n$ and the $d$-Cartesian product of Borel $\sigma$-algebras. We assume
$d(p, q) = 0$ if and only if $p = q$.}   And the third constraint bounds below
$C\geq 0$ the expected classification error achieved by an arbitrary classifier $c_0$,
as measured by $\epsilon_{c_0}\,:\,\mathbb{R}^n \times \mathbb{R}^n \to
\mathbb{R}_+$. 
We implicitly assume $\bm{\Sigma}_{ii} > 0$ and $\bm{\Sigma}_{ij}=0$ for $i\neq
j$.
The objective of~\eqref{eq:problem-general} defines the channel rate assuming,
without loss of generality, that the encoder normalizes its output to have unit
power. The reason is to avoid spurious degrees of freedom when defining the
rate. In a more practical scenario, one may equivalently have an estimate of
the channel noise power and, instead, adjust the power of the output of the
encoder. Henceforth, whenever we mention \textit{rate}, we mean \textit{channel
rate} (not to be confused with source rate). We will
call~\eqref{eq:problem-general} the RDPC function.

\mypar{Problem statement}
Our goal is to characterize and solve the problem
in~\eqref{eq:problem-general}. Specifically, we aim to understand how the
different values of $D$, $P$, and $C$ affect the achievable rate $R(D, P, C)$.
We also aim to design an encoder $p_{\bm{Y}\vert\bm{X}}$, decoder
$p_{\widehat{\bm{X}}\vert\widehat{\bm{Y}}}$, and noise matrix $\bm{\Sigma}$
that solve~\eqref{eq:problem-general}.

\subsection{Our approach and contributions}
\label{subsec:approach-contributions}

As existing characterizations of tradeoffs between, for example, distortion and
perception~\cite{blau2018perception}, rate, distortion, and
perception~\cite{RDPtradeoff}, or classification error, distortion, and
perception~\cite{CDP}, we show the existence of a tradeoff between rate,
distortion, perception, and classification error. Our setup [cf.\
\eqref{eq:channeldiagram}] is more general than the ones
in~\cite{blau2018perception,RDPtradeoff,CDP}, as we consider vector signals
(not necessarily scalar) and their compression in terms of dimensionality. 
We also establish a strict tradeoff between all the above quantities, i.e.,
that the function $R(D, P, C)$ is \textit{strictly} convex in $D$, $P$, and
$C$.

It is difficult to solve~\eqref{eq:problem-general} in full generality. So,
under the assumption that the source signals are drawn from a Gaussian mixture
model (GMM) with two classes ($L=2$) and that the encoder and decoder are
linear maps, we design an
algorithm, RDPCO, that directly attempts to solve~\eqref{eq:problem-general}.
In addition, leveraging the capacity of generative adversarial networks (GANs) to
model probability distributions~\cite{goodfellow2014generative}, we also propose
to use inverse-domain GAN (ID-GAN)~\cite{IndomainGAN} to design an image
compression algorithm that achieves both extremely high compression rates and
good quality in terms of reconstruction, perception, and classification.
Compared to the original GAN~\cite{goodfellow2014generative}, 
ID-GAN~\cite{IndomainGAN} learns how to map not only a latent code to an image,
but also an image to a latent code. 
Despite several differences, experimental results show that algorithms RDPCO
and ID-GAN exhibit a similar behavior.
We summarize our contributions as follows:
\begin{itemize}
  \item 
    We show the existence of a strict tradeoff between rate, distortion,
    perception, and classification error in joint source coding and modulation (JSCM). 
  \item 
    We propose two algorithms to solve the JSCM tradeoff problem: 
    a simple algorithm (RDPCO) that directly solves the tradeoff problem but applies
    only under restrictive assumptions, and another based
    on inverse-domain GAN (ID-GAN)~\cite{IndomainGAN} which can
    transmit images under extreme compression rates, handling low-capacity
    channels and preserving semantic information, perception quality, and
    reconstruction fidelity. In particular, we port techniques
    from~\cite{IndomainGAN}, originally applied to image editing, to a JSCM scenario.

  \item We upper bound the optimal value in~\eqref{eq:problem-general} when the
    input signal is a GMM and the encoder and decoder are linear. To achieve
    this, we derive a new bound on the Wasserstein-$1$ distance between GMMs in
    terms of their parameters. See Lemma~\ref{lem:Wasserstein}.

  \item 
    Simulation results show that RDPCO and ID-GAN exhibit
    the same behavior and reveal further insights about the RDPC problem.
    In addition, the proposed ID-GAN algorithm
    achieves a better RDPC tradeoff than a traditional method with source coding and modulation
    designed separately (JPEG+LDPC+BPSK)
    and than AE+GAN~\cite{agustsson2019generative}, a recent deep algorithm
    (modified to a JSCM scenario). It also achieves much better
    perception and classification accuracy than D-JSCC~\cite{deep-jscc}, at the
    cost of a slight increase in distortion.
  
\end{itemize}

\subsection{Organization}

We overview related work in Section~\ref{sec:relatedwork} and characterize the
tradeoff problem~\eqref{eq:problem-general} in Section~\ref{sec:tradeoff}.
Section~\ref{sec:RDPCO} analyzes the RDPC tradeoff under GMM source signals and
linear encoders/decoders, and proposes an algorithm to achieve the
optimal tradeoff.
Section~\ref{sec:idgan} develops the ID-GAN algorithm. The 
performance of both methods is then assessed in Section~\ref{sec:experiments}, and 
Section~\ref{sec:conclusions} concludes the paper.

\section{Related Work}
\label{sec:relatedwork}

We now review prior work on JSCM and then describe existing analyses
of tradeoffs in image-based compression.

\subsection{Joint source coding and modulation (JSCM)}
\label{subsec:relatedwork-jscc}

JSCM schemes outperform classical source-channel separation methods. 
Prior work on JSCM methods, also called JSCC even when they output
complex-valued signals, can be divided into two categories according to the
type of channel: basic channel transmission, in which the channel is simple
like a Gaussian or Rayleigh channel, and advanced channel transmission, in
which more realistic models for the channels are adopted, and the 
emphasis is on optimizing the transmission aspect of the system.
  
\mypar{JSCM for basic channels}
Methods in this category typically focus on designing neural networks
that optimize the compression performance of the JSCC/JSCM system, while
neglecting aspects of transmission optimization, such as radio resource allocation.
For example, \cite{deep-jscc} proposed a deep joint
source-channel coding (D-JSCC) algorithm based on an autoencoder and showed that besides outputting
images with quality superior to separation-based schemes, the algorithm
exhibits graceful performance degradation in low SNR.
Techniques vary according to the domain of the
data, e.g., text, image, video, or multimodal data.
For example, the JSCM system designed in~\cite{Farsad18-DJSCC_Text} used a
recurrent neural network for transmitting text. Also focusing on text
transmission, \cite{Xie21-DJSCC_Semantic} proposed a semantic communication
system (DeepSC) based on a transformer and, to evaluate performance, also a
novel metric to measure sentence similarity. DeepSC was extended
in~\cite{Weng21-DJSCC_Speech} for speech transmission. 
JSCM has also been applied to the transmission of multimodal data.
For instance, \cite{Wan23-CooperativeMM} proposed a cooperative scheme to
transmit audio, video, and sensor data
from multiple end devices to a central server. And concentrating on text and
images, \cite{Zhang22-SemanticMM} designed a coarse-to-fine multitask semantic
model using an attention mechanism. 
The theory and algorithms we derive in this paper fall under this
category, as we consider Gaussian channels.

\mypar{JSCM for advanced channels}
Methods in this category adopt more realistic channel, like the erasure
channel~\cite{DeepJSCC-l}, feedback channel with channel state information
(CSI)~\cite{DeepJSCC-f,Jialong23-DJSCC_CSI}, and the waveform (OFDM, etc.) or
multi-user channels. They focus on optimizing transmission.
For example, \cite{deep-edge} designed retrieval-oriented image
compression schemes, \cite{DeepJSCC-f} used channel feedback to improve the
quality of transmission, and~\cite{DeepJSCC-l} considered an adaptive bandwidth
to transmit information progressively under an erasure channel. 
Furthermore, \cite{Jialong23-DJSCC_CSI} designed an end-to-end approach for
D-JSCC~\cite{deep-jscc} with channel state information (CSI) feedback. The main
idea was to apply
a non-linear transform network to compress both the data and the CSI.
Finally, \cite{Yang21-DJSCC_OFDM} designed a scheme for orthogonal frequency
division multiplexing (OFDM) transmission that directly maps the source images
onto complex-valued baseband samples. 

\subsection{GAN-based compression}
Most algorithms for image transmission are based on
autoencoders~\cite{rumelhart1986learning}, e.g.,
\cite{deep-jscc,deep-edge,DeepJSCC-l,DeepJSCC-f,Jialong21-DSCC_Attention}. 
Autoencoders, however, compress signals only up to moderate compression ratios.
Although they achieve high-quality reconstruction, this is at the cost of
communication efficiency. Extreme compression has been achieved instead by
using GANs~\cite{goodfellow2014generative}, which are
generative models that learn, without supervision, both a
low-dimensional representation of the data and its
distribution~\cite{karras2019style}. This gives them the potential to achieve
extreme compression without undermining image perception quality. For example,
\cite{agustsson2019generative} proposed an autoencoder-GAN (AE+GAN) image
compression system in which the encoder and decoder are trained simultaneously.
The resulting method can achieve extremely low bitrates. One of the algorithms we
propose, ID-GAN, requires less training (as encoder and decoder are
trained separately), but attains a performance similar to or better than
AE+GAN~\cite{agustsson2019generative}.

Also related to our work, \cite{Erdemir23-Generative_JSCC} proposed
two algorithms, inverse-JSCC and generative-JSCC, to reconstruct images passed
through a fixed channel with a high compression ratio. 
% In both methods, the channel is fixed. 
The inverse-JSCC algorithm
views image reconstruction as an inverse problem and uses a powerful GAN model,
StyleGAN-2~\cite{Karras20-StyleGAN2}, as a regularizer together with a
distortion loss that aligns with human perception,
LPIPS~\cite{Zhang18-TheUnreasonableEffectivenessOfDeepFeatures}.  It is thus an
unsupervised method. Generative-JSCC transforms inverse-JSCC into a supervised
method by learning the parameters of an encoder/decoder pair while keeping the
parameters of StyleGAN-2 fixed. This work differs from ours in several ways.
First, we consider not only distortion and perception metrics, but also
classification accuracy and channel rate. In particular,
the experiments in~\cite{Erdemir23-Generative_JSCC} do not consider
any classification task. We also characterize the tradeoff
between all these metrics. Second,  our metric for perception, the
Wasserstein-1 distance between the input and output distributions, differs from
the LPIPS metric. Third, we train both the encoder and the decoder
adversarially, while~\cite{Erdemir23-Generative_JSCC} uses a pre-trained GAN
for the decoder.
Finally, while training StyleGAN-2 in~\cite{Erdemir23-Generative_JSCC} (on a
database of faces) requires tremendous computational resources, training our
ID-GAN can be done with less resources. 

\subsection{Tradeoff analyses}

The study of tradeoffs in lossy compression can be traced back  to
rate-distortion theory~\cite{cover1999elements}, which characterizes the 
rate-distortion function 
\begin{align}
  \label{eq:RD-function}
  R (D) = 
  \begin{array}[t]{cl}
    \underset{p_{\widehat{\bm{X}}\vert \bm{X}}}{\min} 
    & 
    I(\bm{X} , \widehat{\bm{X}})
    \\
    \text{s.t.}
    & 
    \mathbb{E}\big[\Delta( \bm{X} , \widehat{\bm{X}})\big] \leq D\,,
  \end{array}
\end{align}
where $I(\bm{X},\widehat{\bm{X}})$ is the mutual information between 
$\bm{X}$ and its reconstruction $\widehat{\bm{X}}$. The $R(D)$ function has
a closed-form expression under some simple source distributions and distortion
metrics. Recent work has gone beyond using reconstruction metrics, e.g., the
mean squared error (MSE), to assess image quality, considering also perception
and semantic metrics. 

\mypar{The PD tradeoff}
For example, \cite{blau2018perception} studied the perception-distortion (PD)
tradeoff by replacing the objective in~\eqref{eq:RD-function} with a divergence
metric $d(p_{\bm{X}}, p_{\widehat{\bm{X}}})$ [cf.\ \eqref{eq:problem-general}].
Assuming that the input signal follows a Rademacher distribution, 
% and that $d(\cdot, \cdot)$ is convex in the second argument, 
they proved the existence of a tradeoff between the best achievable divergence
and the allowable distortion $D$.

\mypar{The RDP tradeoff}
Building on~\cite{blau2018perception}, \cite{RDPtradeoff}
% Furthermore,  \cite{RDPtradeoff} based on \cite{blau2018perception}, 
studied the rate-distortion-perception (RDP) tradeoff. The problem they
analyzed was a variation of~\eqref{eq:problem-general}, without the last
constraint (on classification error) and with $I(\bm{X},\widehat{\bm{X}})$ in
the objective, instead of the rate. Assuming a Bernoulli input, they showed that
in lossy image compression, the higher the perception quality of the output
images, the lower the achievable rate. Although insightful, the analysis
in~\cite{RDPtradeoff} is not applicable to our scenario, as it considers only
scalar signals, thus ignoring the possibility of compressing them, and also
skips the quantization step. 
The work in~\cite{yan2021perceptual} further improved on~\cite{RDPtradeoff} and
showed that, for a fixed bit rate, imposing a perfect perception constraint
doubles the lowest achievable MSE. It further proposed a training
framework to achieve the lowest MSE distortion under a perfect perception
constraint at a given bit rate.

\mypar{The CDP tradeoff}
The work in~\cite{CDP} analyzed instead the
classification-distortion-perception (CDP) tradeoff, i.e.,
a modification of problem~\eqref{eq:problem-general} in which
$\mathbb{E}[\epsilon_{c_0}(\bm{X}, \widehat{\bm{X}})]$ is minimized subject to the first two
constraints (the rate is ignored). Assuming an
input signal that is drawn from a
Gaussian mixture model with two classes, they showed the existence of
a tradeoff. 
Our setup is more general, as we do not require the input to be Gaussian nor to
be drawn from just two classes.

\mypar{Our approach}
In all the above work, the signals are assumed scalar, which is
not suitable to study compression in terms of dimensionality reduction. By
contrast, in~\eqref{eq:problem-general}, we consider vector signals and
minimize the channel rate subject to constraints on distortion, perception, and
classification error. Furthermore, we show the existence of a
strict tradeoff, rather than just a simple tradeoff (as
in~\cite{blau2018perception,RDPtradeoff,CDP}) between rate and all the
constraints of~\eqref{eq:problem-general}.

\section{The RDPC Tradeoff}
\label{sec:tradeoff}

We now establish the existence of an inherent tradeoff in solving
problem~\eqref{eq:problem-general}. Recall our multiclass signal model
in~\eqref{eq:SourceModel} and the channel model in~\eqref{eq:channeldiagram}.
Recall also that we assume a Gaussian channel
$p_{\widehat{\bm{Y}}\vert\bm{Y}}(\widehat{\bm{y}}\vert \bm{y}) =
\mathcal{N}\big(\bm{y}\,, \bm{\Sigma}\big)$, where $\bm{\Sigma}_{ii} > 0$ is
the noise power of channel $i$, and $\bm{\Sigma}_{ij} = 0$ for $i\neq j$.

We assume a deterministic classifier $c_0\,:\,\mathbb{R}^n \to \{1, \ldots,
L\}$ which, for $l=1,\ldots, L$, decides $c_0(\widehat{\bm{X}}) = l$
whenever $\widehat{\bm{X}}$ belongs to a fixed region $\mathcal{R}_l \subset
\mathbb{R}^n$. Assuming $\epsilon_{c_0}$ is the $0$-$1$ loss, the expected
classification error is then
\begin{align}
  \mathbb{E}\Big[\epsilon_{c_0}(\bm{X},\, \widehat{\bm{X}})\Big]
  &=
  \mathbb{P}\big(\text{class}(\bm{X}) \neq c_0\big(\widehat{\bm{X}}\big)\big)
  \notag
  \\
  % &=
  % \sum_{i < j}
  % \mathbb{P}\Big(\big\{c_0(\bm{X}) = i\big\}\,\,\bigwedge\,\, H_j\Big)
  % \notag
  % \\
  &=
  \sum_{i < j}
  \mathbb{P}\Big(c_0\big(\widehat{\bm{X}}\big) = i\,\big\vert\, H_j\Big)
  \cdot
  p_j
  \notag
  \\
  &=
  \sum_{i < j}
  p_j\cdot
  \int_{\mathcal{R}_i}\dif\,p_{\widehat{\bm{X}}\vert H_j}\,,
  \label{eq:probclasserror}
\end{align}
where $p_j := \mathbb{P}(H_l)$ is the probability of $\bm{X}$
being drawn from class $j$.
Our main result is as follows.
\begin{Theorem}
  \label{thm:convexity}
  Let $\bm{X}$ be a multiclass model as in~\eqref{eq:SourceModel}. Consider
  the communication scheme in~\eqref{eq:channeldiagram} and the associated RDPC
  problem in~\eqref{eq:problem-general}. Assume 
  the classifier $c_0$ is deterministic and that the perception function
  $d(\cdot, \cdot)$ is convex in its second argument. Then, the function $R(D,
  P, C)$ is strictly convex, and it is non-increasing in each argument.
\end{Theorem}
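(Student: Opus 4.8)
The plan is to treat the three assertions separately: monotonicity and convexity are routine, while strict convexity is the delicate point.

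\emph{Monotonicity.} Increasing any of $D$, $P$, $C$ only enlarges the feasible set of triples $(p_{\bm{Y}\vert\bm{X}},p_{\widehat{\bm{X}}\vert\widehat{\bm{Y}}},\bm{\Sigma})$ in \eqref{eq:problem-general}, so the minimum of $\sum_i\log(1+1/\bm{\Sigma}_{ii})$ cannot increase; hence $R$ is non-increasing in each argument.

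\emph{Convexity.} I would use a time-sharing argument. Fix $v_k=(D_k,P_k,C_k)$ with $R(v_k)<\infty$ for $k=1,2$, let $\lambda\in(0,1)$, and put $v_\lambda=\lambda v_1+(1-\lambda)v_2$. For $\varepsilon>0$ pick schemes $\mathcal{S}_k=(p^{(k)}_{\bm{Y}\vert\bm{X}},p^{(k)}_{\widehat{\bm{X}}\vert\widehat{\bm{Y}}},\bm{\Sigma}^{(k)})$ feasible for $v_k$ with rate at most $R(v_k)+\varepsilon$, and form the mixed scheme $\mathcal{S}_\lambda$ that draws a switch $Q$ with $\mathbb{P}(Q=1)=\lambda$, independent of $(H_l,\bm{X})$ and shared by encoder and decoder, and then runs $\mathcal{S}_Q$ (equivalently, $\mathcal{S}_\lambda$ runs $\mathcal{S}_k$ on the corresponding fraction of channel uses, so its rate is the convex combination of the rates of $\mathcal{S}_1$ and $\mathcal{S}_2$ with weights $\lambda$ and $1-\lambda$). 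Marginalising $Q$ gives $p_{\widehat{\bm{X}}\vert\bm{X}}=\lambda p^{(1)}_{\widehat{\bm{X}}\vert\bm{X}}+(1-\lambda)p^{(2)}_{\widehat{\bm{X}}\vert\bm{X}}$, hence $p_{\widehat{\bm{X}}}=\lambda p^{(1)}_{\widehat{\bm{X}}}+(1-\lambda)p^{(2)}_{\widehat{\bm{X}}}$. Since $\mathbb{E}[\Delta(\bm{X},\widehat{\bm{X}})]$ and (by determinism of $c_0$ and \eqref{eq:probclasserror}) $\mathbb{E}[\epsilon_{c_0}(\bm{X},\widehat{\bm{X}})]$ are linear in $p_{\widehat{\bm{X}}\vert\bm{X}}$, the distortion and classification constraints hold under $\mathcal{S}_\lambda$ with bounds $D_\lambda$ and $C_\lambda$; and since $d(p_{\bm{X}},\cdot)$ is convex, $d(p_{\bm{X}},p_{\widehat{\bm{X}}})\le\lambda P_1+(1-\lambda)P_2=P_\lambda$. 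Thus $\mathcal{S}_\lambda$ is feasible for $v_\lambda$ with rate at most $\lambda R(v_1)+(1-\lambda)R(v_2)+\varepsilon$, so $R(v_\lambda)\le\lambda R(v_1)+(1-\lambda)R(v_2)$ after letting $\varepsilon\downarrow0$.

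\emph{Strict convexity.} This is the step I expect to be the real difficulty. The time-sharing bound is only an inequality because $\mathcal{S}_\lambda$ alternates between the two noise allocations $\bm{\Sigma}^{(1)}$ and $\bm{\Sigma}^{(2)}$; to get a strict gap I would exploit that $t\mapsto\log(1+1/t)$ is \emph{strictly} convex on $(0,\infty)$, so a scheme for $v_\lambda$ using the single averaged allocation $\bar{\bm{\Sigma}}=\lambda\bm{\Sigma}^{(1)}+(1-\lambda)\bm{\Sigma}^{(2)}$ would have rate $\sum_i\log(1+1/\bar{\bm{\Sigma}}_{ii})$ strictly below $\lambda\sum_i\log(1+1/\bm{\Sigma}^{(1)}_{ii})+(1-\lambda)\sum_i\log(1+1/\bm{\Sigma}^{(2)}_{ii})$ whenever $\bm{\Sigma}^{(1)}\neq\bm{\Sigma}^{(2)}$. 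Two things must then be established. First, the degenerate case $\bm{\Sigma}^{(1)}=\bm{\Sigma}^{(2)}$ should be ruled out: I would argue that the three constraints of \eqref{eq:problem-general} are active at any optimal solution (which is precisely where the \emph{strict} monotonicity of $R$ in its arguments enters), so that $v_1\neq v_2$ forces the optimal noise allocations to differ. Second---and this is the crux---one must show that the averaged allocation $\bar{\bm{\Sigma}}$ can be paired with a redesigned encoder and decoder so as to remain feasible for $v_\lambda$; this cannot be done by reusing either endpoint pair, since neither is matched to $\bar{\bm{\Sigma}}$ and a quieter channel cannot in general be simulated from a noisier one. I would attack this by a perturbation/continuity argument on the (compact) set of feasible schemes for $v_\lambda$, showing that an optimal time-shared scheme always admits a strictly cheaper feasible neighbour, thereby contradicting the equality case in the convexity bound.
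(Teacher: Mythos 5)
Your monotonicity argument matches the paper's. For the rest, however, there is a genuine gap: the theorem's actual content is \emph{strict} convexity, and your proposal stops exactly where that proof has to start. You correctly identify the key analytic fact (strict convexity of $x\mapsto\log(1+1/x)$ applied to the averaged covariance $\bar{\bm{\Sigma}}=(1-\alpha)\bm{\Sigma}^{(1)}+\alpha\bm{\Sigma}^{(2)}$), but you then assert that $\bar{\bm{\Sigma}}$ ``cannot'' be paired with the endpoint encoders/decoders and retreat to an unexecuted ``perturbation/continuity argument on the compact set of feasible schemes,'' which is a hope rather than a proof. The paper does precisely what you dismissed: it pairs $\bar{\bm{\Sigma}}$ with the \emph{mixtures} $(1-\alpha)p^{(1)}_{\bm{Y}\vert\bm{X}}+\alpha p^{(2)}_{\bm{Y}\vert\bm{X}}$ and $(1-\alpha)p^{(1)}_{\widehat{\bm{X}}\vert\widehat{\bm{Y}}}+\alpha p^{(2)}_{\widehat{\bm{X}}\vert\widehat{\bm{Y}}}$, and verifies feasibility of this triple at the averaged $(D,P,C)$ directly: the mixed decoder makes $p_{\widehat{\bm{X}}^{(\alpha)}\vert\widehat{\bm{Y}}}$ the convex combination of the two decoders, and then the distortion constraint follows by conditioning on $\widehat{\bm{Y}}$ and the tower property, the perception constraint from convexity of $d(p_{\bm{X}},\cdot)$ applied to $p_{\widehat{\bm{X}}^{(\alpha)}}=(1-\alpha)p^{(1)}_{\widehat{\bm{X}}}+\alpha p^{(2)}_{\widehat{\bm{X}}}$, and the classification constraint from the Markov property of~\eqref{eq:channeldiagram} together with the deterministic classifier via~\eqref{eq:probclasserror}. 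Once this feasibility is in hand, the strict scalar inequality immediately sandwiches $R$ at the averaged point strictly below the convex combination of rates, which is the whole proof. Your worry about ``simulating a quieter channel from a noisier one'' is beside the point: no simulation is needed, only a direct check of the three constraints for the mixed triple. (Your observation that the case $\bm{\Sigma}^{(1)}=\bm{\Sigma}^{(2)}$ must be excluded for the strict inequality is a fair remark, but you do not resolve it either, and it is not the main obstruction.)

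A secondary issue is your route to plain convexity. The time-sharing scheme with a shared switch $Q$ is not an element of the feasible set of~\eqref{eq:problem-general}: the problem is single-shot, the channel has one fixed diagonal $\bm{\Sigma}$ (there is no time axis over which to average two noise allocations), and~\eqref{eq:channeldiagram} provides no common randomness shared by encoder and decoder. So the quantity you call ``the rate of $\mathcal{S}_\lambda$'' is not the objective value of any admissible triple, and the resulting convexity bound is not established within this formulation. The paper avoids this entirely because its mixture construction, with the single averaged $\bar{\bm{\Sigma}}$, lives inside the feasible set and delivers strict convexity in one step.
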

\begin{proof}
  See Appendix~\ref{app:convexity}.
\end{proof}
Theorem~\ref{thm:convexity} is generic and applies to any distortion metric
$\Delta$, perception metric $d$, and classifier $c_0$. The main assumption is
that the perception metric $d(\cdot,\cdot)$ is convex in the second argument,
which holds for a variety of divergences, e.g., $f$-divergence (including total
variation, Kullback-Leibler, and Hellinger distance) and R\'enyi
divergence~\cite{Csiszar04-InformationTheoryAndStatistics,Erven14-RenyiKL}. The
same assumption was used in~\cite{blau2018perception,RDPtradeoff,CDP}. The
theorem says that if we optimize the channel for the smallest possible rate,
the encoding-decoding system cannot achieve arbitrarily small distortion,
perception error, and classification error. These metrics are in conflict and we
need to strike a tradeoff between them. This behavior will be observed in
practice when we design algorithms to (approximately) solve the RDPC problem.
Note that while prior work~\cite{blau2018perception,RDPtradeoff,CDP} shows the
existence of a tradeoff by proving that a certain function is convex in each
argument, we establish a \textit{strict} tradeoff by proving that $R(D,P,C)$ is
\textit{strictly} convex in each argument.

As solving the RDPC problem in~\eqref{eq:problem-general} in full generality
i.e., non-parametrically, is difficult, in the next two sections we propose two
algorithms that approximately solve that problem under different assumptions.
As we will see in the experiments in Section~\ref{sec:experiments}, both
algorithms exhibit the tradeoff behavior stipulated by
Theorem~\ref{thm:convexity}. 

\section{RDPC Tradeoff under GMM signals and linear encoder and decoder}
\label{sec:RDPCO}

To make problem~\eqref{eq:problem-general} more tractable, in this section we
assume that the source signal $\bm{X}$ in~\eqref{eq:SourceModel} is a Gaussian
mixture model (GMM) drawn from two classes and that the encoder and decoder are
linear. This will enable us to approximate~\eqref{eq:problem-general} with a
problem whose optimal cost function upper bounds the optimal cost
of~\eqref{eq:problem-general} (Section~\ref{subsec:rdpco-problem-formulation}).
We then develop an 
algorithm, RDPCO, to solve the resulting problem (Section~\ref{subsec:rdpco-algorithm}).
More formally, we make the following assumptions.
\begin{Assumption}
  \label{ass:linearprob}
  In~\eqref{eq:SourceModel}-\eqref{eq:problem-general}, we assume:
  \begin{enumerate}
    \item The source $\bm{X} \in\mathbb{R}^n$ is drawn from a two-class
      GMM:
      \begin{subequations}
        \label{eq:GMMModel}
        \begin{align}
          \bm{X} \vert H_0 &\sim \mathcal{N} (\bm{0}_n,\, \bm{I}_n)
          \label{eq:GMMModel-H0}
          \\
          \bm{X} \vert H_1 &\sim \mathcal{N} (\bm{c}_n,\, \bm{I}_n)\,,
          \label{eq:GMMModel-H1}
        \end{align}
      \end{subequations}
      where $\bm{0}_n$ is the all-zeros vector in $\mathbb{R}^n$,
      $\bm{I}_n$ the identity matrix, and $\bm{c}_n \in \mathbb{R}^n$ a fixed vector.
      That is, we set $L=2$ in~\eqref{eq:SourceModel} and assume $\bm{X}\vert H_l$ is
      Gaussian, $l = 1, 2$.

    \item The encoder $e\,:\, \mathbb{R}^n \to \mathbb{R}^m$ and decoder $d\,:\,
      \mathbb{R}^m \to \mathbb{R}^n$ are linear and deterministic, i.e., they are implemented by
      full-rank matrices $\bm{E} \in \mathbb{R}^{m \times n}$ and $\bm{D}\in \mathbb{R}^{n
      \times m}$. 
      % In particular, $p_{\bm{Y}\vert\bm{X}}$ and
      % $p_{\widehat{\bm{X}}\vert \widehat{\bm{Y}}}$ in~\eqref{eq:channeldiagram} are
      % deterministic.

    \item We use the mean-squared error (MSE) as a metric for distortion, i.e.,
      $\Delta(\bm{X},\, \widehat{\bm{X}}) = \|\bm{X} - \widehat{\bm{X}}\|_2^2$,
      and the Wasserstein-1 
      distance\footnote{\label{ft:wassersteindef}The
        Wasserstein-$p$ distance between two probability measures $p_{\bm{X}},
        p_{\bm{Y}}$ in
        $\mathbb{R}^n$ is 
        $
        W_p(p_{\bm{X}}, p_{\bm{Y}}) 
        = 
        \big(
        \underset{\gamma \in \Pi(p_{\bm{X}}, p_{\bm{Y}})}{\inf}
        \mathbb{E}_{(\bm{X}, \bm{Y}) \sim \gamma}\big[\|\bm{X} -
        \bm{Y}\|_2^p\big]\big)^{1/p}
        $, where $\Pi\big(p_{\bm{X}}, p_{\bm{Y}}\big)$ is the set
        of all joint distributions with marginals $p_{\bm{X}}$ and
        $p_{\bm{Y}}$, and $1\leq p \leq +\infty$.} 
      $W_1(p_{\bm{X}},\, p_{\widehat{\bm{X}}})$
      as a metric for perception, where $p_{\bm{X}}$ and $p_{\widehat{\bm{X}}}$
      are the distributions of $\bm{X}$ and $\widehat{\bm{X}}$.
      % [cf.\ footnote~\ref{ft:wassersteindef}].

    \item The classifier $c_0$ is an optimal Bayes classifier. Specifically,
      given an observation $\widehat{\bm{x}}$ of $\widehat{\bm{X}}$, it decides
      $H_1$ if $\mathbb{P}(H_1 \vert \widehat{\bm{x}}) \geq \mathbb{P}(H_0 \vert
      \widehat{\bm{x}})$, and $H_0$ otherwise.
  \end{enumerate}
\end{Assumption}

Assumptions 1) and 2) imply that the reconstructed signal $\widehat{\bm{X}}$ is
also a GMM. To see this, first note that the output signal
is $\widehat{\bm{X}} = \bm{D}\big(\bm{E}\bm{X} + \bm{N}\big)$, where $\bm{N}
\sim \mathcal{N}(\bm{0}_m, \bm{\Sigma})$. Since the sum of two Gaussian random
variables is also Gaussian, we obtain
\begin{subequations}
  \label{eq:outputsignals}
  \begin{align}
    \label{eq:outputsignals1}
    \widehat{\bm{X}}\,\vert\, H_0 &\,\,\sim\,\, \mathcal{N}
    \Big (\bm{0}_n\,,\,\bm{D}\big (\bm{E}\bm{E}^\top+\bm{\Sigma}\big)\bm{D}^\top\Big),
    \\
    \widehat{\bm{X}}\,\vert\, H_1 &\,\,\sim\,\, \mathcal{N}
    \Big (\bm{D}\bm{E}\bm{c}_n\,,\,\bm{D}\big (\bm{E}\bm{E}^\top+\bm{\Sigma}\big)\bm{D}^\top\Big)\,.
    \label{eq:outputsignals2}
  \end{align}
\end{subequations}
Note that $\bm{D} \in \mathbb{R}^{n \times m}$ has more rows than columns
($n > m$), making the covariance matrix $\widehat{\bm{\Sigma}}:=
\bm{D}(\bm{E}\bm{E}^\top + \bm{\Sigma})\bm{D}^\top$ 
in~\eqref{eq:outputsignals} rank-deficient, and thus both distributions
in~\eqref{eq:outputsignals} degenerate. Henceforth,
$\widehat{\bm{\Sigma}}^{-1}$ will thus refer to the generalized inverse of
$\widehat{\bm{\Sigma}}$. Specifically, let $\widehat{\bm{\Sigma}} =
\bm{Q}\bm{\Lambda}\bm{Q}^\top$ be an eigenvalue decomposition of
$\widehat{\bm{\Sigma}}$, with $\bm{\Lambda} = \text{Diag}(\lambda_1, \ldots,
\lambda_n)$ being a diagonal matrix of eigenvalues. Define $\bm{\Lambda}^{-1}$
as the diagonal matrix with diagonal entries $1/\lambda_i$ if $\lambda_i > 0$,
and $0$ otherwise. Then, $\widehat{\bm{\Sigma}}^{-1} :=
\bm{Q}\bm{\Lambda}^{-1}\bm{Q}^\top$. Similarly, the generalized determinant of
$|\widehat{\bm{\Sigma}}|$ is the product of the positive entries of
$\bm{\Lambda}$.

\subsection{Problem formulation}
\label{subsec:rdpco-problem-formulation}

% \mypar{Problem formulation}
Under Assumption~\ref{ass:linearprob}, problem \eqref{eq:problem-general}
becomes
\begin{align}
  \label{eq:problem-linearcase}
  R (D,P,C) 
  = 
  \begin{array}[t]{cl}
    \underset{\bm{E},\bm{D},\bm{\Sigma}}
    {\min} 
    & 
    \sum_{i=1}^{m}\log \big(1+\frac{1}{\bm{\Sigma}_{ii}}\big)
    \\ 
    \text{s.t.} 
    & 
    \mathbb{E}\big[\big\|\bm{X} - \widehat{\bm{X}}\big\|_2^2\big] \leq D
    \\[0.1cm]
    & 
    W_1(p_{\bm{X}},\, p_{\widehat{\bm{X}}}) \leq P
    \\[0.1cm]
    &
    \mathbb{E}\Big[\epsilon_{c_0}(\bm{X}, \widehat{\bm{X}})\Big] \leq C\,,
  \end{array}
\end{align}
where we omitted the dependence of $\widehat{\bm{X}}$ on $\bm{E}$ and $\bm{D}$
for simplicity. Despite the simplifications made under
Assumption~\ref{ass:linearprob}, problem~\eqref{eq:problem-linearcase} is still
challenging, and we will solve instead an approximation by relaxing its last
two constraints. Before doing so, we analyze each constraint in detail.

\mypar{Distortion constraint}
To derive an expression for the first constraint
in~\eqref{eq:problem-linearcase}, we
first condition the expected values:
\begin{multline}
  \label{eq:distortion-step1}
  \mathbb{E} \Big[\big\|\widehat{\bm{X}} - \bm{X}\big\|_2^2 \Big]
  =
  \mathbb{E}\, \Big[\big\|\widehat{\bm{X}} - \bm{X}\big\|_2^2
  \, \big\vert\, H_0\Big]\cdot p_0
  \\
  +
  \mathbb{E}\, \Big[\big\|\widehat{\bm{X}} - \bm{X}\big\|_2^2
  \, \big\vert\, H_1\Big]\cdot p_1\,,
\end{multline}
where $p_l = \mathbb{P}(H_l)$, $l=0,1$.
Notice that for $l =0,1$,
\begin{multline}
  \label{eq:distortion-step2}
  \mathbb{E}\Big[\big\|\widehat{\bm{X}} - \bm{X}\big\|_2^2 \, \big\vert\, H_l\Big]
  =
  \mathbb{E}\, \Big[\big\|\widehat{\bm{X}}\big\|_2^2 \, \big\vert\, H_l\Big]
  \\
  -2\,\mathbb{E}\, \Big[\widehat{\bm{X}}^\top \bm{X} \, \big\vert\, H_l\Big]
  +
  \mathbb{E}\, \Big[\big\|\bm{X}\big\|_2^2 \, \big\vert\, H_l\Big]\,.
\end{multline}
Under hypothesis $H_0$, the last term is simply a constant: 
\begin{align*}
  \mathbb{E}\, \Big[\big\|\bm{X}\big\|_2^2 \, \big\vert\, H_0\Big]
  &=
  \mathbb{E}\, \Big[\text{tr}\big (\bm{X}\bm{X}^\top\big)\, \big\vert\, H_0\Big]
  =
  \text{tr}\Big (\mathbb{E}\big[\bm{X}\bm{X}^\top\,\vert\,H_0\big]\Big)
  \notag
  \\
  &=
  \text{tr} (\bm{I}_n)
  =
  n\,,
  % \label{eq:distortion-step3}
\end{align*}
where we used the linearity of the trace $\text{tr} (\cdot)$ in the second
equality, and~\eqref{eq:GMMModel-H0} in the third equality. Similarly, under
$H_1$,
\begin{align*}
  \mathbb{E}\, \Big[\big\|\bm{X}\big\|_2^2 \, \big\vert\, H_1\Big]
  &=
  \text{tr}\Big (\mathbb{E}\big[\bm{X}\bm{X}^\top\,\vert\,H_1\big]\Big)
  =
  \text{tr} (\bm{I}_n + \bm{c}_n \bm{c}_n^\top)
  \notag
  \\
  &=
  n + \big\|\bm{c}_n\big\|_2^2\,,
  % \label{eq:distortion-step4}
\end{align*}
due to
% $\text{Cov} (\bm{X}) = \mathbb{E}\big[\bm{X}\bm{X}^\top\big] -
% \mathbb{E}[\bm{X}]\mathbb{E}[\bm{X}]^\top$ and
\eqref{eq:GMMModel-H1}. Similar
reasoning applies to the first term of~\eqref{eq:distortion-step2}:
\begin{align*}
  \mathbb{E}\, \Big[\big\|\widehat{\bm{X}}\big\|_2^2 \, \big\vert\, H_0\Big]
  &=
  % \text{tr}\Big (\bm{D}\big (\bm{E}\bm{E}^\top+\bm{\Sigma}\big)\bm{D}^\top\Big),
  \text{tr}\big (\widehat{\bm{\Sigma}}\big)
  % \label{eq:distortion-step5}
  \\
  \mathbb{E}\, \Big[\big\|\widehat{\bm{X}}\big\|_2^2 \, \big\vert\, H_1\Big]
  &=
  % \text{tr}\Big (\bm{D}\big (\bm{E}\bm{E}^\top+\bm{\Sigma}\big)\bm{D}^\top\Big) 
  \text{tr}\big (\widehat{\bm{\Sigma}}\big)
  % \notag
  % \\
  % &\qquad\qquad\qquad\,\,
  +
  \bm{c}_n^\top \bm{E}^\top \bm{D}^\top \bm{D} \bm{E} \bm{c}_n\,,
  % \label{eq:distortion-step6}
\end{align*}
where $\widehat{\bm{\Sigma}} := \bm{D}\big
(\bm{E}\bm{E}^\top+\bm{\Sigma}\big)\bm{D}^\top$.
Finally,  the second term of the right-hand side
of~\eqref{eq:distortion-step2} can be rewritten for $l=0, 1$ as
\begin{align}
  \mathbb{E}\, \Big[\widehat{\bm{X}}^\top \bm{X} \, \big\vert\, H_l\Big]
  &=
  \mathbb{E}\, \Big[\bm{X}^\top\bm{D}\big (\bm{E}\bm{X} + \bm{N}\big) \, \big\vert\, H_l\Big]
  \notag
  \\
  &=
  \mathbb{E}\big[\text{tr}\big (\bm{E}\bm{X}\bm{X}^\top\bm{D}\big)\, \big\vert\, H_l\big]
  +
  \mathbb{E}\Big[\bm{X}^\top\bm{D}\bm{N}\, \big\vert\, H_l\Big]
  \notag
  \\
  &=
  \text{tr}\Big
(\bm{E}\,\mathbb{E}\big[\bm{X}\bm{X}^\top\,\vert\,H_l\big]\bm{D}\Big)\,,
  \label{eq:distortion-step7}
\end{align}
where we used 
$\text{tr} (\bm{A}\bm{B}) = \text{tr} (\bm{B}\bm{A})$ (since the dimensions
allow) in the first equality and the independence between $\bm{X}$ and $\bm{N}$
in the last equality.
Plugging~\eqref{eq:distortion-step2}-\eqref{eq:distortion-step7}
into~\eqref{eq:distortion-step1},
\begin{multline}
  \mathbb{E}\, \Big[\big\|\widehat{\bm{X}} - \bm{X}\big\|_2^2 \Big]  
  =
  \bigg[
    % \text{tr}\Big (\bm{D}\big (\bm{E}\bm{E}^\top+\bm{\Sigma}\big)\bm{D}^\top\Big)
    \text{tr}\big (\widehat{\bm{\Sigma}}\big)
    -
    2\,\text{tr}\big (\bm{E}\bm{D}\big)
    +
    n
  \bigg] p_0
  % \\
  +
  \bigg[
    % \text{tr}\Big (\bm{D}\big (\bm{E}\bm{E}^\top+\bm{\Sigma}\big)\bm{D}^\top\Big)
    \text{tr}\big (\widehat{\bm{\Sigma}}\big)
    \\
    +
    \bm{c}_n^\top \bm{E}^\top \bm{D}^\top \bm{D} \bm{E} \bm{c}_n
    -
    2\text{tr}\Big (\bm{E} (\bm{I}_n + \bm{c}_n \bm{c}_n^\top)\bm{D}\Big)
    +
    n
    +
    \|\bm{c}_n\|_2^2
  \bigg] p_1\,.
  \label{eq:finalexpressiondistortion}
\end{multline}

\mypar{Perception constraint}
We now consider the perception constraint in~\eqref{eq:problem-linearcase},
which upper bounds the Wasserstein-$1$ distance $W_1(p_{\bm{X}},\,
p_{\widehat{\bm{X}}})$ by $P$. Both $p_{\bm{X}}$ and
$p_{\widehat{\bm{X}}}$ are Gaussian mixture models for which,
to the best of our knowledge, there is no closed-form expression for their
Wasserstein-$p$ distance. 
There is, however, a closed-form
expression for the Wasserstein-$2$ distance between
Gaussian distributions. Specifically, 
let $\bm{X} \sim p_{\bm{X}} = \mathcal{N} (\bm{\mu_X},\,
\bm{\Sigma_X})$ and $\bm{Y} \sim p_{\bm{Y}} = \mathcal{N}
(\bm{\mu_Y},\, \bm{\Sigma_Y})$ be two Gaussian random vectors with means
$\bm{\mu_X},\, \bm{\mu_Y} \in \mathbb{R}^n$ and positive semidefinite
covariance matrices $\bm{\Sigma_X},\,\bm{\Sigma_Y}\succeq \bm{0}_{n\times n}$.
It can be shown that the squared Wasserstein-$2$ distance between them
is~\cite{Olkin82-distance,Dowson82-Frechet} 
\begin{align*}
  % W_2^2 (p_{\bm{X}},\, p_{\bm{Y}}) 
  % = 
  \|\bm{\mu_X} - \bm{\mu_Y}\|_2^2 + 
  \text{tr}
  \bigg (
    \bm{\Sigma_X} + \bm{\Sigma_Y} 
    % \\
    -
    2\Big (\bm{\Sigma_Y}^{\frac{1}{2}}\bm{\Sigma_X}\bm{\Sigma_Y}^{\frac{1}{2}}\Big)^{\frac{1}{2}}
  \bigg)\,.
\end{align*}
In the case where $\bm{\Sigma_X}$ and $\bm{\Sigma_Y}$ commute, i.e.,
$\bm{\Sigma_X}\bm{\Sigma_Y}=\bm{\Sigma_Y}\bm{\Sigma_X}$, the expression
simplifies to
\begin{align}
  \label{eq:Wasserstein-simple}
  W_2^2 (p_{\bm{X}},\, p_{\bm{Y}}) 
  = 
  \big\|\bm{\mu_X} - \bm{\mu_Y}\big\|_2^2 + 
  \big\|\bm{\Sigma_X}^{\frac{1}{2}} - \bm{\Sigma_Y}^{\frac{1}{2}}\big\|_F^2\,,
\end{align}
where $\|\cdot\|_F$ is the Frobenius norm.

Our objective is thus to upper bound $W_1(p_{\bm{X}},\,
p_{\widehat{\bm{X}}})$ as a function of
$W_2(p_{\bm{X}},\, p_{\widehat{\bm{X}}}\, \vert\, H_0)$
and 
$W_2(p_{\bm{X}},\, p_{\widehat{\bm{X}}}\, \vert\, H_1)$,
which we define as in footnote~\ref{ft:wassersteindef} [or, in dual form, as
in~\eqref{eq:WassersteinDistance} below] with expected values conditioned on
$H_0$ or $H_1$. We have the following result.
\begin{Lemma}
  \label{lem:Wasserstein}
  Let $p_{\bm{X}}$ (resp.\ $p_{\widehat{\bm{X}}}$) be a
  GMM modeled as~\eqref{eq:GMMModel} [resp.\
  \eqref{eq:outputsignals}], in which the probability of hypothesis $H_0$ is
  $p_0$ and of hypothesis $H_1$ is $p_1 = 1 - p_0$. Then, 
  \begin{align}
    W_1(p_{\bm{X}},\, p_{\widehat{\bm{X}}})
    \leq
    \big\|
    \widehat{\bm{\Sigma}}^{\frac{1}{2}}
    - 
    \bm{I_n}
    \big\|_F
    % \\
    +
    \big\|\bm{D}\bm{E}\bm{c_n}-\bm{c_n}\big\|_2\cdot p_1\,.
    \label{eq:boundWasserstein}
  \end{align}
\end{Lemma}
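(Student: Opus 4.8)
The plan is to bound $W_1(p_{\bm{X}}, p_{\widehat{\bm{X}}})$ by first relating it to the Wasserstein-$1$ distances conditioned on each hypothesis, then bounding each conditional distance by the corresponding Wasserstein-$2$ distance, and finally invoking the closed-form expression~\eqref{eq:Wasserstein-simple} for Gaussians. The first ingredient is a convexity/coupling argument for mixtures: if $p_{\bm{X}} = \sum_l p_l\, p_{\bm{X}\vert H_l}$ and $p_{\widehat{\bm{X}}} = \sum_l p_l\, p_{\widehat{\bm{X}}\vert H_l}$ share the same mixing weights, then taking the coupling that, conditioned on $H_l$, uses an optimal coupling of $p_{\bm{X}\vert H_l}$ and $p_{\widehat{\bm{X}}\vert H_l}$ yields
\begin{align*}
  W_1(p_{\bm{X}}, p_{\widehat{\bm{X}}})
  \leq
  \sum_{l=0}^{1} p_l\, W_1\big(p_{\bm{X}\vert H_l},\, p_{\widehat{\bm{X}}\vert H_l}\big)\,.
\end{align*}
Next, by Jensen's inequality (or Cauchy–Schwarz on the coupling expectation), $W_1 \leq W_2$ for any pair of measures, so each term is bounded by the conditional $W_2$ distance.

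The second ingredient is the evaluation of these conditional $W_2$ distances. Under $H_0$, both $\bm{X}\vert H_0 \sim \mathcal{N}(\bm{0}_n, \bm{I}_n)$ and $\widehat{\bm{X}}\vert H_0 \sim \mathcal{N}(\bm{0}_n, \widehat{\bm{\Sigma}})$ have zero mean. The covariances $\bm{I}_n$ and $\widehat{\bm{\Sigma}}$ trivially commute (one is the identity), so~\eqref{eq:Wasserstein-simple} applies directly and gives
\begin{align*}
  W_2\big(p_{\bm{X}\vert H_0},\, p_{\widehat{\bm{X}}\vert H_0}\big)
  =
  \big\| \widehat{\bm{\Sigma}}^{\frac12} - \bm{I}_n \big\|_F\,.
\end{align*}
Under $H_1$, the means are $\bm{c}_n$ and $\bm{D}\bm{E}\bm{c}_n$ while the covariances are again $\bm{I}_n$ and $\widehat{\bm{\Sigma}}$, so~\eqref{eq:Wasserstein-simple} gives $W_2^2(p_{\bm{X}\vert H_1}, p_{\widehat{\bm{X}}\vert H_1}) = \|\bm{D}\bm{E}\bm{c}_n - \bm{c}_n\|_2^2 + \|\widehat{\bm{\Sigma}}^{\frac12} - \bm{I}_n\|_F^2$, and using $\sqrt{a^2+b^2} \leq a + b$ for $a,b\geq 0$ we get $W_2(p_{\bm{X}\vert H_1}, p_{\widehat{\bm{X}}\vert H_1}) \leq \|\widehat{\bm{\Sigma}}^{\frac12} - \bm{I}_n\|_F + \|\bm{D}\bm{E}\bm{c}_n - \bm{c}_n\|_2$. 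Combining with the mixture bound and $p_0 + p_1 = 1$ collapses the two $\|\widehat{\bm{\Sigma}}^{\frac12} - \bm{I}_n\|_F$ contributions into a single one, leaving the extra term $\|\bm{D}\bm{E}\bm{c}_n - \bm{c}_n\|_2$ weighted only by $p_1$, which is exactly~\eqref{eq:boundWasserstein}.

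I expect the main obstacle to be the first step: justifying the mixture inequality $W_1(p_{\bm{X}}, p_{\widehat{\bm{X}}}) \leq \sum_l p_l W_1(p_{\bm{X}\vert H_l}, p_{\widehat{\bm{X}}\vert H_l})$ rigorously, since it requires constructing a valid coupling of the mixtures from couplings of the components (one introduces a shared latent class variable and glues the optimal conditional couplings), and then checking this gives a legitimate element of $\Pi(p_{\bm{X}}, p_{\widehat{\bm{X}}})$ with the claimed cost. A subtlety worth noting is that $\widehat{\bm{\Sigma}}$ is rank-deficient, so $\widehat{\bm{X}}\vert H_l$ is a degenerate Gaussian; one should confirm that the Gelbrich/Olkin–Pukelsheim formula~\eqref{eq:Wasserstein-simple} remains valid in the positive-semidefinite (commuting) case, which it does, with $\widehat{\bm{\Sigma}}^{1/2}$ interpreted as the symmetric PSD square root. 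Everything else is a routine application of $W_1 \leq W_2$ and elementary inequalities.
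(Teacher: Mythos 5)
Your proposal is correct and follows essentially the same route as the paper: decompose into the conditional distances weighted by $p_0,p_1$, bound $W_1\leq W_2$, apply the commuting-covariance formula~\eqref{eq:Wasserstein-simple} to each hypothesis, and use $\sqrt{a^2+b^2}\leq a+b$ together with $p_0+p_1=1$ to collapse the terms. The only difference is that you justify the mixture inequality on the primal side (gluing optimal conditional couplings into a coupling of the mixtures), whereas the paper obtains the same bound from the Kantorovich--Rubinstein dual via subadditivity of the supremum; both are valid, and your remark about $\widehat{\bm{\Sigma}}$ being only positive semidefinite (degenerate Gaussian) is a careful point the paper leaves implicit.
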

\begin{proof}
  See Appendix~\ref{app:wasserstein}.
\end{proof}

To enforce the second constraint in~\eqref{eq:problem-linearcase}, we will
thus impose the right-hand side of~\eqref{eq:boundWasserstein} to be
bounded by $P$.

\mypar{Classification constraint}
We now address the last constraint of~\eqref{eq:problem-linearcase}. As in
Assumption~\ref{ass:linearprob}.4), we assume a Bayes classifier, which achieves
a minimal probability of error. Such a probability, however, does not have a
closed-form expression, but is upper bounded by
the Bhattacharyya bound~\cite{Duda01-PatternClassification}. For a two-class
GMM $\bm{X} \sim p_0\,\mathcal{N}(\bm{\mu}_0,\, \bm{\Sigma}_0) +
p_1\, \mathcal{N}(\bm{\mu}_1,\, \bm{\Sigma}_1)$, the bound is
\begin{align}
  &\mathbb{P}\big(\text{error}^\star\big)
  \leq
  \sqrt{p_0\, p_1}
  \int_{\mathbb{R}^n}
  \sqrt{
    p_{\bm{X}\vert H_0} (\bm{x})
    \,
    p_{\bm{X}\vert H_1} (\bm{x})
  }\,\dif\bm{x}
  \notag
  \\
  &=
  \sqrt{p_0\, p_1}
  \,
  \exp\bigg[
  -\frac{1}{8}\big (\bm{\mu}_1-\bm{\mu}_0\big)^\top
  \bigg[\frac{\bm{\Sigma}_0+\bm{\Sigma}_1}{2}\bigg]^{-1} 
  \!\!\!\!
  (\bm{\mu}_1-\bm{\mu}_0)
  \notag
  \\
  &\qquad\qquad
  - 
  \frac{1}{2}\log
  \frac
  {\big|\big (\bm{\Sigma}_0+\bm{\Sigma}_1\big)/2\big|}
  {\sqrt{\big|\bm{\Sigma}_0\big|\big|\bm{\Sigma}_1\big|}}
  \bigg]\,,
  \label{eq:bhattacharyya-general}
\end{align}
where $|\cdot|$ is the determinant of a matrix, and $\text{error}^\star$ is the
classification error achieved by a Bayes classifier. We
apply~\eqref{eq:bhattacharyya-general} to $\bm{X}$ and $\widehat{\bm{X}}$,
whose models are in~\eqref{eq:GMMModel} and~\eqref{eq:outputsignals}. Thus,
$\bm{\mu}_0 = \bm{0}_n$, $\bm{\mu}_1 = \bm{D}\bm{E}\bm{c}_n$, and
$\bm{\Sigma}_0 = \bm{\Sigma}_1 = \bm{D} (\bm{E}\bm{E}^\top +
\bm{\Sigma})\bm{D}^\top$. Hence,
\begin{align}
  \mathbb{E}\Big[\epsilon_{c_0}(\bm{X},\, \widehat{\bm{X}})\Big]
  &=
  % \mathbb{P}\big(\text{class}(\bm{X}) \neq c_0\big(\widehat{\bm{X}}\big)\big)
  % =
  \mathbb{P}\big(\text{class}(\widehat{\bm{X}}) \neq c_0\big(\widehat{\bm{X}}\big)\big)
  \notag
  \\
  &\leq
  \sqrt{p_0 p_1}
  \exp\bigg [\!\! -\frac{1}{8}\bm{c}_n^\top\bm{E}^\top\bm{D}^\top
    % \Big[\bm{D} (\bm{E}\bm{E}^\top + \bm{\Sigma})\bm{D}^\top\Big]^{-1}\!\!
    \widehat{\bm{\Sigma}}^{-1}
    \bm{D}\bm{E}\bm{c}_n
  \bigg].
  \label{eq:bhattacharyya-our}
\end{align}
So, in~\eqref{eq:problem-linearcase}, rather than bounding
$\mathbb{E}\big[\epsilon_{c_0}(\bm{X},\, \widehat{\bm{X}})\big] \leq C$, we
impose instead that the right-hand side of~\eqref{eq:bhattacharyya-our} is
upper bounded by $C$, which is equivalent to 
\begin{align}
  \bm{c}_n^\top\bm{E}^\top\bm{D}^\top 
  % \Big[\bm{D} (\bm{E}\bm{E}^\top + \bm{\Sigma})\bm{D}^\top\Big]^{-1}
    \widehat{\bm{\Sigma}}^{-1}
  \bm{D}\bm{E}\bm{c}_n \geq -8
  \log\frac{C}{\sqrt{p_0p_1}}\,.
  \label{eq:classification-constraint-final}
\end{align}
This defines a nonconvex set over $\bm{E}$, $\bm{D}$, and
$\bm{\Sigma}$ (via $\widehat{\bm{\Sigma}}$).

\mypar{Bound on RDPC}
Instead of solving~\eqref{eq:problem-linearcase}, we will
aim to solve a problem that upper bounds its optimal value:
\begin{align}
  R(D, P, C)
  \leq
  \begin{array}[t]{cl}
    \underset{\bm{E},\,\bm{\Sigma},\, \bm{D}}
    {\min} 
    & 
    \sum_{i=1}^{m}\log \big(1+\frac{1}{\bm{\Sigma}_{ii}}\big)
    \\ 
    \text{s.t.} 
    & 
    \eqref{eq:finalexpressiondistortion} \leq D
    \\[0.1cm]
    & 
    \eqref{eq:boundWasserstein} \leq P
    \\[0.1cm]
    &
    \eqref{eq:classification-constraint-final}\,,
  \end{array}
  \label{eq:boundRDPC}
\end{align}
where~\eqref{eq:finalexpressiondistortion} and~\eqref{eq:boundWasserstein}
refer to the right-hand side of the respective equations. While the first
constraint is exact, the second and third constraints are more stringent
versions of the original constraints in~\eqref{eq:problem-linearcase}.
The resulting problem, however, is still nonconvex and will require
approximation techniques.

\subsection{RDPCO: Heuristic algorithm for RDPC optimization}
\label{subsec:rdpco-algorithm}

Solving~\eqref{eq:boundRDPC} is difficult, as it is nonconvex and has an
infinite number of solutions. Indeed, $\bm{E}$ and $\bm{D}$
appear in the constraints of~\eqref{eq:boundRDPC} always as the product
$\bm{D}\bm{E}$. Thus, if $(\bm{E}^\star, \bm{\Sigma}^\star, \bm{D}^\star)$ is a
solution of~\eqref{eq:boundRDPC} so is $(\bm{E}^\star\bm{M}, \bm{\Sigma}^\star,
\bm{D}^\star \bm{M}^{-1})$ for any invertible matrix $\bm{M}$. This means
there are too many degrees of freedom. 
We will leverage this to first design the output covariance matrix
$\widehat{\bm{\Sigma}}$, and then alternatively find the encoder-decoder pair
$(\bm{E}, \bm{D})$, via intuitive principles, and the rate matrix
$\bm{\Sigma}$, via a barrier-type method applied to~\eqref{eq:boundRDPC}. 
 
\mypar{Design of $\widehat{\bm{\Sigma}}$}
While the original signals in~\eqref{eq:GMMModel} have non-degenerate
distributions, the decoded signals in~\eqref{eq:outputsignals} have degenerate
distributions. Specifically, assuming that $\bm{E} \in \mathbb{R}^{m \times n}$ and
$\bm{D} \in \mathbb{R}^{n \times m}$ have full rank and that $\text{range}(\bm{E})
\cap \text{null}(\bm{D}) = \emptyset$, the output signals
in~\eqref{eq:outputsignals} live in an $m$-dimensional subspace. If the fixed
vector $\bm{c}_n$, which represents the distance between $\bm{X}\vert H_0$ and
$\bm{X}\vert H_1$, is orthogonal to that subspace (equivalently
$\bm{D}\bm{E}\bm{c}_n = \bm{0}_n$), then $\widehat{\bm{X}}\vert H_0$ and
$\widehat{\bm{X}}\vert H_1$ become indistinguishable. In this case,
classification is impossible and perception is also undermined
[note that the second term in~\eqref{eq:boundWasserstein} requires
$\|\bm{D}\bm{E}\bm{c}_n - \bm{c}_n\|_2$ to be small].

To avoid this, we first generate the (degenerate)
covariance matrix $\widehat{\bm{\Sigma}} := \bm{D}(\bm{E}\bm{E}^\top +
\bm{\Sigma})\bm{D}^\top$ by guaranteeing that the distance between $\bm{X}\vert
H_0$ and $\bm{X}\vert H_1$ is preserved after transmitting these signals
through the channel. We achieve this by guaranteeing that $\bm{c}_n$ is an
eigenvector of $\widehat{\bm{\Sigma}}$ associated to eigenvalue $1$, while the
remaining eigenvectors are associated to eigenvalues of smaller magnitude.
Specifically, we set $\widehat{\bm{\Sigma}} = \bm{Q}\bm{\Lambda}\bm{Q}^\top$,
where the first column of $\bm{Q}$ is $\bm{c}_n$ and the remaining ones
are the output of Gram-Schmidt orthogonalization. Also, $\bm{\Lambda} =
\text{Diag}(1, \lambda_2, \ldots, \lambda_{m}, 0, \ldots, 0)$, with $\lambda_i$
being drawn uniformly at random from $[0,1]$, for $i=2, \ldots, m$.

Once $\widehat{\bm{\Sigma}}$ is fixed, we alternate between computing the 
encoder-decoder pair $(\bm{E}, \bm{D})$ and the rate matrix $\bm{\Sigma}$.

\mypar{Finding $(\bm{E}, \bm{D})$}
With $\widehat{\bm{\Sigma}}$ fixed and assuming that, at iteration $k$,
$\bm{\Sigma} = \bm{\Sigma_{k-1}}$ is also fixed, we seek a factorization
$\widehat{\bm{\Sigma}} = \bm{D}\bm{E}\bm{E}^\top\bm{D}^\top +
\bm{D}\bm{\Sigma_{k-1}}\bm{D^\top}$. We do so via an intuitive process that
leads to a unique factorization. Specifically, we design $\bm{E}$ and
$\bm{D}$ such that $\bm{D}\bm{E}\bm{E}^\top\bm{D}^\top $ is as close to the
identity matrix as possible (to preserve signals passing through the channel),
while $\bm{D}\bm{\Sigma_{k-1}}\bm{D}^\top$ is as small as possible (to mitigate
the effects of noise). Also, we ensure the principal direction
$\bm{c}_n$ is preserved: $\bm{D}\bm{E}\bm{c}_n \simeq \bm{c_n}$. These requirements,
weighted equally, can be cast as an optimization problem:
\begin{align}
  \label{eq:finding-ed-1}
  \begin{array}[t]{cl}
    \underset{\bm{E},\bm{D}}{\min} 
    & 
    \frac{1}{2}\Big\|\bm{I}_n-\bm{D}\bm{E}\bm{E}^\top\bm{D}^\top\Big\|_F^2
    +
    \frac{1}{2}\Big\|\bm{D}\bm{\Sigma_{k-1}}\bm{D}^\top\Big\|^2_F
    \\
    &\qquad\qquad\qquad\qquad\qquad
    +
    \frac{1}{2}\big\|\bm{c_n}-\bm{D}\bm{E}\bm{c_n}\big\|^2_2
    \\[0.2cm]
    \text{s.t.} 
    & 
    \widehat{\bm{\Sigma}} = \bm{D}\bm{E}\bm{E}^\top\bm{D}^\top\ +
    \bm{D}\bm{\Sigma_{k-1}}\bm{D}^\top\,,
  \end{array}
\end{align}
which, eliminating the constraint, can be written as
\begin{multline}
  \label{eq:finding-ed-2}
    \underset{\bm{E},\bm{D}}{\min}
    \,\,\,
    \frac{1}{2}\Big\|\bm{I}_n-\widehat{\bm{\Sigma}} + \bm{D}\bm{\Sigma_{k-1}}\bm{D}^\top\Big\|_F^2
    +
    \frac{1}{2}\Big\|\bm{D}\bm{\Sigma_{k-1}}\bm{D}^\top\Big\|^2_F
    \\
    +
    \frac{1}{2}\big\|\bm{c_n}-\bm{D}\bm{E}\bm{c_n}\big\|^2_2\,.
\end{multline}
We apply gradient descent to~\eqref{eq:finding-ed-2}
in order to find $(\bm{E_k}, \bm{D_k})$.
It can be shown that the partial derivatives of the objective $g(\bm{E},
\bm{D})$ of~\eqref{eq:finding-ed-2}
are
\begin{subequations}
    \label{eq:rdpco-derivatives}
    \begin{align}
      \dpd{g(\bm{E}, \bm{D})}{\bm{E}}
  &=
  \bm{D}^\top (\bm{D}\bm{E}\bm{c_n}-\bm{c_n})\bm{c_n}^\top
  \label{eq:rdpco-derivativeE}
  \\
  \dpd{g(\bm{E}, \bm{D})}{\bm{D}}
  &=
  4\bm{D}\bm{\Sigma_k}\bm{D}^\top \bm{D}\bm{\Sigma_{k-1}}
  +
  2(\bm{I_n} - \widehat{\bm{\Sigma}})\bm{D}\bm{\Sigma_{k-1}}
  \notag
  \\
  &
  \qquad\qquad\qquad\,\,+
  (\bm{D}\bm{E}\bm{c_n} - \bm{c_n})\bm{c_n}^\top \bm{E}^\top\,.
  \label{eq:rdpco-derivativeD}
    \end{align}
\end{subequations}

% Single column equations: should be defined on page prior where we want them
% to appear
\begin{figure*}[t]
  \normalsize
  % Store the current equation number.
  \setcounter{MYtempeqncnt}{\value{equation}}
  % Set the equation number to one less than the one
  % desired for the first equation here.
  % The value here will have to changed if equations
  % are added or removed prior to the place these
  % equations are referenced in the main text.
  \setcounter{equation}{23}  % Number of equation
  \begin{subequations}
    \label{subeq:derivativeofrate}
    \begin{align}
      % \dpd{h_r(\bm\sigma)}{\sigma_i}
      % &=
      % -\frac{1}{\sigma_i(\sigma_i+1)}
      % \label{eq:derivativeofrate-r}
      % \\
      % \dpd{h_d(\bm{\Sigma})}{\bm{\Sigma}}
      % &=
      % \frac{-\text{Diag}\big(\bm{D}^{\top}\bm{D}\big)}
      % {D_k - \text{tr}\big(\bm{D_k}\bm{\Sigma}\bm{D_k}^\top\big)}
      % \label{eq:derivativeofrate-d}
      % \\
      % \dpd{h_p(\bm{\Sigma})}{\bm{\Sigma}}
      \nabla_{\bm{\sigma}} h_p(\bm{\Sigma})
        &=
        % \frac{
        \bigg[
          \text{diag}
          \bigg(
            \bm{D}^\top
            \Big(
              \bm{I_m}
              -2
              \big(\bm{D}\bm{E}\bm{E}^\top \bm{D}^\top +
              \bm{D}\bm{\Sigma}\bm{D}^\top\big)^{-1}
            \Big)
            \bm{D}
          \bigg)
        \bigg]
        % }
        % {
        \bigg/
        \Big(
          P_k - \big\|\bm{\widehat{\Sigma}}^{\frac{1}{2}} - \bm{I_n}\big\|_F^2
        \Big)
        % }
        \label{eq:derivativeofrate-p}
        \\
        % \dpd{h_c(\bm{\Sigma})}{\bm{\Sigma}}
        \nabla_{\bm{\sigma}}
        h_c(\bm{\Sigma})
        &=
        \frac{
          \text{diag}
          \bigg(
            \bm{D}^\top
            \big(
              \bm{D}\bm{E}\bm{E}^\top \bm{D}^\top + \bm{D}\bm{\Sigma}\bm{D}^\top
            \big)^{-1}
            \bm{D}\bm{E}\bm{c}\bm{c}^\top \bm{E}^\top \bm{D}^\top
            \big(
              \bm{D}\bm{E}\bm{E}^\top \bm{D}^\top + \bm{D}\bm{\Sigma}\bm{D}^\top
            \big)^{-1}
            \bm{D}
        \bigg)}
        {\bm{c_n}^\top \bm{E_k}^\top \bm{D_k}^\top 
          \bm{\widehat{\Sigma}}^{-1}
          \bm{D_k}\bm{E_k}\bm{c_n}
        + 8 \log\frac{C}{\sqrt{p_0 p_1}}}
        \label{eq:derivativeofrate-c}
      \end{align}
    \end{subequations}
    % Restore the current equation number.
    \setcounter{equation}{\value{MYtempeqncnt}}
    % IEEE uses as a separator
    \hrulefill
    % The spacer can be tweaked to stop underfull vboxes.
    \vspace*{4pt}
\end{figure*}

\mypar{Finding $\bm{\Sigma}$}
Once the encoder-decoder pair is fixed at $(\bm{E_k} , \bm{D_k})$, we find the
diagonal rate matrix $\bm{\Sigma} := \text{Diag}(\bm{\sigma}) :=
\text{Diag}(\sigma_1, \ldots, \sigma_m)$ by applying a barrier
method~\cite{Boyd04-ConvexOptimization} to~\eqref{eq:boundRDPC}, i.e., we
solve a sequence of problems in $t$, each of which is
\begin{align}
  \label{eq:prob-find-sigma}
  \underset{\bm{\Sigma} = \text{Diag}(\bm{\sigma})}{\min}\,\,\,
  t\, h_r(\bm{\sigma})
  - 
  \lambda_D
  h_D(\bm{\Sigma})
  -
  \lambda_P
  h_P(\bm{\Sigma})
  -
  \lambda_C
  h_C(\bm{\Sigma})\,,
\end{align}
where $\lambda_D, \lambda_P, \lambda_C \geq 0$ are regularization parameters,
and
\begin{subequations}
  \label{subeq:functionsfindingrate}
  \begin{align}
    h_r(\bm{\sigma})
    &=
    \sum_{i=1}^{m}
    \log\Big(1 + \frac{1}{\sigma_i}\Big)
    \label{eq:functionfindingrate-r}
    \\
    h_d(\bm{\Sigma})
    &=
    \log
    \Big[
      D_k - \text{tr}\big(\bm{D_k}\bm{\Sigma}\bm{D_k}^\top\big)
    \Big]
    \label{eq:functionfindingrate-d}
    \\
    h_p(\bm{\Sigma})
    &=
    \log
    \Big[
      P_k - \big\|\bm{\widehat{\Sigma}_k}^{\frac{1}{2}} - \bm{I_n}\big\|_F^2
    \Big]
    \label{eq:functionfindingrate-p}
    \\
    h_c(\bm{\Sigma})
    &=
    \log
    \Big[
      \bm{c_n}^\top \bm{E_k}^\top \bm{D_k}^\top 
      \bm{\widehat{\Sigma}_k}^{-1}
      \bm{D_k}\bm{E_k}\bm{c_n}
      + 8 \log\frac{C}{\sqrt{p_0 p_1}}
    \Big]\,.
    \label{eq:functionfindingrate-c}
  \end{align}
\end{subequations}
% Expanding $\widehat{\bm{\Sigma}}_{\bm{k}}$, defining $\bm{\Gamma_k} :=
% \bm{D_k}\bm{E_k}\bm{E_k}^\top \bm{D_k}^\top - \bm{I_n}$, and applying the
% log-barrier function, we obtain~\eqref{eq:functionfindingrate-p}. 
where $\widehat{\bm{\Sigma}}_{\bm{k}} = \bm{D_k}\bm{E_k}\bm{E_k}^\top\bm{D_k}^\top +
\bm{D_k}\bm{\Sigma} \bm{D_k}^\top$.
In~\eqref{eq:functionfindingrate-d}, $D_k$ absorbs all the terms independent
from $\bm{\Sigma}$ when we set $\mathbb{E}[\big\|\widehat{\bm{X}} -
\bm{X}\big\|_2^2] \leq D$ in~\eqref{eq:finalexpressiondistortion} [including
$D$]. 
To obtain~\eqref{eq:functionfindingrate-p}, note that
imposing the right-hand side of~\eqref{eq:boundWasserstein} to be smaller than
$P$ is equivalent to $\big\|\widehat{\bm{\Sigma}}_{\bm{k}}^{\frac{1}{2}} -
\bm{I_n}\big\|_F^2 \leq \big(P - \|\bm{D_k}\bm{E_k}\bm{c_n} - \bm{c_n}\|_2\cdot
p_1\big)^2 =: P_k$. 
And $h_p(\bm{\Sigma})$ depends on $\bm{\Sigma}$ via
$\widehat{\bm{\Sigma}}_{\bm{k}}$.
Finally,
\eqref{eq:functionfindingrate-c} is the direct application of the log-barrier
function to~\eqref{eq:classification-constraint-final}. 

To solve each instance of~\eqref{eq:prob-find-sigma}, we apply again gradient
descent. While the gradients of $h_r$ in~\eqref{eq:functionfindingrate-r} and
$h_d$ in~\eqref{eq:functionfindingrate-d} can be computed directly, namely 
$\dif h_r(\bm\sigma)/ \dif \sigma_i = -1/(\sigma_i^2 + \sigma_i)$, for $i =
1, \ldots, m$, and $\nabla_{\bm{\sigma}} h_d(\bm{\Sigma}) =
-\text{diag}(\bm{D_k}^\top \bm{D_k})/[D_k - \text{tr}(\bm{D_k} \bm{\Sigma},
\bm{D_k}^\top)]$, where $\text{diag}(\cdot)$ extracts the diagonal entries of a
matrix into a vector, computing the gradients of $h_p$
in~\eqref{eq:functionfindingrate-p} and $h_c$
in~\eqref{eq:functionfindingrate-c} is more laborious. Their expressions are
shown in~\eqref{subeq:derivativeofrate} where, for simplicity, we omitted the
iteration index.

\begin{algorithm}
\caption{RDPCO algorithm}
\label{alg:rdpco}
\begin{algorithmic}[1]
  \small
  \algrenewcommand\algorithmicrequire{\textbf{Input:}}
  \Require  
  mean $\bm{c_n} \in\mathbb{R}^n$; probabilities $p_0, p_1 = 1
  - p_0$; bounds on distortion
  $D$, perception $P$, and classification $C$; initial barrier parameter
  $t_0$ and update parameter $\mu$; max. \# of iterations $K$; 
  % learning rates for encoder-decoder $\alpha_{ED}$ and rate $\alpha_r$ optimization;
  stopping criteria parameter $\epsilon$; parameters $\lambda_D,
  \lambda_P, \lambda_C$.

  \algrenewcommand\algorithmicrequire{\textbf{Initialization:}}
  \Require  $\bm{\Sigma_0} = \bm{I}_m$

  \Statex 
  \vspace{-0.2cm}

  \algrenewcommand\algorithmicrequire{\textbf{Generate $\widehat{\bm{\Sigma}}$}}

  \Require 
    \State Set $\widetilde{\bm{Q}} = \begin{bmatrix} \bm{c_n} &\!\!\! \bm{R}
    \end{bmatrix}$, where $\bm{R} \in \mathbb{R}^{n \times n-1}$ has i.i.d.\
    $\mathcal{N}(0,1)$ entries
    \label{subalg:rdpco-GS1}
    % standard Gaussian entries

    \State Apply Gram-Schmidt orthogonalization to $\widetilde{\bm{Q}}$ to
    obtain $\bm{Q}$

    \State Generate $\lambda_i \in [0, 1]$, $i=2, \ldots, m$ randomly and build $\bm{\Lambda} =
    \text{Diag}(1, \lambda_2, \ldots, \lambda_{m}, 0, \ldots, 0) \in
    \mathbb{R}^{n\times n}$

    \State Set $\widehat{\bm{\Sigma}} = \bm{Q}\bm{\Lambda} \bm{Q}^\top$
    \label{subalg:rdpco-GS4}

  \Statex
  \vspace{-0.2cm}

  \algrenewcommand\algorithmicrequire{\textbf{Find $\bm{E}, \bm{D}, \bm{\Sigma}$}}

  \Require

  \For{$k = 1, \ldots, K$}
  \label{subalg:rdpco-outerloop}
  \State Find $(\bm{E_k}, \bm{D_k})$ via gradient descent applied to~\eqref{eq:finding-ed-2}
  [cf.\ \eqref{eq:rdpco-derivatives}]\label{subalg:rdpco-ED}

  \State Set $t = t_0$

  \For{$r = 1, \ldots, \lceil m/100 \rceil$}
    \label{subalg:rdpco-loopbarrier-beg}
    \State Find $\bm{\Sigma_r}$ via gradient descent applied to~\eqref{eq:prob-find-sigma} 
    \label{subalg:rdpco-sigma}

    \State $t \leftarrow \mu t$
    \label{subalg:rdpco-t}
  \EndFor
  \label{subalg:rdpco-loopbarrier-end}
  \State Set $\bm{\Sigma_k} = \bm{\Sigma_r}$

  \If{$\|(\bm{E_k}, \bm{D_k}, \bm{\Sigma_k}) - (\bm{E_{k-1}}, \bm{D_{k-1}},
  \bm{\Sigma_{k-1}})\|_F \leq \epsilon$}
  \label{subalg:rdpco-stoppingglobal}
  \State Stop
  \EndIf
  \EndFor
  \label{subalg:rdpco-end}
\end{algorithmic}
\end{algorithm}

\mypar{RDPCO algorithm}
We summarize all the above steps in Algorithm~\ref{alg:rdpco}, which we name
RDPCO for RDPC Optimization. 
Steps~\ref{subalg:rdpco-GS1}-\ref{subalg:rdpco-GS4} describe the procedure to
generate the covariance matrix $\widehat{\bm{\Sigma}} =
\bm{D}(\bm{E}\bm{E}^\top + \bm{\Sigma})\bm{D}^\top$, whose factors are then
computed in steps~\ref{subalg:rdpco-outerloop}-\ref{subalg:rdpco-end}. The
barrier method in
steps~\ref{subalg:rdpco-loopbarrier-beg}-\ref{subalg:rdpco-loopbarrier-end}
stops whenever the duality gap is below $0.01$ or the number of iterations
reaches $m/100$, both parameters determined experimentally. The remaining
parameters that we used in our experiments are reported in
Section~\ref{subsec:experiments-rdpco}.

\section{Solving RDPC with Inverse-Domain GAN}
\label{sec:idgan}

RDPCO attempts to solve the RDPC problem~\eqref{eq:problem-general} under 
restrictive assumptions [see Assumption~\ref{ass:linearprob}]. In this section, leveraging the
modeling power of neural networks, in particular
generative adversarial networks (GANs)~\cite{goodfellow2014generative}, we
propose an algorithm that works under more general assumptions.
In our channel
diagram~\eqref{eq:channeldiagram}, we will thus model the encoder
$p_{\bm{Y}\vert\bm{X}}$ with a neural network $e(\cdot\,;\bm{\theta}_e)\, :\,
\mathbb{R}^n \to \mathbb{R}^m$ parameterized by $\bm{\theta}_e$, and the
decoder $p_{\widehat{\bm{X}}\vert\widehat{\bm{Y}}}$ as neural network
$d(\cdot\,;\bm{\theta}_d)\, :\, \mathbb{R}^m \to \mathbb{R}^n$ parameterized by
$\bm{\theta}_d$. These networks will be trained as in ID-GAN~\cite{IndomainGAN}
which, however, was proposed for a task different from JSCC/JSCM. Specifically,
given an (adversarially-trained) image generator, the goal
in~\cite{IndomainGAN} was to train an encoder to obtain a
semantically-meaningful latent code for image editing. We adopt this process of
training the generator first, and then the encoder. 

\subsection{Proposed scheme}

Fig.~\ref{fig:ID-GAN} shows our framework based on ID-GAN. As
in~\cite{IndomainGAN}, we first train an image generator/decoder $d(\cdot\,;\,
\bm{\theta}_d)$ (Fig.~\ref{fig:ID-GAN}, top) adversarially against
discriminator $f_1$, which learns to distinguish a real signal from a randomly
generated one, $d(\bm{Z}\,;\, \bm{\theta}_d)$, where $\bm{Z} \sim
\mathcal{N}(\bm{0}_m,\,\bm{I}_m)$ is a vector of i.i.d.\ standard
Gaussians. This is the conventional GAN
setup~\cite{goodfellow2014generative,WassersteinGAN}. As the discriminator is a
particular case of a classifier, outputting just a binary signal, it is also
known as a \textit{critic}. Once the decoder is trained, we fix it and train
the encoder $e(\cdot\,;\,\bm{\theta}_e)$ together with its own critic $f_2$,
which again learns to distinguish real signals from randomly generated from
ones (Fig.~\ref{fig:ID-GAN}, bottom).
Comparing~\eqref{eq:channeldiagram} and Fig.~\ref{fig:ID-GAN} (bottom),
we see that $p_{\bm{Y}\vert\bm{X}}$ is implemented by $e(\cdot\, ;\,
\bm{\theta_e})$, $p_{\widehat{\bm{X}}\vert\widehat{\bm{Y}}}$ is implemented by $d(\cdot\, ;\,
\bm{\theta_d^\star})$, and the Gaussian channel noise
$p_{\widehat{\bm{Y}}\vert\bm{Y}}(\widehat{\bm{y}}\vert \bm{y}) =
\mathcal{N}(\bm{y}, \bm{\Sigma})$ has (diagonal) covariance matrix $\bm{\Sigma}=
\sigma_t^2 \bm{I}_m$, where $\sigma_t$ is a parameter we learn (or fix) during
training. If we normalize the output $\bm{Y}$ of the encoder to have unit
power, then the signal-to-noise ratio (SNR) is determined by $\sigma_t$ as
$\text{SNR}_t = -10 \log_{10}\, \sigma_t^2$, and the channel rate is 
\begin{align}
  \label{eq:rate}
  R = m\log_{2}\Big(1 + \frac{1}{\sigma_t^2}\Big)\,.
\end{align}
Next, we explain the training processes of
  the decoder and encoder in more detail.

\begin{figure}[t]
  \centering			
  \psscalebox{1.0}{	
  \begin{pspicture}(0,-0.3)(8.8,5.7)

    % ============================================================================================
    % Define blocks and connections

    \def\scalebl{2.0}           % Scale
    \def\colorlightness{60}     % Factor fill color

    % Decoder block
    \newcommand{\blockd}[1]{
      \psscalebox{\scalebl}{
        \pspolygon*[linearc=0.1,linecolor=#1!\colorlightness!white](-0.35,-0.2)(-0.35,0.2)(0.35,0.35)(0.35,-0.35)
        \pspolygon[linearc=0.1,linewidth=0.2pt,linecolor=#1](-0.35,-0.2)(-0.35,0.2)(0.35,0.35)(0.35,-0.35)
      }
    }
    % Encoder block
    \newcommand{\blocke}[1]{
      \psscalebox{\scalebl}{
        \pspolygon*[linearc=0.1,linecolor=#1!\colorlightness!white](-0.35,-0.35)(-0.35,0.35)(0.35,0.25)(0.35,-0.25)
        \pspolygon[linearc=0.1,linewidth=0.2pt,linecolor=#1](-0.35,-0.35)(-0.35,0.35)(0.35,0.25)(0.35,-0.25)
      }
    }

    % Connections
    \psset{arrows=->,linewidth=0.8pt,arrowinset=0,arrowsize=4.0pt,
    style=RoundCorners,gratioWh=1.25,strokeopacity=0.8,linecolor=black!55,fillstyle=none,nodesep=0.58cm}
    % ============================================================================================

    % ============================================================================================
    % Training of decoder

    \psframe*[linecolor=darkblue!10,framearc=0](-0.1,3.6)(8.8,5.7)
    \rput[lt](0,5.6){\textbf{\textcolor{darkblue!55}{\textsc{Training of decoder}}}}

    % -----------
    % Blocks

    \rput(2.0,4.4){\rnode{gaussian}{$\scriptstyle\bm{Z}\sim \mathcal{N}(\bm{0_m},\, \bm{I_m})$}}

    \rput(4.5,4.4){\rnode{decoder}{\blockd{darkblue}}}
    \rput(4.5,4.4){\textcolor{white}{$d(\cdot\, ;\, \bm{\theta_d})$}}
    \rput[b](4.5,5.1){\scriptsize \textsf{\textcolor{darkblue}{decoder}}}

    \rput(6.3,4.4){\rnode{tdecoderout}{\includegraphics[scale=0.55]{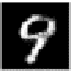}}}
    \rput(6.3,5.1){\rnode{gtimage}{\includegraphics[scale=0.55]{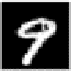}}}

    \rput(7.8,4.8){\rnode{f1}{\blocke{darkgreen}}}
    \rput(7.8,4.8){\textcolor{white}{$f_1(\!\cdot\, ;\, \bm{\theta_{f_1}}\!)$}}
    \rput[t](7.8,4.1){\scriptsize\textsf{\textcolor{darkgreen}{critic 1}}}

    % Invisible node
    \pnode(7.1,4.4){input1f1}
    \pnode(7.1,5.1){input2f1}
    \pnode(8.8,4.8){outputf1}
    % -----------

    % -----------
    % Connections
    \ncline[nodesepA=0cm]{gaussian}{decoder}
    \ncline[nodesepB=0cm]{decoder}{tdecoderout}
    \ncline[nodesep=0cm]{gtimage}{input2f1}
    \ncline[nodesep=0cm]{tdecoderout}{input1f1}
    \ncline[nodesepB=0cm]{f1}{outputf1}

    % ============================================================================================

    % ============================================================================================
    % Training of encoder

    \psframe*[linecolor=red!10,framearc=0](-0.1,-0.3)(8.8,3.4)
    \rput[lt](0,3.3){\textbf{\textcolor{red!55}{\textsc{Training of encoder}}}}

    % -----------
    % Blocks
    \rput[l](0.0,1.5){\rnode{inputencoder}{\includegraphics[scale=0.55]{figs/nine_gt.eps}}}

    \rput(2.0,1.5){\rnode{encoder}{\blocke{red}}}
    \rput(2.0,1.5){\textcolor{white}{$e(\cdot\, ;\, \bm{\theta_{e}})$}}
    \rput[t](2.0,0.8){\scriptsize \textsf{\textcolor{red}{encoder}}}

    % Noise
    \pscircleop[scale=0.75,linecolor=black!70,opsep=0.08](3.25,1.5){sum}
    \rput[l](2.87,0.1){\rnode{tnoise}{$\scriptstyle\bm{N}\sim\mathcal{N}$}$\scriptstyle(\bm{0_m},\, \sigma_t^2\bm{I}_m)$}

    \rput(4.5,1.5){\rnode{decoderfixed}{\blockd{darkblue}}}
    \rput(4.5,1.5){\textcolor{white}{$d(\cdot\, ;\, \bm{\theta_d^\star})$}}
    \rput[t](4.5,0.8){\scriptsize \textsf{\textcolor{darkblue}{decoder}}}

    \rput(6.3,1.5){\rnode{tencoderout}{\includegraphics[scale=0.55]{figs/nine_corr.eps}}}

    \rput(7.8,2.3){\rnode{f2}{\blocke{darkgreen}}}
    \rput(7.8,2.3){\textcolor{white}{$f_2(\!\cdot\, ;\, \bm{\theta_{f_2}}\!)$}}
    \rput[b](7.8,3.05){\scriptsize\textsf{\textcolor{darkgreen}{critic 2}}}

    \rput(7.8,0.7){\rnode{classifier}{\blocke{econ-ter-singapore55!85!black}}}
    \rput(7.8,0.7){\textcolor{white}{$c(\cdot)$}}
    \rput[t](7.8,-0.05){\scriptsize\textsf{\textcolor{econ-ter-singapore55!85!black}{classifier}}}

    % Invisible nodes
    \pnode(8.8,2.3){outputf2}
    \pnode(8.8,0.7){outputclassifier}
    % -----------

    % -----------
    % Connections
    \ncline[nodesepA=0cm]{inputencoder}{encoder}
    \ncline[nodesepB=0cm]{encoder}{sum}
    \ncline[nodesepA=0cm]{sum}{decoderfixed}
    \ncline[nodesepB=0cm,nodesepA=0cm]{tnoise}{sum}
    \ncline[nodesepB=0cm]{decoderfixed}{tencoderout}

    \ncangle[angleA=90,angleB=180,nodesepA=0cm,offsetB=-0.3cm]{tencoderout}{f2}
    \ncangle[angleA=270,angleB=180,nodesepA=0cm]{tencoderout}{classifier}
    \ncangle[angleA=90,angleB=180,nodesepA=0cm,offsetB=0.3cm]{inputencoder}{f2}

    \ncline[nodesepB=0cm]{f2}{outputf2}
    \ncline[nodesepB=0cm]{classifier}{outputclassifier}

    % Between the two decoders
    \ncline[linecolor=darkblue,linestyle=dotted,dotsep=0.5pt]{decoder}{decoderfixed}
    \Aput[0.1]{\small\textsf{\textit{\textcolor{darkblue}{fixed}}}}

    % -----------

    % ============================================================================================

    % \psset{linewidth=1.0pt}
    
    % \psgrid
\end{pspicture}
}		
\vspace{-0.5cm}
\caption{
  Proposed ID-GAN framework for solving the RDPC
  problem~\eqref{eq:problem-general}. The decoder is first trained
  adversarially with critic 1 in the first step (top). The decoder is then fixed and
  coupled with an encoder, which is in turn trained with critic 2 in order to
  preserve both reconstruction quality and classification accuracy (bottom). Critics 1
  and 2 have the same architecture.
}
\label{fig:ID-GAN}
\end{figure}

\subsection{Training the decoder}
\label{subsec:IDGAN-trainingdecoder}

To train the decoder $d(\cdot\, ;\, \bm{\theta}_d)$ and critic $f_1$
as in Fig.~\ref{fig:ID-GAN} (top), we use the Wasserstein GAN
(WGAN)~\cite{WassersteinGAN} framework. This consists of finding the parameters
$\bm{\theta}_d$ of the decoder that minimize the
Wasserstein-1 (or earth-mover) distance $W_1(p_r,
p_{\bm{\theta}_d})$ between the distribution $p_r$ of real
data and the distribution $p_{\bm{\theta}_d}$ of data generated by
$d(\bm{Z}\, ;\, \bm{\theta}_d)$, with $\bm{Z} \sim
\mathcal{N}(\bm{0}_m,\,\bm{I}_m)$. We consider the Kantarovich-Rubinstein dual form
of the Wasserstein-1 distance:
\begin{align}
  W_1(p_r,p_{\bm{\theta}_d})
  =
  \underset{\|f_1\|_L \leq 1}{\sup}
  \mathbb{E}_{\bm{X} \sim p_r}[f_1(\bm{X})]
  -
  \mathbb{E}_{\bm{X} \sim p_{\bm{\theta}_d}}[f_1(\bm{X})]\,,
  \label{eq:WassersteinDistance}
\end{align}
where the supremum is over the functions $f_1\,:\, \mathbb{R}^n \to \mathbb{R}$
that are $1$-Lipschitz continuous.  While the critic $f_1$
is found by maximizing the argument in~\eqref{eq:WassersteinDistance}, the
parameters of the decoder are found by minimizing the full Wasserstein distance
$W_1(p_r,p_{\bm{\theta}_d})$. This distance
enables overcoming the mode collapse observed in the original
GAN framework~\cite{goodfellow2014generative}, which used instead the Jensen-Shannon
divergence. To enforce Lipschitz-continuity of the critic $f_1$,
\cite{WassersteinGAN} proposed to limit its parameters to a small box around
the origin. The work in~\cite{WGAN-GP}, however, found that this technique
leads to instabilities in training (exploding/vanishing gradients) and
showed that a gradient penalty solves these problems. We thus adopt the loss
suggested in~\cite{WGAN-GP} for finding critic $f_1$:
% adopt a WGAN-GP loss~\cite{WGAN-GP}. As in Wasserstein
% GAN~\cite{WassersteinGAN}, the idea is to minimize via adversarial training
% the Wasserstein distance between the distribution of the generated images and
% the distribution of real data. WGAN-GP, however, adds to the original
% Wasserstein loss a gradient penalty loss to prevent vanishing gradients, and
% thus mode collapse. Specifically, the loss associated to the discriminator is
\begin{multline}
  \label{eq:WGAN_LOSS}
    L_{f_1} 
    = 
    \mathbb{E}_{\bm{X}\sim p_{\bm{\theta}_d}}
    [f_1(\bm{X})]
    - 
    \mathbb{E}_{\bm{X}\sim p_{r}}
    [f_1(\bm{X})]
    \\
    + 
    \lambda_g\,
    \mathbb{E}_{\widetilde{\bm{X}}\sim p_{\bm{\theta}_d,r}}
    \Big[
    \big(\|\nabla_{\widetilde{\bm{x}}}f_1({\widetilde{\bm{X}}})\|_2 - 1\big)^2
    \Big]\,,
\end{multline}
where $\lambda_g \geq 0$, and $\widetilde{\bm{X}}$ is a point sampled uniformly
on the line joining a real data point $\bm{X} \sim p_r$ and a point
$\bm{Y} \sim p_{\bm{\theta}_d}$ generated by the decoder.
The third term in~\eqref{eq:WGAN_LOSS} eliminates the need to
constrain the critic to be Lipschitz-continuous [constraint
in~\eqref{eq:WassersteinDistance}]; see~\cite{WGAN-GP} for more details.
In turn, the parameters $\bm{\theta}_d$ of the decoder are found by
minimizing~\eqref{eq:WassersteinDistance} which, when $f_1$ is fixed, is
equivalent to minimizing
\begin{align}
  \label{eq:WGAN_LOSS_G}
    L_{\bm{\theta}_d} 
    = 
    -\mathbb{E}_{\bm{X}\sim p_{\bm{\theta_d}}}
    [
    f_1(\bm{X})
    ]
    =
    -\mathbb{E}_{\bm{Z}\sim \mathcal{N}(\bm{0}_m,\, \bm{I}_m)}
    \big[
    f_1\big(d(\bm{Z}\, ;\, \bm{\theta}_d)\big)
    \big].
\end{align}
During training, there are two nested loops: the outer loop updates
$\bm{\theta_d}$; and the inner loop, which runs for $n_{\text{critic}}$
iterations, updates the parameters of the critic such that 
the supremum in~\eqref{eq:WassersteinDistance} is reasonably well computed.
See~\cite{WassersteinGAN,WGAN-GP} for details.

\begin{algorithm}
  \caption{ID-GAN compression: training of the encoder}
  \label{alg:IDGANTrain}
  \begin{algorithmic}[1]
  \small
  \algrenewcommand\algorithmicrequire{\textbf{Input:}}
  \Require Training images/labels $\{(\bm{x}^{(t)},\bm{\ell}^{(t)})\}_{t=1}^T$, pretrained
  decoder $d(\cdot)$, pretrained classifier $c(\cdot)$, learning rate
  $\alpha$, momentum parameters $\beta_1$, $\beta_2$, batch size $S$, number of iterations of
  critic $n_{\text{critic}}$, and loss hyperparameters $\lambda_g$, $\lambda_d$, $\lambda_p$,
  $\lambda_c$
  \algrenewcommand\algorithmicrequire{\textbf{Initialization:}}
  \Require Set encoder $\bm{\theta}_e^{(1)}$ and critic 
  $\bm{\theta}_{f}^{(1)}$ parameters randomly;
  set channel noise standard deviation $\sigma_t^{(1)} > 0$ randomly 
   
  \Statex
  \algrenewcommand\algorithmicrequire{\textbf{In each epoch:}}
  \Require
  \State $\mathcal{S} = \text{randperm}(\{1,2, \ldots, T\})$
  \label{subalg:permute}
  \For{$j = 1, \ldots, \lceil T/S \rceil$}
    % \Comment{Epoch}
  \State Select next $S$ batch indices $\mathcal{S}_j$ from $\mathcal{S}$
    \For{$k=1$ to $n_{\text{critic}}$}
    \label{subalg:criticfor}
      % \Comment{Update critic $f_2$}
    \State Generate (channel) noise $\bm{Z} \sim \mathcal{N}(\bm{0}_m, \bm{I}_m)$
      \State 
        $L_{f_2}^{(1)} 
          = 
          \frac{1}{S}\sum_{s \in \mathcal{S}_j} 
          f_2\Big(d\big(e(\bm{x}^{(s)}\, ;\, \bm{\theta}_e^{(j)}) +
            \sigma_t^{(j)}\bm{Z}\big)\,;\,
            \bm{\theta}_f^{(k)}\Big)$
      \State
        $
          L_{f_2}^{(2)}
          =
          \frac{1}{S}\sum_{s \in \mathcal{S}_j} 
          f_2\Big(\bm{x}^{(s)}\, ;\, \bm{\theta}_f^{(k)}\Big)
        $
      \For{$s \in \mathcal{S}_j$}
      \label{subalg:xtildefor}
        \State Draw $\epsilon \sim \mathcal{U}(0, 1)$ randomly
        \State 
        $\widetilde{\bm{x}}^{(s)} = (1-\epsilon)\cdot \bm{x}^{(s)} +
        \epsilon\cdot d\big(e\big(\bm{x}^{(s)}\, ;\, \bm{\theta}_e^{(j)}\big)
        + \sigma_t^{(j)}\bm{Z}\big)$
      \EndFor
      \label{subalg:xtildeend}

      \State 
        $
          L_{f_2}^{(3)}
          =
          \frac{1}{S}\sum_{s \in \mathcal{S}_j} 
          \Big(\big\|\nabla_{\widetilde{\bm{x}}} f_2(\widetilde{\bm{x}}^{(s)}) \big\|_2 -
          1\Big)^2
        $

      \State $L_{f_2} = L_{f_2}^{(1)} - L_{f_2}^{(2)} + \lambda_g L_{f_2}^{(3)}$

      \vspace{0.1cm}

      \State 
        $
        \bm{\theta}_{f}^{(k+1)} 
          =
          \text{Adam}\Big(\bm{\theta}_{f}^{(k)},\,\lambda_p L_{f_2},\,\alpha,\,\beta_1,\,\beta_2\Big)$
    \EndFor
    \label{subalg:criticend}
    \State $\bm{\theta}_f^{(j)} = \bm{\theta}_{f}^{(n_{\text{critic}})} $
    % \State Select $S$ batch indices $\mathcal{S}_j \subset \{1, \ldots, T\}$ randomly
    \For{$s \in \mathcal{S}_j$}
    \label{subalg:xhatfor}
      \State 
      $\widehat{\bm{x}}^{(s)} = d\Big(e\big(\bm{x}^{(s)}\, ;\,
      \bm{\theta}_e^{(j)}\big) + \sigma_t^{(j)}\bm{Z}\Big)$, w/
      $\bm{Z}\!\sim\!\mathcal{N}(\bm{0}_m, \bm{I}_m)$
    \EndFor
    \label{subalg:xhatend}

    \State 
    \vspace{-0.5cm}
    \begin{multline*}
      L_e 
      = 
      m\log_2\Big(1 + 1/{\sigma_t^{(j)}}^2\Big)
      +
      \frac{1}{S}\sum_{s \in \mathcal{S}_j}     
      \lambda_d \Big\|\bm{x}^{(s)} - \widehat{\bm{x}}^{(s)}\Big\|_2^2
      \\[-0.2cm]
      +
      \lambda_c \text{CE}\big(c(\widehat{\bm{x}}^{(s)}), \bm{\ell}^{(s)}\big)
      -
      \lambda_p f_2\Big(\widehat{\bm{x}}^{(s)}\,;\, \bm{\theta}_f^{(j)}\Big)
    \end{multline*}

    \State 
    \label{subalg:adamonencoderloss}
    $
    \Big(\bm{\theta}_{e}^{(j+1)}, \sigma_t^{(j + 1)}\Big) 
    =
    \text{Adam}\Big(\big(\bm{\theta}_{e}^{(j)},
    \sigma_t^{(j)}\big),\,L_{e},\,\alpha,\,\beta_1,\,\beta_2\Big)$
  \EndFor
  \end{algorithmic}
\end{algorithm}

\subsection{Training the encoder}
\label{subsec:IDGAN-trainingencoder}

After training the decoder, we fix its parameters to $\bm{\theta}_d^\star$ and
consider the scheme in Fig.~\ref{fig:ID-GAN} (bottom) to train the encoder
$e(\cdot\, ;\, \bm{\theta}_e)$. 
As the decoder, the encoder is also trained adversarially against a critic
$f_2$ to enhance perception quality, but also takes into account the
reconstruction quality and semantic meaning of the reconstruction. The former
is captured by an MSE loss between the original and reconstructed images, and
the latter by a cross-entropy loss between the image label and the output of a
pre-trained classifier $c(\cdot)$ applied to the reconstructed image. 

\mypar{Derivation of the loss for the encoder}
To motivate our loss for the encoder, we start from the RDPC
problem~\eqref{eq:problem-general}, considering $\Delta(\bm{x}, \bm{y}) =
\|\bm{x} - \bm{y}\|_2^2$ as the distortion metric, $d(p_{\bm{X}},
p_{\widehat{\bm{X}}}) = W_1(p_{\bm{X}}, p_{\widehat{\bm{X}}})$ as the
perception metric, and the cross-entropy $\epsilon_{c_0}(\bm{X},
\widehat{\bm{X}}) = \text{CE}(c(\widehat{\bm{X}}), \ell(\bm{X}))$ as the
classification loss, where $\ell(\bm{X})$ denotes the class of $\bm{X}$.
Then, there exist
constants $\lambda_d$, $\lambda_p$, and $\lambda_c$ (related to
$D$, $P$, and $C$),
such that~\eqref{eq:problem-general}  and
\begin{multline}
  \label{eq:connecting-idgan-rdpc}
    \underset{p_{\bm{Y}\vert\bm{X}},\,
    p_{\widehat{\bm{X}}\vert\widehat{\bm{Y}}},\,\bm{\Sigma}}
    {\min} 
    \,\,\,
    \sum_{i=1}^{m}\log \Big(1+\frac{1}{\bm{\Sigma}_{ii}}\Big) 
    +
    \mathbb{E}\Big[
    \lambda_d\big\|\bm{X}- \widehat{\bm{X}}\big\|_2^2
    \\
    +
    \lambda_c
    \text{CE}\big(c(\widehat{\bm{X}}),\, \ell\big(\bm{X}\big)\big)
    \Big] 
    +
    \lambda_p
    W_1\big(p_{\bm{X}},\, p_{\widehat{\bm{X}}}\big) 
\end{multline}
have the same solution. Problem~\eqref{eq:connecting-idgan-rdpc} is
non-parametric, i.e., the functions representing the encoder
$p_{\bm{Y}\vert\bm{X}}$ and the decoder $p_{\widehat{\bm{X}}\vert
\widehat{\bm{Y}}}$ have no structure. As mentioned, we assume they are
implemented by neural networks $e(\cdot\,;\bm{\theta_e})$ and
$d(\cdot\,;\bm{\theta}_d)$, respectively. The encoder, in particular,
normalizes its output signal to unit power. 
According to the channel model~\eqref{eq:channeldiagram}, the output of the
decoder is then $\widehat{\bm{X}} = d(e(\bm{X}) + \bm{N})$, where we omitted dependencies on
$\bm{\theta}_e$ and $\bm{\theta}_d$ for simplicity.
Under these assumptions and further assuming that the different channels have equal
variance [i.e., \eqref{eq:rate}], 
\eqref{eq:connecting-idgan-rdpc} becomes
\begin{multline}
  \label{eq:connecting-idgan-rdpc2}
  \underset{\bm{\theta}_e, \sigma_t}{\text{minimize}}
  \,\,\,
  m\log_2 \Big(1+\frac{1}{\sigma_t^2}\Big) 
  +
  \mathbb{E}\Big[\lambda_d \big\|\bm{X} - d(e(\bm{X}) + \sigma_t\bm{Z})\big\|_2^2
    \\
    + 
    \lambda_c 
    \text{CE}\Big(c\big(d(e(\bm{X}) + \sigma_t\bm{Z})\big),\, \ell\big(\bm{X}\big)\Big)
  \Big]
  +
  \lambda_p
  W_1\big(p_{\bm{X}},\, p_{\widehat{\bm{X}}}\big)\,,
\end{multline}
where, akin to the reparameterization
trick~\cite{Kingma14-AutoEncodingVariationalBayes}, we replaced $\bm{N}$ by
$\sigma_t \bm{Z}$, with $\bm{Z} \sim \mathcal{N}(\bm{0}_m, \bm{I}_m)$. This
makes the dependency of $\bm{N}$ on $\sigma_t$ explicit and enables computing
derivatives with respect to $\sigma_t$. Note that 
the expectation in~\eqref{eq:connecting-idgan-rdpc2} is with respect to $\bm{X}$ and $\bm{Z}$.
Note also that $\bm{\theta}_d$ is not included in the optimization variables
of~\eqref{eq:connecting-idgan-rdpc2}, as the decoder has already been trained.
Adopting the
approximations for $W_1$ described in
Section~\ref{subsec:IDGAN-trainingdecoder}, we find the parameters of the
encoder and channel noise level by solving
\begin{multline}
  \label{eq:loss-encoder}
  % L_e
  % =
  % % L_{f_2}
  \underset{\bm{\theta}_e, \sigma_t}{\text{minimize}}
  \,\,\,
  m\log_2 \Big(1+\frac{1}{\sigma_t^2}\Big) 
  +
  \mathbb{E}\Big[\lambda_d \big\|\bm{X} - d(e(\bm{X}) + \sigma_t\bm{Z})\big\|_2^2
    \\
    + 
    \lambda_c 
    \text{CE}\Big(c\big(d(e(\bm{X}) + \sigma_t\bm{Z})\big),\, \ell\big(\bm{X}\big)\Big)
    \\
  -\lambda_p f_2(d(e(\bm{X})+\sigma_t\bm{Z}))
  \Big]
  \,.
\end{multline}
As remarked in
Section~\ref{subsec:rdpcfunction-probstatement}, we 
design the encoder and channel noise level $\sigma_t$ jointly. In
practice, one would instead estimate the channel noise level and design the
power of the encoder. The advantage of doing as we do is that the first
term of~\eqref{eq:loss-encoder} is
convex in $\sigma_t$.
The parameters $\bm{\theta}_{f_2}$ of critic $f_2$, in turn, are computed 
like in~\eqref{eq:WGAN_LOSS}, by minimizing the loss
\begin{multline}
  \label{eq:loss-encoder-critic}
    L_{f_2} 
    = 
    \lambda_p
    \bigg\{
    \mathbb{E}
    \Big[f_2\big(d(e(\bm{X})+\sigma_t\bm{Z})\big)
    - 
    f_2(\bm{X})
    \Big]
    \\
    + 
    \lambda_g\,
    \mathbb{E}
    \Big[
    \big(\big\|\nabla_{\widetilde{\bm{x}}}f_2\big(\widetilde{\bm{X}}\big)\big\|_2 - 1\big)^2
    \Big]\bigg\}\,,
\end{multline}
where $\widetilde{\bm{X}} = (1-\epsilon) \bm{X} + \epsilon\,
d(e(\bm{X})+\sigma_t\bm{Z})$ and
$\epsilon \sim \mathcal{U}(0,1)$ is uniformly distributed in $[0,1]$. The first
expectation is with respect to $\bm{X}$ and $\bm{Z}$, and the second with
respect to $\bm{X}$ and $\epsilon$.

\mypar{Training algorithm}
The complete training procedure of the encoder is shown in
Algorithm~\ref{alg:IDGANTrain}. Its inputs include
training images $\bm{x}^{(t)}$ and corresponding labels $\bm{\ell}^{(t)}$, and
a pretrained decoder $d(\cdot)$ and classifier $c(\cdot)$. After initializing
the parameters of the encoder, associated critic, and channel noise level, in
each epoch we randomly permute the indices of the training data
(step~\ref{subalg:permute}) and visit all the training data in batches of size
$S$. This takes $\lceil T/S \rceil$ iterations, where $T$ is the number of data
points. The loop in steps~\ref{subalg:criticfor}-\ref{subalg:criticend}
performs $n_{\text{critic}}$ iterations of Adam to minimize the critic loss
in~\eqref{eq:loss-encoder-critic} and thus to update the critic $f_2$
parameters $\bm{\theta}_f$ (where we omit the index $2$ for simplicity). This
corresponds to computing the supremum in the Wasserstein
distance~\eqref{eq:WassersteinDistance} between the real data $\bm{X}$ and the
reconstructed one $\widehat{\bm{X}} = d\big(e\big(\bm{X}\big) +
\bm{N}\big)$. The terms in~\eqref{eq:WGAN_LOSS} are computed
separately, with the last term requiring the creation of the intermediate
variables $\widetilde{\bm{x}}^{(s)}$ in
steps~\ref{subalg:xtildefor}-\ref{subalg:xtildeend}. As usual, expected values
were replaced by sample averages over the batch. After having updated the
parameters of critic $f_2$, we perform one iteration of Adam to minimize the
encoder loss in~\eqref{eq:loss-encoder} and thus to update the encoder
parameters $\bm{\theta}_e$ and channel noise level $\sigma_t$. This requires
passing each image in the batch through the encoder and decoder to create
$\widehat{\bm{x}}^{(t)}$, as in
steps~\ref{subalg:xhatfor}-\ref{subalg:xhatend}.
In step~\ref{subalg:adamonencoderloss}, the parameters of the encoder and
the channel noise level are updated simultaneously. In our
experiments, reported in Section~\ref{sec:experiments}, we run two versions of
Algorithm~\ref{alg:IDGANTrain}: one exactly as described, where the noise level
$\sigma_t$, and thus $\text{SNR}_t =-10\log_{10}\sigma_t^2$, is
learned during training; and another where $\text{SNR}_t$
is fixed to a predefined value.

\begin{figure*}
  \def\wdres{5.9cm}
  \def\hdres{5.25cm}
  \subfigure[]{
    \label{subfig:rdpco-varying-P-C-distortion}
    \begin{pspicture}(\wdres,\hdres)
      \rput[lb](0,0.35){\includegraphics[width=\wdres]{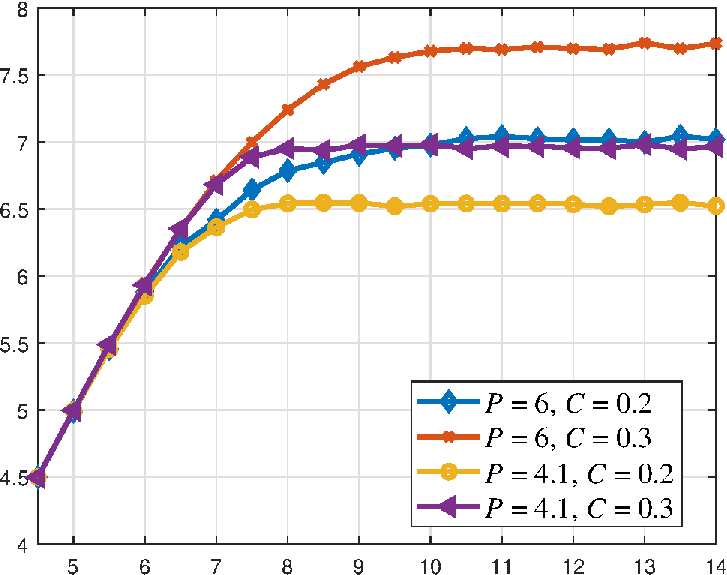}}
      \rput[lt](0,\hdres){\scriptsize\textbf{\sf Distortion}}
      \rput[b](2.9,0){\scriptsize\sf $D$}
      % \psgrid
    \end{pspicture}
  }
  \subfigure[]{
    \label{subfig:rdpco-varying-P-C-perception}
    \begin{pspicture}(\wdres,\hdres)
      \rput[lb](0,0.35){\includegraphics[width=\wdres]{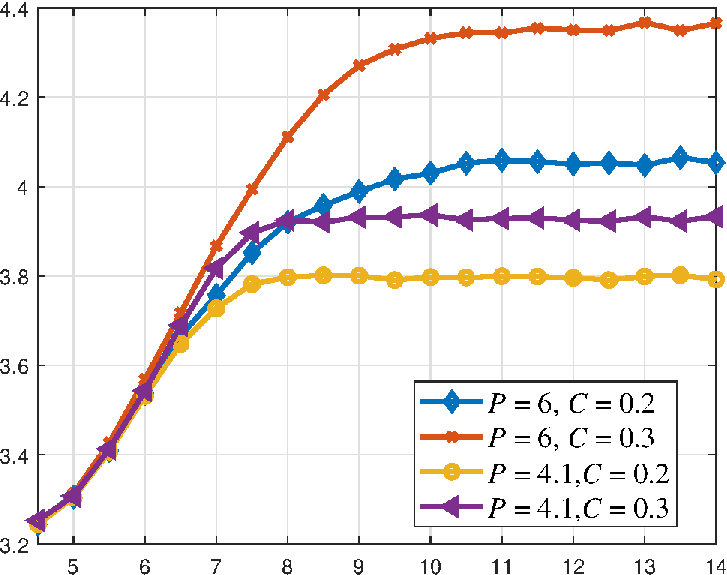}}
      \rput[lt](0,\hdres){\scriptsize\textbf{\sf Perception}}
      \rput[b](2.9,0){\scriptsize\sf $D$}
      % \psgrid
    \end{pspicture}
  }
  \subfigure[]{
    \label{subfig:rdpco-varying-P-C-classification}
    \begin{pspicture}(\wdres,\hdres)
      \rput[lb](0,0.35){\includegraphics[width=\wdres]{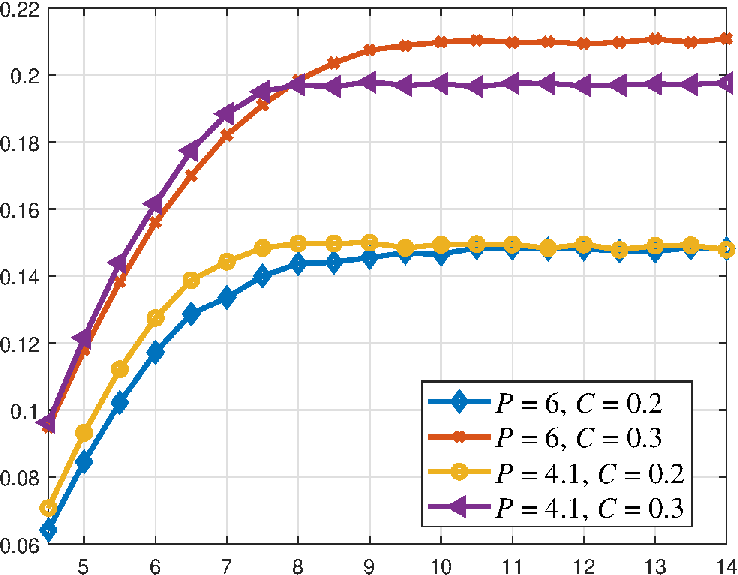}}
      \rput[lt](0,\hdres){\scriptsize\textbf{\sf Classification error}}
      \rput[b](2.9,0){\scriptsize\sf $D$}
      % \psgrid
    \end{pspicture}
  }
  \vspace{-0.4cm}
  \caption{
    Values of \text{(a)} distortion, \text{(b)} perception, and \text{(c)}
    classification error for RDPCO for varying distortion parameter $D$.
    These metrics are computed by the right-hand side of the expressions
    in~\eqref{eq:finalexpressiondistortion}, \eqref{eq:boundWasserstein},
    and~\eqref{eq:bhattacharyya-our}, respectively.
  }
  \label{fig:rdpco-varying-P-C}
\end{figure*}

\begin{figure*}
  \def\wdres{5.9cm}
  \def\hdres{5.25cm}
  \subfigure[]{
    \label{subfig:rdpco-varying-m-distortion}
    \begin{pspicture}(\wdres,\hdres)
      \rput[lb](0,0.35){\includegraphics[width=\wdres]{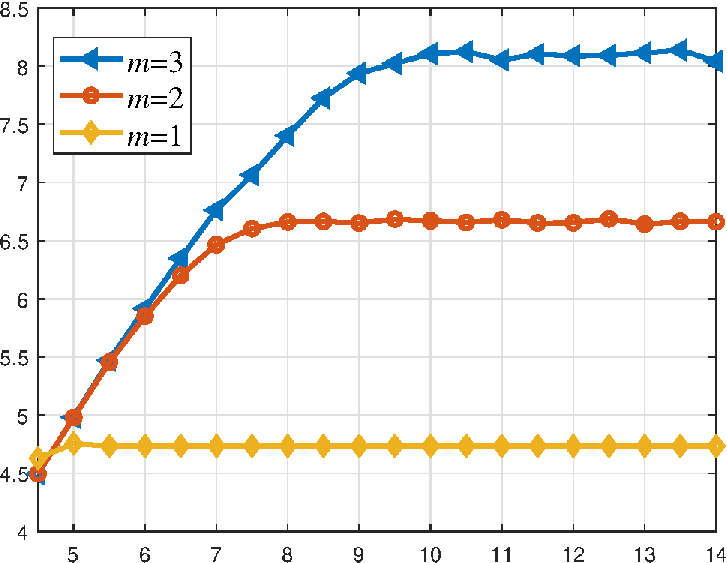}}
      \rput[lt](0,\hdres){\scriptsize\textbf{\sf Distortion}}
      \rput[b](2.9,0){\scriptsize\sf $D$}
      % \psgrid
    \end{pspicture}
  }
  \subfigure[]{
    \label{subfig:rdpco-varying-m-perception}
    \begin{pspicture}(\wdres,\hdres)
      \rput[lb](0,0.35){\includegraphics[width=\wdres]{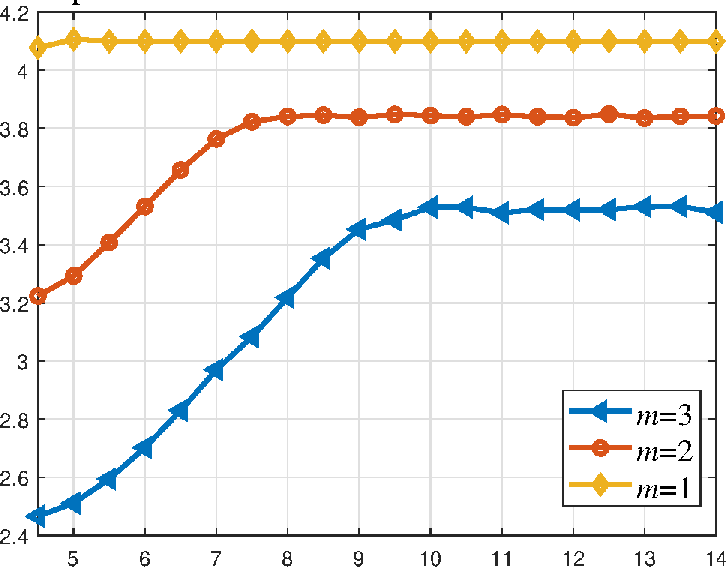}}
      \rput[lt](0,\hdres){\scriptsize\textbf{\sf Perception}}
      \rput[b](2.9,0){\scriptsize\sf $D$}
      % \psgrid
    \end{pspicture}
  }
  \subfigure[]{
    \label{subfig:rdpco-varying-m-classification}
    \begin{pspicture}(\wdres,\hdres)
      \rput[lb](0,0.35){\includegraphics[width=\wdres]{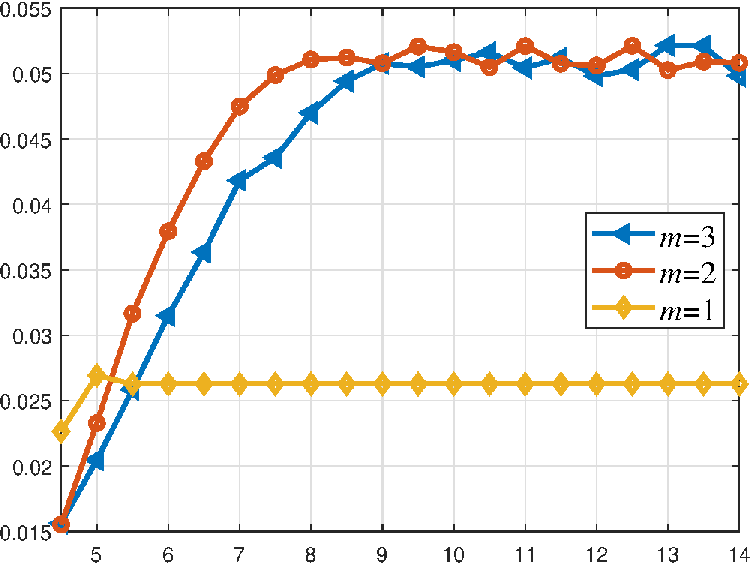}}
      \rput[lt](0,\hdres){\scriptsize\textbf{\sf Classification error}}
      \rput[b](2.9,0){\scriptsize\sf $D$}
      % \psgrid
    \end{pspicture}
  }
  \vspace{-0.4cm}
  \caption{
    Values of \text{(a)} distortion, \text{(b)} perception, and \text{(c)}
    classification error for RDPCO for varying latent dimension $m$
    and hyperparameters $(P, C) = (4.1, 0.1)$.
  }
  \label{fig:rdpco-varying-m}
\end{figure*}

\section{Experimental Results}
\label{sec:experiments}

We now present our experiments to evaluate the performance of the proposed
algorithms, RDPCO (Algorithm~\ref{alg:rdpco}) and ID-GAN
(Algorithm~\ref{alg:IDGANTrain}). 
% The experiments will also unveil insights
% about the RDPC problem~\eqref{eq:problem-general}. 
We start with RDPCO and then consider ID-GAN.

\subsection{RDPCO algorithm}
\label{subsec:experiments-rdpco}

Recall that RDPCO (Algorithm~\ref{alg:rdpco}) solves the
approximation~\eqref{eq:boundRDPC} of~\eqref{eq:problem-general}. Before
explaining the experiments, we describe how we set the parameters of the
algorithm. 

\mypar{Experimental setup}
% We first consider RDPCO 
% (Algorithm~\ref{alg:rdpco}), which solves~\eqref{eq:boundRDPC}.
In Algorithm~\ref{alg:rdpco}, we generate $\bm{c_n} \in
\mathbb{R}^n$ randomly with i.i.d.\ Gaussian entries with zero mean and
variance $4$. The classes are 
always equiprobable, i.e., $p_0 = p_1 = 1/2$. In the gradient
descent method in
step~\ref{subalg:rdpco-ED}, we employ a constant learning rate of $10^{-4}$ for
$20$k iterations. 
In the barrier method in
steps~\ref{subalg:rdpco-loopbarrier-beg}-\ref{subalg:rdpco-loopbarrier-end},
we initialize $t$ as $t_0 = 0.01$ and update it with a factor of $\mu = 2$. 
The parameter
$\epsilon$ in step~\ref{subalg:rdpco-stoppingglobal} is set to $10^{-5}$.
To balance the terms in~\eqref{eq:prob-find-sigma}, we set $\lambda_D =
1/\log(D)$,  $\lambda_P = 1/\log(P)$, and $\lambda_C = -1/\log(\sqrt{p_0\, p_1})$.
During the experiments, we vary $D$, $P$, $C$, and $R$ [which depends on the
latent dimension $m$ and noise level $\sigma_t$; cf.\ \eqref{eq:rate}].
To evaluate the performance of the algorithm, we visualize how two
metrics vary, e.g., rate and distortion, while the remaining parameters are
fixed.

\mypar{Metrics as a function of $\bm{D}$}
Here, we fix the input dimension to $n=5$ and the latent one to $m=2$.
Fig.~\ref{fig:rdpco-varying-P-C} shows how distortion, perception, and
classification error metrics vary with $D$ in~\eqref{eq:boundRDPC}.
These metrics are, respectively, the right-hand
side of~\eqref{eq:finalexpressiondistortion}, of
\eqref{eq:boundWasserstein}, and of~\eqref{eq:bhattacharyya-our}.
In
Figs.~\ref{subfig:rdpco-varying-P-C-distortion}-~\ref{subfig:rdpco-varying-P-C-perception},
we see that when $C$ (resp.\ $P$) is fixed, increasing $P$ (resp.\ $C$)
increases either the distortion or perception metrics. This behavior is
as expected according to Theorem~\ref{thm:convexity}. In
Fig.~\ref{subfig:rdpco-varying-P-C-classification}, we observe that for a fixed
$C$, modifying $P$ produces no significant effect on the classification error,
indicating that the classification constraint becomes active before the
perception one.

Fig.~\ref{fig:rdpco-varying-m} is similar to Fig.~\ref{fig:rdpco-varying-P-C},
but $P$ and $C$ are fixed to $4.1$ and $0.1$, respectively, while the latent
dimension $m$ varies. When $m=1$, all metrics are invariant to $D$. 
This is because, as seen in Fig.~\ref{subfig:rdpco-varying-m-perception}, the
perception constraint is active (it equals its limit of $4.1$), dominating the
two other constraints.
For $m=2,3$, their classification error behaves similarly, but
$m=3$ achieves better perception and worse distortion.

\begin{figure}
  \subfigure[]{
    \label{subfig:rate-distortion-RDPCO-lambdas}
    \def\hdres{7.1cm}
      \begin{pspicture}(\linewidth,\hdres)
        \rput[lb](-0.11,0.35){\includegraphics[width=8cm]{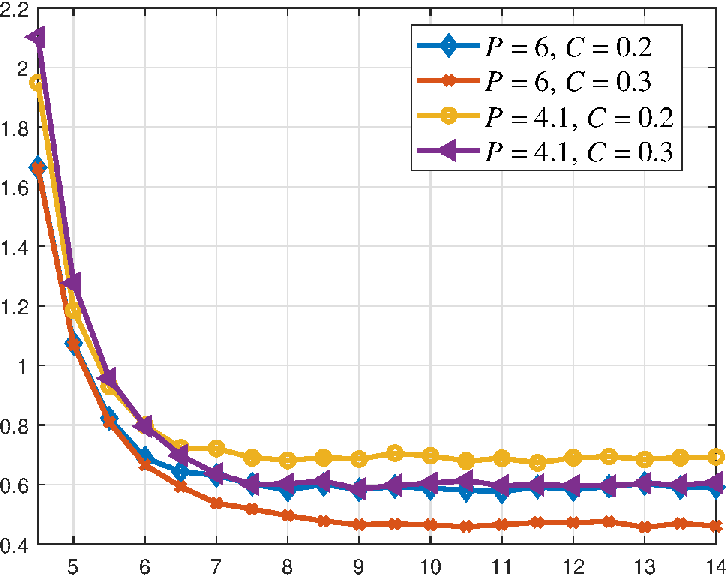}}
        \rput[lt](-0.11,\hdres){\small\textbf{\sf Rate}}
        \rput[b](4.3,0){\small\sf $D$}
        % \psgrid
      \end{pspicture}
  }
  \subfigure[]{
    \label{subfig:rate-distortion-RDPCO-m}
    \def\hdres{6.9cm}
      \begin{pspicture}(\linewidth,\hdres)
        \rput[lb](-0.11,0.35){\includegraphics[width=8cm]{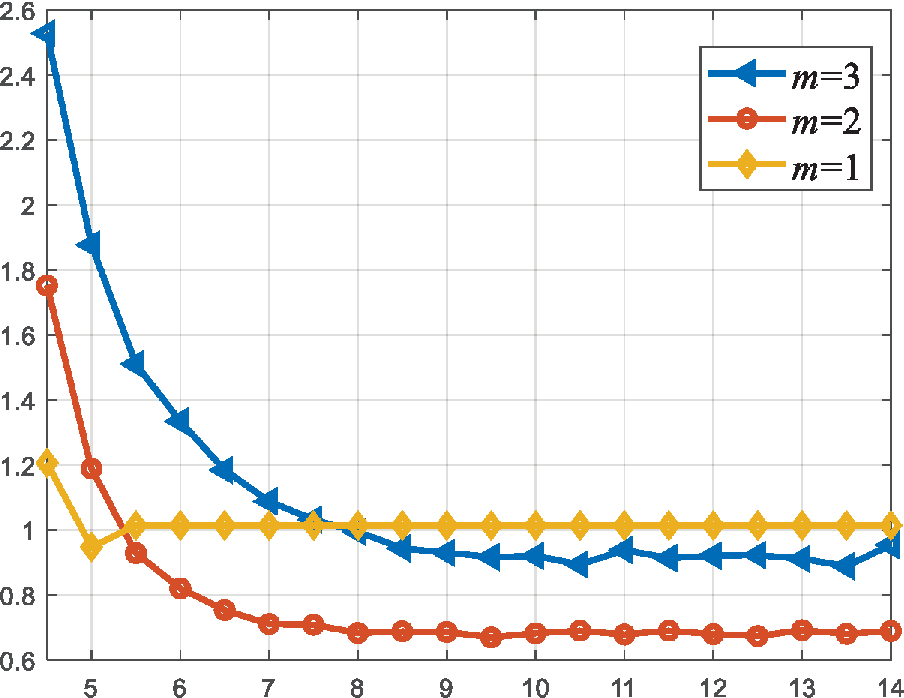}}
        \rput[lt](-0.11,\hdres){\small\textbf{\sf Rate}}
        \rput[b](4.3,0){\small\sf $D$}
        % \psgrid
      \end{pspicture}
  }
  \vspace{-0.2cm}
  \caption{
    Rate-distortion curves of RDPCO for
    \text{(a)}  varying $P$ and $C$, and \text{(b)} varying compressed
  dimension $m$ with hyperparameters $(P, C) = (4.1, 0.1)$.
  }
	\label{fig:rate-distortion-RDPCO}
\end{figure}

\mypar{Rate-distortion analysis}
In this experiment, as in Fig.~\ref{fig:rdpco-varying-P-C}, we vary both $P$
and $C$ in the constraints of~\eqref{eq:boundRDPC}.
Fig.~\ref{subfig:rate-distortion-RDPCO-lambdas} shows the resulting
rate-distortion curves. For a fixed $P$, decreasing $C$ increases the
rate; similarly, for a fixed $C$, decreasing $P$ increases the rate as well.
This validates the tradeoff established in Theorem~\ref{thm:convexity}.
Fig.~\ref{subfig:rate-distortion-RDPCO-m} shows the rate-distortion curves
under the same parameters as Fig.~\ref{fig:rdpco-varying-m}, i.e., $(P,
C)=(4.1, 0.1)$. We can see again that, for $m=1$, the rate is invariant to $D$,
since the perception constraint is the only active one.  For $m=2,3$, the
curves have the familiar tradeoff shape. In this case, $m=2$ yields a
rate-distortion curve better than $m=3$. The reason is that, as we saw in
Fig.~\ref{subfig:rdpco-varying-m-perception}, the perception constraint is the
most stringent of three constraints, and $m=2$ achieves a perception value
closer to the limit of $4.1$, leading to a better rate-distortion tradeoff in
Fig~\ref{subfig:rate-distortion-RDPCO-m}. 
These results corroborate the RDPC tradeoff we derived.
Indeed, they point to the existence of an optimal latent dimension $m$ that
minimizes the rate while satisfying the distortion, perception, and
classification constraints.

\begin{figure*}
  \centering
  \psscalebox{0.99}{	
    \begin{pspicture}(0.1,0)(18,4)

      % Figures
      \rput[l](0,1.7){\includegraphics[width=0.49\linewidth]{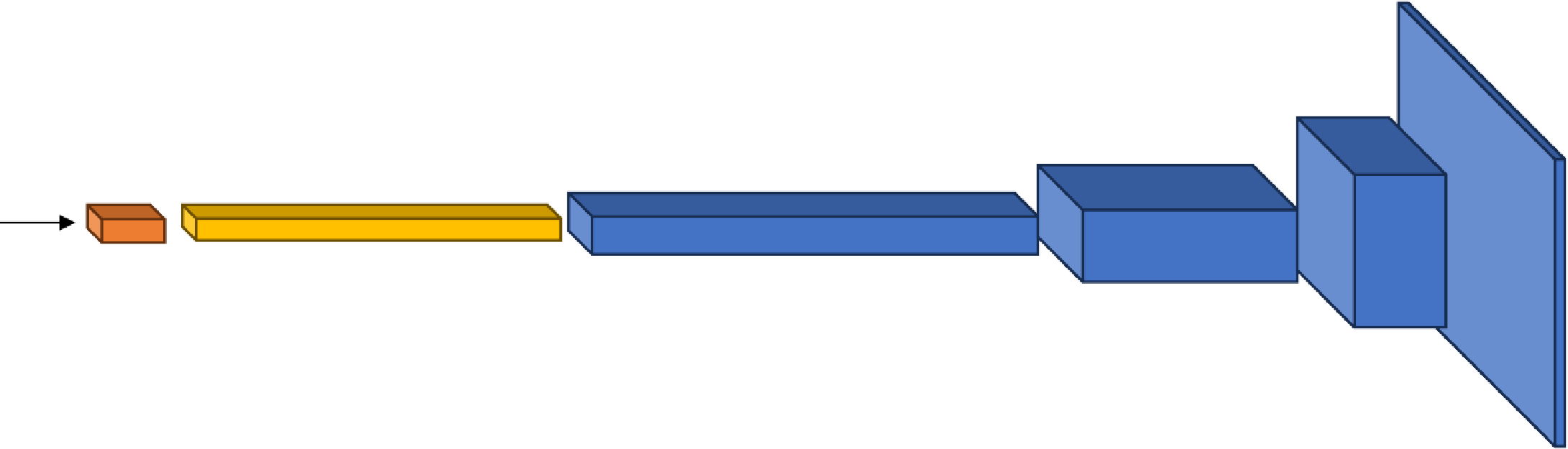}}
      \rput[l](10.1,1.7){\includegraphics[width=0.45\linewidth]{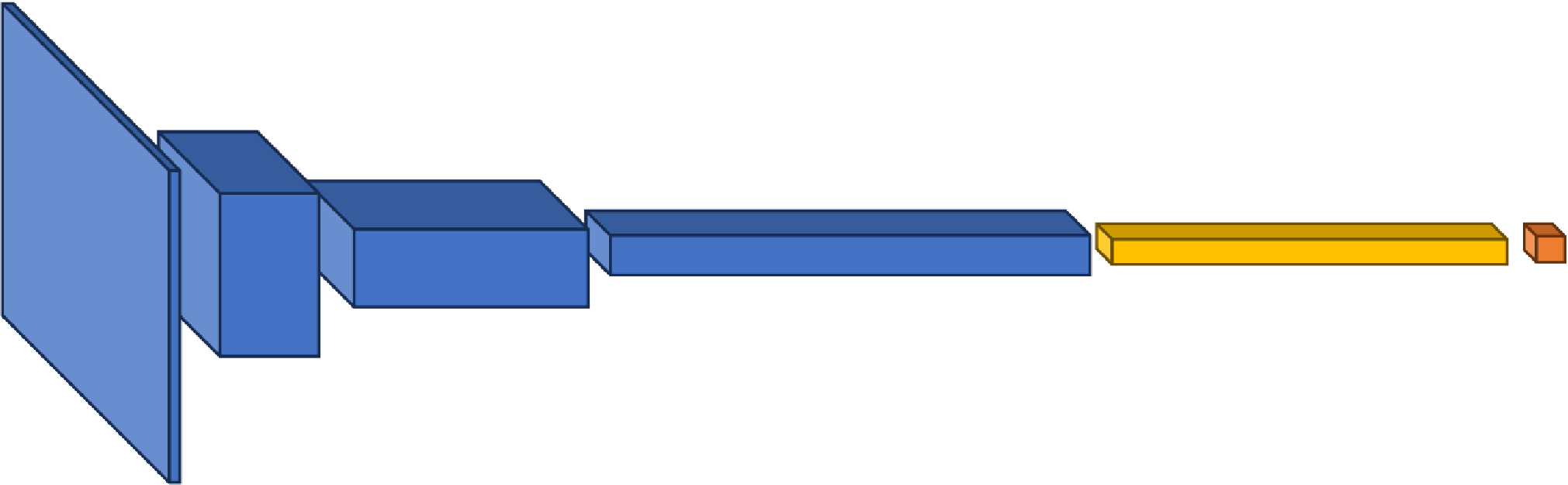}}

      % ----------------
      % Legend (decoder)
      \rput[tr](0.92,1.4){$\scriptstyle 1\times m$}

      \rput[t](2.1,1.4){$\scriptstyle 1\times 4096$}
      \rput[b](2.1,2.0){\scriptsize \textbf{FC}}

      \rput[t](4.5,1.3){$\scriptstyle 4\times 4 \times 256$}
      \rput[tl](3.25,3.05)
      {\parbox[t]{3cm}
        {\scriptsize kernel = 5 
          \\ 
          stride = 1 
          \\ 
          padding = 0 
          \\[0.05cm]
        \textbf{conv\_transp1+ReLU}}
      }

      \rput[t](6.7,1.2){$\scriptstyle 7\times 7 \times 128$}
      \rput[tl](5.9,3.5)
      {\parbox[t]{3cm}
        {\scriptsize kernel = 5 
          \\ 
          stride = 1 
          \\ 
          padding = 0 
          \\[0.05cm]
        \textbf{conv\_transp2\\+ReLU}}
      }

      \rput[tr](8.2,0.8){$\scriptstyle 14\times 14 \times 64$}
      \rput[tl](7.48,4.0)
      {\parbox[t]{3cm}
        {\scriptsize kernel = 5,\,
          stride = 1 
          \\ 
          padding = 0 
          \\[0.05cm]
        \textbf{conv\_transp3+ReLU}}
      }
      \psline[linewidth=0.5pt](7.5,3.15)(7.5,2.4)

      \rput[tl](8.95,1.9)
      {\parbox[t]{3cm}
        {{\scriptsize \textbf{image}}
          \\
          $\scriptstyle 28\times 28\times 1$
        }
      }
      % ----------------

      % ----------------
      % Legend (critics)

      \rput[tl](11.2,0.8){$\scriptstyle 14\times 14 \times 64$}
      \rput[tl](10.1,4.0)
      {\parbox[t]{3cm}
        {\scriptsize kernel = 5,\,
          stride = 2 
          \\ 
          padding = 2 
          \\[0.05cm]
        \textbf{conv1+ReLU}}
      }
      \psline[linewidth=0.5pt](11.4,3.15)(11.4,2.4)

      \rput[t](12.5,1.2){$\scriptstyle 7\times 7 \times 128$}
      \rput[tl](11.8,3.2)
      {\parbox[t]{3cm}
        {\scriptsize kernel = 5 
          \\ 
          stride = 2 
          \\ 
          padding = 2 
          \\[0.05cm]
        \textbf{conv2+ReLU}}
      }

      \rput[t](14.5,1.3){$\scriptstyle 4\times 4 \times 256$}
      \rput[tl](13.4,3.05)
      {\parbox[t]{3cm}
        {\scriptsize kernel = 5 
          \\ 
          stride = 2 
          \\ 
          padding = 2 
          \\[0.05cm]
        \textbf{conv3+ReLU}}
      }

      \rput[t](16.9,1.4){$\scriptstyle 1\times 4096$}
      \rput[b](16.9,2.0){\scriptsize \textbf{FC}}

      \rput[tr](18.3,1.4){$\scriptstyle 1\times 1$}

      % ----------------

      % Sub-captions
      \rput[t](4.5,0.3){\small \text{(a)}\, Decoder}
      \rput[t](13.5,0.3){\small \text{(b)}\, Critics}
      %\psgrid
    \end{pspicture}
  }		
  \vspace{-0.2cm}
  \caption
  {Architectures of the decoder $d(\cdot\, ;\, \bm{\theta}_d)$ and of the
    critics $f_1$ and $f_2$ in ID-GAN [cf.\ Fig.~\ref{fig:ID-GAN}]. \textbf{FC}
    stands for fully connected layer, \textbf{conv} for convolutional layer,
    and \textbf{conv\_transp} for transposed convolutional layer. We indicate
    the dimensions of
    the layer as well as the size of the kernels, stride, and padding.}
  \label{fig:arch-decoder-critics} 
\end{figure*}

\begin{figure}
  % \centering			
  \psscalebox{0.96}{	
    \begin{pspicture}(0,0.5)(9,3.1)

      % Figure
      \rput[l](0,1.7){\includegraphics[width=\linewidth]{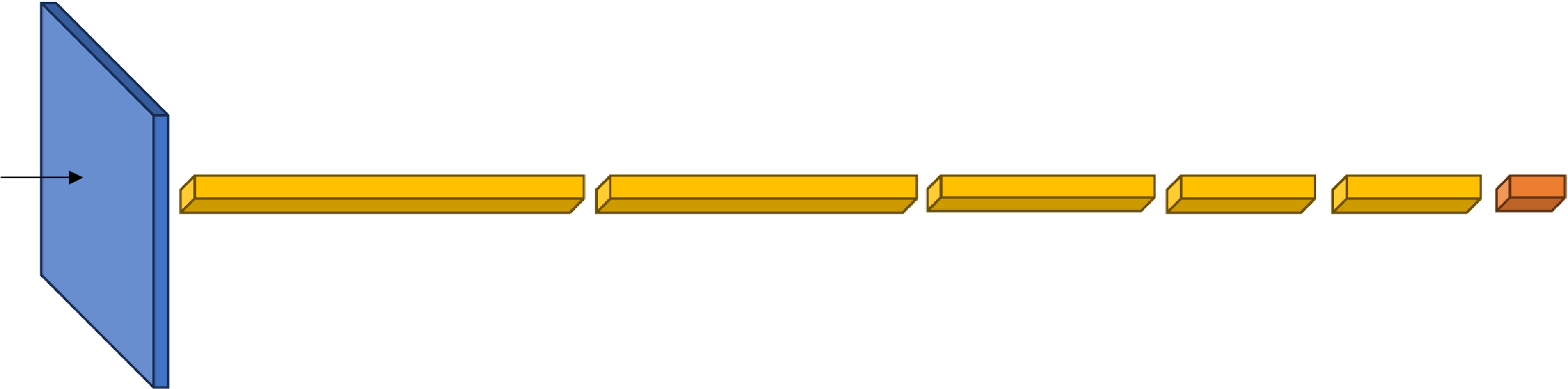}}

      % ------
      % Legend

      \rput[tl](0.25,0.45){$\scriptstyle 28\times 28 \times 1$}
      \rput[tl](0.25,3.1){\scriptsize \textbf{image}}

      \rput[b](2.1,1.92){\scriptsize \textbf{FC}}
      \rput[t](2.1,1.5){$\scriptstyle 1\times 784$}

      % Fix coordinates below
      \rput[tl](4.5,2.37){\rnode{LR}{\scriptsize \textbf{FC+BN+LeakyReLU}}}
      \pnode(4.4,1.65){L2}
      \pnode(5.5,1.65){L3}
      \pnode(7.0,1.65){L4}

      \psset{linewidth=0.4pt,linecolor=black!50}
      % \ncangle[angleA=90,angleB=270,offsetB=0.4cm,nodesepA=0.18cm,armB=0.15cm]{L2}{LR}
      \ncangle[angleA=90,angleB=180,offsetA=0.2cm,nodesepA=0.18cm,nodesepB=0.08cm,armB=0.1cm]{L2}{LR}
      \ncline[offsetB=0.063cm,nodesepA=0.18cm,nodesepB=0.02cm]{L3}{LR}
      % \ncangle[angleA=90,angleB=270,offsetB=-0.19cm,nodesepA=0.18cm,armB=0.15cm]{L4}{LR}
      \ncangle[angleA=90,angleB=0,nodesepA=0.18cm,nodesepB=0.08cm,armB=0.12cm]{L4}{LR}

      % \rput[b](4.35,2.02){\scriptsize \textbf{FC+BN\\+LeakyReLU}}
      % \rput[b](4.35,1.92){\scriptsize \textbf{\parbox{2cm}{FC+BN\\LeakyReLU}}}
      \rput[t](4.35,1.5){$\scriptstyle 1\times 512$}

      % \rput[b](5.95,1.92){\scriptsize \textbf{FC+BN\\+LeakyReLU}}
      % \rput[b](5.95,1.92){\scriptsize \textbf{\parbox{2cm}{FC+BN\\LeakyReLU}}}
      \rput[t](5.95,1.5){$\scriptstyle 1\times 256$}

      % \rput[b](7.0,1.92){\scriptsize \textbf{FC+BN\\+LeakyReLU}}
      % \rput[b](7.5,1.92){\scriptsize \textbf{\parbox{2cm}{FC+BN\\LeakyReLU}}}
      \rput[t](7.0,1.5){$\scriptstyle 1\times 128$}

      \rput[b](8.55,1.92){\scriptsize \textbf{\parbox{2cm}{FC+BN\\+Tanh}}}
      \rput[t](7.93,1.5){$\scriptstyle 1\times 128$}

      \rput[tl](8.42,1.5){$\scriptstyle 1\times m$}

      % ------

      % \psgrid
    \end{pspicture}
  }		
  \vspace{-0.3cm}
  \caption{
    Architecture of the encoder $e(\cdot\, ;\, \bm{\theta}_e)$, consisting of
    fully connected layers of indicated dimensions. Layers 2-4 contain a batch
    normalization (BN) layer and use LeakyReLU as activation. Layer 5 uses
    $\tanh$ instead.
    % \rev{\textbf{BN}
    % stands for batch normalization layer, which dimension is the same as the output of FC.} 
  }
  \label{fig:arch-encoder}
\end{figure}

\begin{figure*}
  % \centering
  \def\wdres{5.8cm}
  \def\hdres{5.2cm}
  \subfigure[]
  {\label{subfig:rate-mse} 
      \begin{pspicture}(\wdres,\hdres)
        \rput[lb](0,0.35){\includegraphics[width=\wdres]{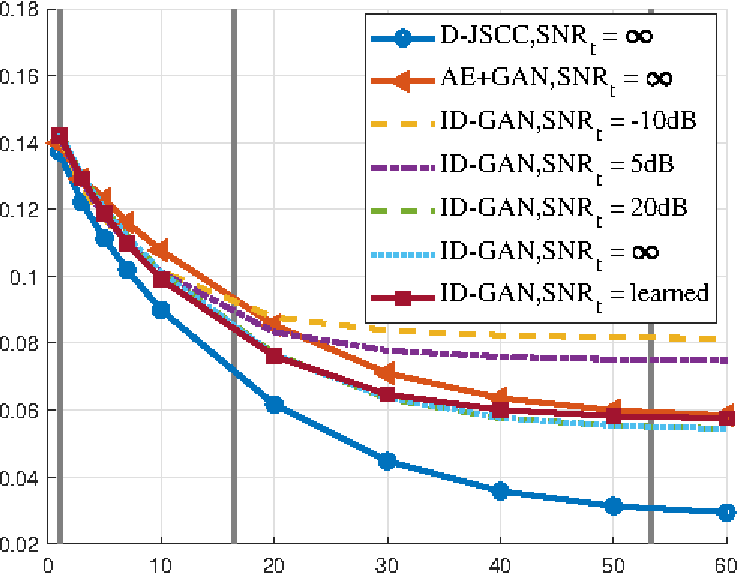}}
        \rput[lt](0,\hdres){\scriptsize\textbf{\sf Mean-squared error (MSE)}}
        \rput[b](2.9,0){\scriptsize\sf Rate}
        \rput[l](0.58,0.9){\scriptsize \textcolor{black!55}{\textbf{-10 \!dB}}}
        \rput[l](1.95,0.9){\scriptsize \textcolor{black!55}{\textbf{5 \!dB}}}
        \rput[r](5.04,1.2){\scriptsize \textcolor{black!55}{\textbf{20 \!dB}}}
        % \psgrid
      \end{pspicture}
  }
  % \vspace{0.05cm}
  \subfigure[]
  {\label{subfig:rate-fid}
      \begin{pspicture}(\wdres,\hdres)
        \rput[lb](0,0.35){\includegraphics[width=5.65cm]{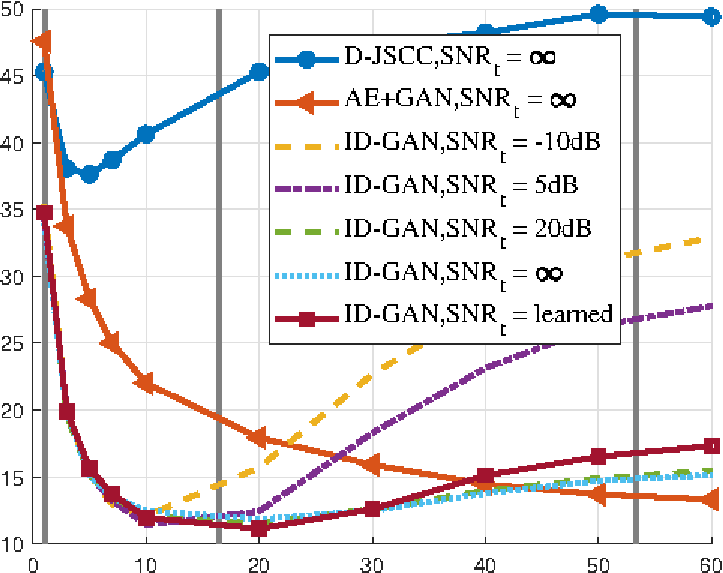}}
        \rput[lt](0,\hdres){\scriptsize\textbf{\sf Fréchet inception distance (FID)}}
        \rput[b](2.9,0){\scriptsize\sf Rate}
        \rput[l](0.4,0.7){\scriptsize \textcolor{black!55}{\textbf{-10 \!dB}}}
        \rput[l](1.81,1.85){\scriptsize \textcolor{black!55}{\textbf{5 \!dB}}}
        \rput[r](4.88,1.85){\scriptsize \textcolor{black!55}{\textbf{20 \!dB}}}
        % \psgrid
      \end{pspicture}
  }
  % \vspace{0.05cm}
  \subfigure[]
  {\label{subfig:rate-acc}
      \begin{pspicture}(\wdres,\hdres)
        \rput[lb](0,0.35){\includegraphics[width=5.65cm]{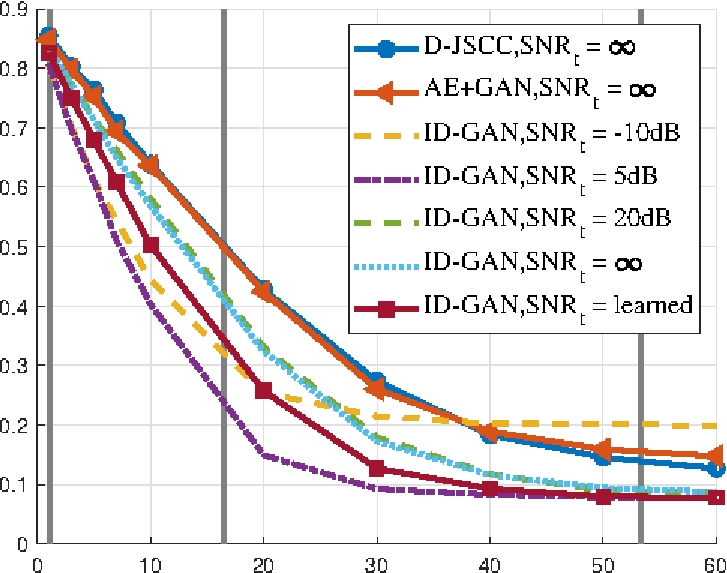}}
        \rput[lt](0,\hdres){\scriptsize\textbf{\sf Classification error}}
        \rput[b](2.9,0){\scriptsize\sf Rate}
        \rput[l](0.52,0.85){\scriptsize \textcolor{black!55}{\textbf{-10 dB}}}
        \rput[l](1.85,0.85){\scriptsize \textcolor{black!55}{\textbf{5 dB}}}
        \rput[r](4.89,1.76){\scriptsize \textcolor{black!55}{\textbf{20 \!dB}}}
        % \psgrid
      \end{pspicture}
  }
  \vspace{-0.4cm}
  \caption{
    Different metrics versus rate~\eqref{eq:rate} for the proposed ID-GAN, 
    D-JSCC~\cite{deep-jscc}, and AE+GAN~\cite{agustsson2019generative}.
    In ID-GAN, we set $(\lambda_g, \lambda_d, \lambda_p,
    \lambda_c) = (1, 10^3, 1, 10^3)$ and, during training, we either learned $\text{SNR}_t :=
    -10\log_{10}\sigma_t^2$ or fixed it to
    $-10$ dB, $5$ dB, $20$ dB (vertical lines depicting the corresponding
    rates), or $\infty$ (no noise). For all the metrics, the lower the better. \text{(a)} mean-squared error (MSE), \text{(b)} Fr\'echet inception distance (FID), and \text{(c)} classification error.
  }
  \label{fig:comparison-mse-fid-acc}
\end{figure*}

\begin{figure}
  \def\wdres{6.3cm}
  \def\hdres{5.9cm}
  \def\windfig{1.73cm}
  \begin{pspicture}(0,-1.8)(\linewidth,\hdres)
    \rput[lb](1.2,3.85){\includegraphics[width=1.59cm,height=1.65cm]{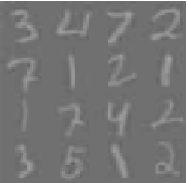}}
    \rput[lb](1.2,1.94){\includegraphics[width=1.59cm,height=1.65cm]{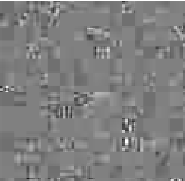}}
    \rput[lb](1.2,-0.02){\includegraphics[width=1.59cm,height=1.71cm]{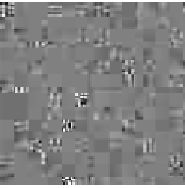}}
    \rput[lb](3.0,0){\includegraphics[width=\windfig]{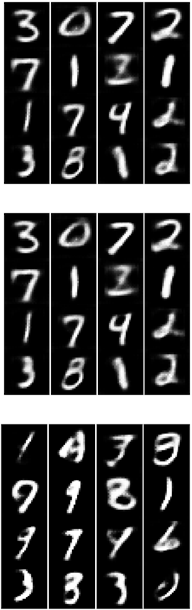}}
    \rput[lb](5.0,0){\includegraphics[width=\windfig]{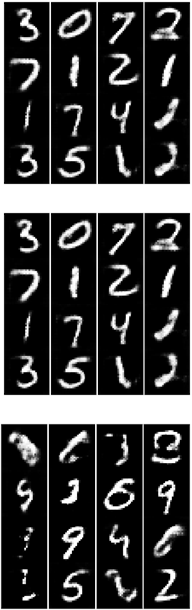}}
    \rput[lb](7.0,0){\includegraphics[width=1.675cm]{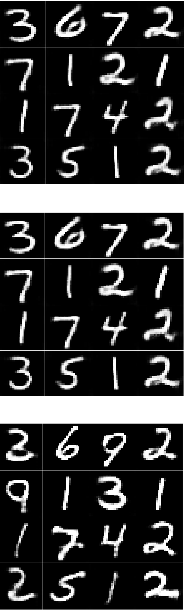}}

    \rput[lt](1.2,\hdres){\small Traditional}
    \rput[lt](3.0,\hdres){\small D-JSCC~\cite{deep-jscc}}
    \rput[lt](5.0,\hdres){\small AE+GAN~\cite{agustsson2019generative}}
    \rput[t](7.85,\hdres){\small \textcolor{red}{\textbf{ID-GAN}}}

    \rput[lt](0,\hdres){\textbf{Rate}}
    \rput[l](0,4.8){$10^{3}$}
    \rput[l](0,2.9){$10^{2}$}
    \rput[l](0,0.9){$10^{1}$}
    % \rput[l](0,2.9){$10^{0}$}
    % \rput[l](0,0.9){$10^{-1}$}

    \rput[l](0,-1.0){\textbf{\textcolor{red}{Input}}}
    \rput[lt](1.2,-0.2){\includegraphics[width=1.59cm]{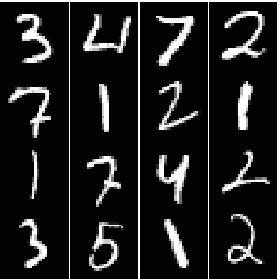}}

    % \psgrid
  \end{pspicture}
  \vspace{-0.5cm}
  \caption{Example images reconstructed by a traditional method
    (JPEG+LDPC+BPSK), D-JSCC, AE+GAN
    training, and proposed ID-GAN with learned $\text{SNR}_t$ and parameters $(\lambda_g, \lambda_d, \lambda_p,
\lambda_c) = (1, 500, 1, 10^3)$.}
  \label{JSCM_results}
\end{figure}

\begin{figure*}
  \subfigure[]{
    \label{subfig:rate-distortion-IDGAN-lambdas}
    \psscalebox{1.0}{	
      \begin{pspicture}(0,-2)(8.2,7.8)

        \rput[b](2.2,7.8){$(1, 1, 1)$}
        \rput[b](4.6,7.8){$(1, 1, 100)$}
        \rput[b](7.1,7.8){$(1, 100, 1)$}
        \rput[br](8.0,0.0){\includegraphics[height=7.7cm]{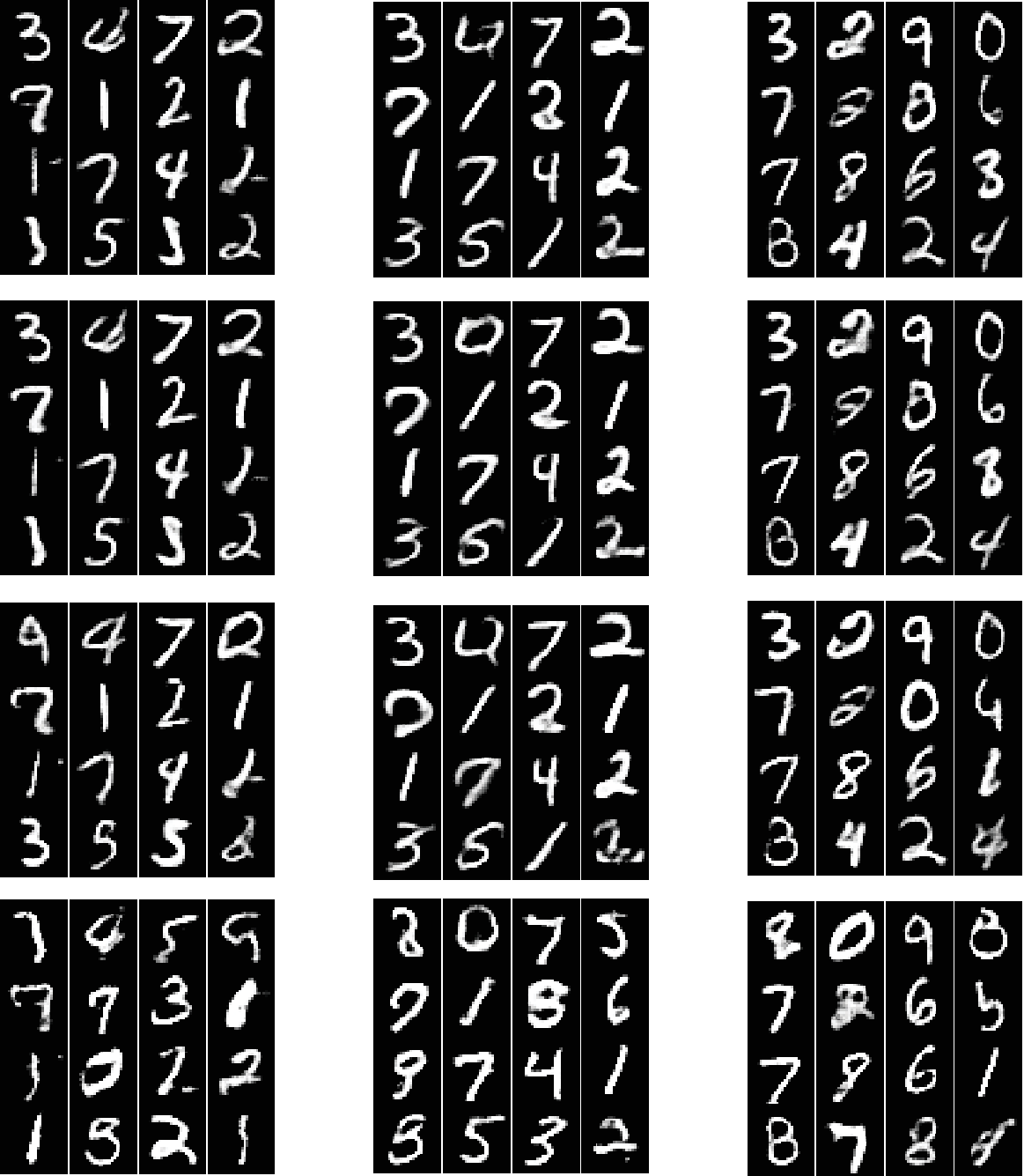}}

        \rput[l](0,8.0){\textbf{Rate}}
        \rput[l](0,6.70){$120$}
        \rput[l](0,4.85){$90$}
        \rput[l](0,2.90){$60$}
        \rput[l](0,0.95){$30$}

        \rput[l](0,-1.0){\textbf{\textcolor{red}{Input}}}
        \rput[lt](1.3,-0.2){\includegraphics[width=1.8cm]{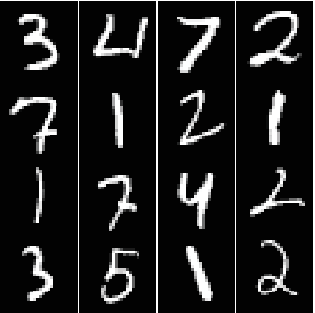}}

        % \psgrid
      \end{pspicture}
    }		
  }
  \hfill
  \subfigure[]{
    \label{subfig:rate-distortion-IDGAN-m}
    \psscalebox{1.0}{	
      \begin{pspicture}(0,-2)(8.2,7.8)

        \rput[b](2.45,7.8){$m = 2$}
        \rput[b](4.85,7.8){$m = 8$}
        \rput[b](7.1,7.8){$m = 64$}
        \rput[br](8.0,0.0){\includegraphics[height=7.7cm]{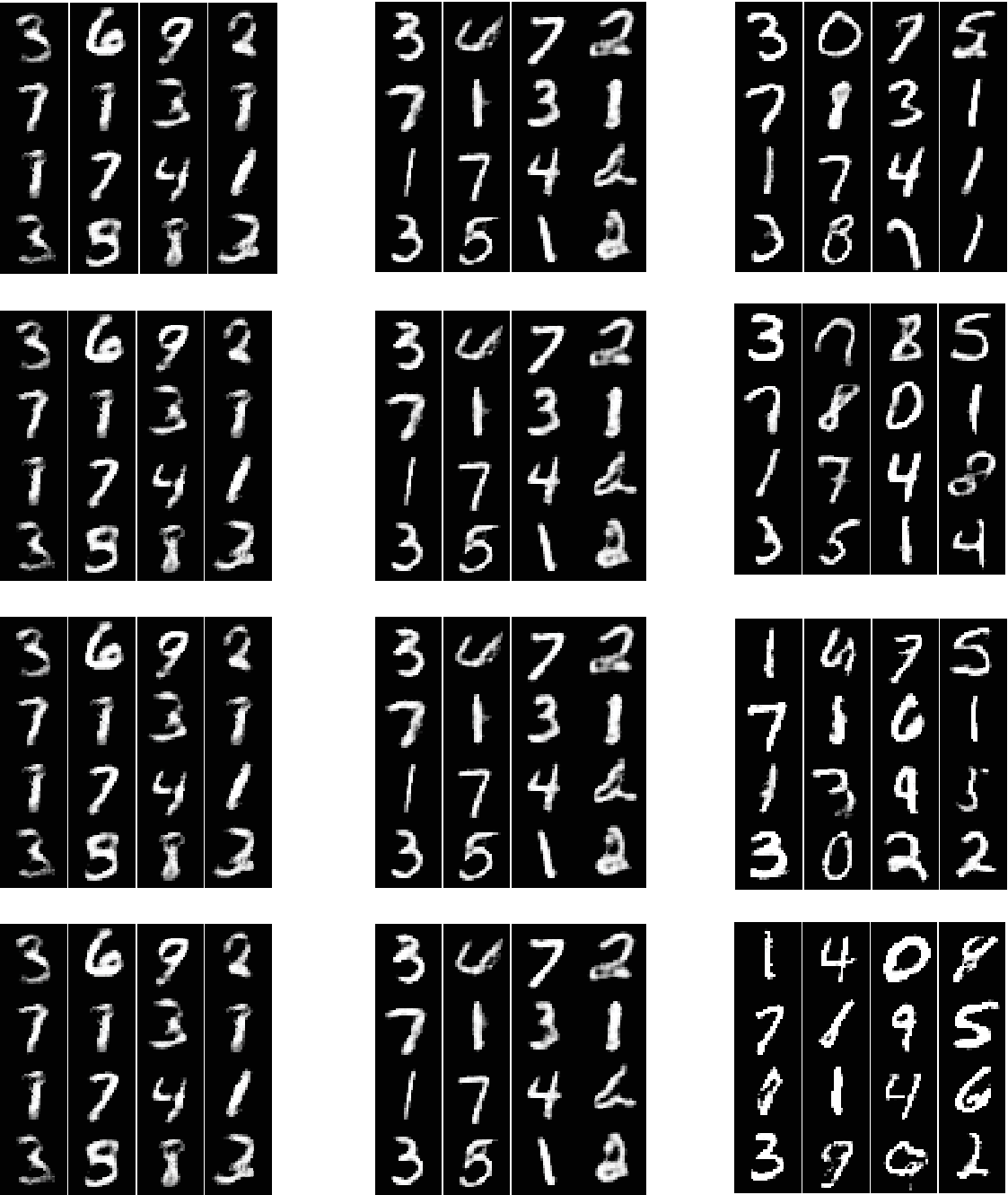}}

        \rput[lt](0,8.0){\textbf{Rate}}
        \rput[l](0,6.70){$120$}
        \rput[l](0,4.85){$90$}
        \rput[l](0,2.90){$60$}
        \rput[l](0,0.95){$30$}

        \rput[l](0,-1.0){\textbf{\textcolor{red}{Input}}}
        \rput[lt](1.49,-0.2){\includegraphics[width=1.78cm]{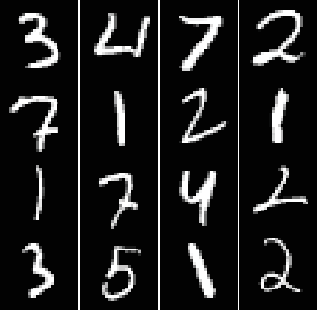}}

        % \psgrid
      \end{pspicture}
    }		
  }
  % \vspace{0.1cm}
  \caption{
    \text{(a)} Corresponding behavior of ID-GAN on MNIST images. The triples on
    top of each column represent hyperparameters $(\lambda_d, \lambda_p,
    \lambda_c)$. We fixed $\text{SNR}_t = \infty$. 
  \text{(b)} corresponding behavior of ID-GAN on
  MNIST images with fixed $\text{SNR}_t = \infty$.
  }
	\label{fig:rate-distortion-IDGAN}
\end{figure*}

\begin{figure}[t]
  \def\hdres{2.9cm}
  \def\windfig{2.0cm}
  \begin{pspicture}(0,0)(\linewidth,\hdres)

    \rput[lb](0.0,0.8){\includegraphics[width=\windfig]{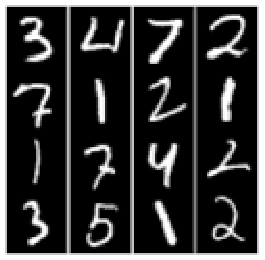}}
    \rput[lt](0.05,0.7){\textbf{\textcolor{red}{Input}}}
    \rput[lb](2.3,0.8){\includegraphics[width=\windfig]{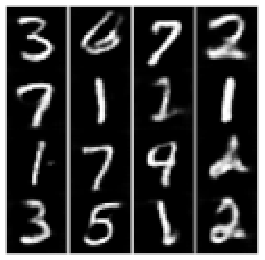}}
    \rput[lt](2.35,0.7){\small MSE loss}
    \rput[lt](2.35,0.3){\small (ID-GAN)}
    \rput[lb](4.6,0.8){\includegraphics[width=\windfig]{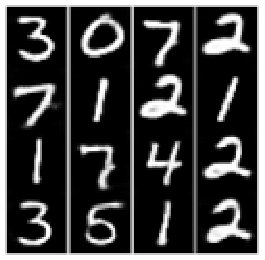}}
    \rput[lt](4.65,0.7){\small MSE \!+\! CE loss}
    \rput[lt](4.65,0.3){\small (ID-GAN)}
    \rput[lb](6.9,0.8){\includegraphics[width=\windfig]{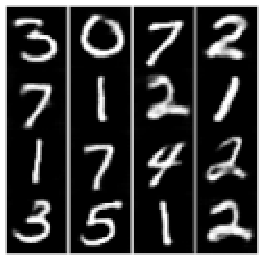}}
    \rput[lt](6.95,0.7){\textbf{\textcolor{red}{ID-GAN}}}

    % \psgrid
  \end{pspicture}
  \vspace{-0.5cm}
  \caption{
    Visualization of the results of the ablation study. In the second column,
    only the MSE term of~\eqref{eq:loss-encoder} was considered during
    training. In the third, both the MSE and the cross-entropy terms were
    considered. And in the fourth, the full
    loss~\eqref{eq:loss-encoder} was considered.
  }
  \label{fig:ablation}
\end{figure}

\subsection{ID-GAN algorithm}
\label{subsec:experiments-IDGAN}

We now consider ID-GAN (Algorithm~\ref{alg:IDGANTrain}) applied to the
popular MNIST dataset~\cite{LeCun98}, which has $60$k training images and $10$k
test images of size $28\times 28$ depicting digits from $0$ to $9$ ($10$
classes). Before describing the experimental setup in detail, we explain how
all the functions in the ID-GAN framework in Fig.~\ref{fig:ID-GAN} were
implemented.

\mypar{Network architectures}
Fig.~\ref{fig:arch-decoder-critics} shows the architectures of the decoder
$d(\cdot\, ;\, \bm{\theta}_d)$ [Fig.~\ref{fig:arch-decoder-critics}\text{(a)}]
and of the critics $f_1$ and $f_2$
[Fig.~\ref{fig:arch-decoder-critics}\text{(b)}]. The latter have the same
architecture, but they are initialized independently, with different seeds.
The decoder in Fig.~\ref{fig:arch-decoder-critics}\text{(a)} increases the
dimensions of the data by first using a fully connected network, whose output
is reshaped to a $4\times 4\times 256$ tensor, and then by upsampling
along the channel. 
The upsampling is performed via transposed convolutions with ReLU activations.
The architecture of the critics [Fig.~\ref{fig:arch-decoder-critics}\text{(b)}]
is symmetric to that of the decoder. The input image is compressed via a
convolutional network, whose output in the last layer is mapped to a
probability vector with a sigmoid function. The architecture of the encoder, on
the other hand, downsamples the input image using only fully connected layers, as
shown in Fig.~\ref{fig:arch-encoder}. 
The network architecture of the classifier [cf.\ Fig.~\ref{fig:ID-GAN}, bottom]
is similar to the one of the encoder, except that 
the last layer is mapped to a 10-dimensional vector (coinciding with the number
of classes of MNIST) and a sigmoid is applied to each entry. We train the
classifier beforehand and fix it when training the encoder.

\mypar{Experimental setup}
In Algorithm~\ref{alg:IDGANTrain}, we used a learning rate $\alpha = 10^{-5}$
and acceleration parameters $\beta_1 = 0.5$ and $\beta_2 = 0.9$ for Adam, a
batch size of $S = 50$, and $n_{\text{critic}} = 5$ iterations for the inner loop
of the critic. The loss hyperparameters and training noise level $\text{SNR}_t$
will be reported for each experiment. Also, we ran the algorithm for just $5$
epochs. The reason for such a small number is that, as described in
Section~\ref{subsec:IDGAN-trainingencoder}, the decoder has already been trained
when we run Algorithm~\ref{alg:IDGANTrain}. In fact, the decoder was also
trained with few epochs, $8$.

\mypar{Algorithms}
We compare the proposed ID-GAN algorithm against D-JSCC~\cite{deep-jscc}, the
parallel autoencoder-GAN (AE+GAN)~\cite{agustsson2019generative}, and a traditional approach in which
source coding, channel coding, and modulation are designed separately: 
JPEG and Huffman codes for source coding, $3/4$ LDPC codes for channel coding,
and BPSK for modulation~\cite{Gallager62-LDPC}. 

\mypar{Metrics and comparison}
For comparison metrics, we selected the mean squared error (MSE), the Fréchet
inception distance (FID)~\cite{heusel2017gans}, and the classification error
$(1/|\mathcal{V}|)\sum_{v\in \mathcal{V}}\mathbbm{1}_{c_0(\widehat{\bm{x}}^{(v)}) \neq
\bm{\ell}^{(v)}}$, where $\mathcal{V}$ is the test set, and
$\mathbbm{1}$ the $0$-$1$ loss.
FID captures perception quality by measuring the similarity between
distributions of real and generated images. The smaller the FID, the closer
the distributions. In summary, the smaller all the metrics, the better the
performance. Comparing the performance of a JSCM system against a traditional system,
however, is challenging. For example, \cite{deep-jscc} proposed to use the ratio
of bandwidth compression. Yet, in a traditional system, it is not obvious how
to accurately determine the ratio between the size of images and the
corresponding vectors in IQ-domain. 
Instead, we will use rate as defined in~\eqref{eq:rate}, i.e., channel
rate, which quantifies the amount of information that a symbol can convey
through a channel. In particular, the
dimension $m$ of the latent variable $\bm{y}$ corresponds to the number of
constellations in a traditional system.
Thus, to compare the different algorithms in a fair way, we fix during testing the channel
rate~\eqref{eq:rate}, making the results
independent from the latent dimension $m$. 

\mypar{Results}
Fig.\ \ref{fig:comparison-mse-fid-acc} shows how the above metrics
vary with the rate~\eqref{eq:rate}, measured in bits/image, for D-JSCC~\cite{deep-jscc},
AE+GAN~\cite{agustsson2019generative}, and the proposed ID-GAN. 
The curves represent the average values over the $10$k test images of
MNIST. For ID-GAN, we
fixed the hyperparameters as
$(\lambda_g, \lambda_d, \lambda_p, \lambda_c) = (1, 10^3, 1, 10^3)$ and
considered four different \textit{fixed} training noise levels, $\text{SNR}_t
\in \{-10, 5, 20, \infty\}$ \!dB [$\text{SNR}_t = \infty$ corresponds to no noise], and
also a value of $\text{SNR}_t$ that is \textit{learned} during training.
The vertical lines in Figs.\
\ref{subfig:rate-mse}-\ref{subfig:rate-acc} indicate the rates corresponding
to $\text{SNR}_t = -10$ dB, $5$ dB and $20$ dB.

Fig.~\ref{subfig:rate-mse} indicates that image distortion, measured by
the MSE, decreases with the rate for all the algorithms. D-JSCC outperforms
all the other methods, as it was designed to minimize image distortion (MSE).
Indeed, according to the RDPC tradeoff (Theorem~\ref{thm:convexity}), if
the perception and classification metrics are ignored, an algorithm can achieve a
smaller distortion for a given rate. When ID-GAN is trained with fixed
$\text{SNR}_t$, we observe that the larger the value of $\text{SNR}_t$,
the lower the achieved MSE for all rates. As we need to take into account other metrics (FID,
classification error), this indicates the difficulty of manually selecting
$\text{SNR}_t$. The version of ID-GAN with learned $\text{SNR}_t$ found an
optimal value of $\text{SNR}_t = 16.5$ dB during training, and its MSE performance follows closely
the performance of ID-GAN versions with $\text{SNR}_t = 20$ dB and
$\text{SNR}_t = \infty$ (the lines of latter practically coincide).

In Fig.~\ref{subfig:rate-fid}, however, 
D-JSCC is the algorithm with the worst FID performance, with an optimal point around
a rate of 5 bits/image. For AE+GAN, the FID metric decreases monotonically,
eventually surpassing ID-GAN trained with $\text{SNR}_t = 20$ dB and $\infty$. Indeed,
these versions of ID-GAN were the best overall in terms of FID performance,
again closely followed by ID-GAN with learned $\text{SNR}_t$.
When ID-GAN is trained with $\text{SNR}_t = -10$ dB or $5$ dB, FID reaches a
minimum around a rate of 10 bits/image and then it increases.
When $\text{SNR}_t$ is learned during training, 
its FID value decreases until a rate of 20 bits/image, and then it increases
gradually. This implies that when $\text{SNR}_t$ is learned, the perception
quality of ID-GAN is stable for a certain range of the rate.
The reason why FID values increase for larger rates in ID-GAN is likely
due to the mismatch between the noise levels during testing and training:
indeed, the smaller the training noise level, the earlier the FID curve starts
to increase. This happens only for FID likely because 
the decoder, which is largely responsible for the perception quality of the
output, is fixed during the training of the encoder. This decoupling may make
perception quality more sensitive to discrepancies between training/testing
setups.

Fig.~\ref{subfig:rate-acc} shows how the classification error of the algorithms
varies with the rate. All the versions of ID-GAN [except for $\text{SNR}_t =
-10$ dB, for rate $> 40$ bits/image] outperform both D-JSCC and AE+GAN, which
have almost overlapping lines. Within ID-GAN versions with fixed
$\text{SNR}_t$, we see that $\text{SNR}_t = 5$ dB yields the best
classification error across all the rates. Decreasing it to $-10$ dB or
increasing it to $20$ dB or $\infty$ results in a worse classification error.
The downside of such a good performance in terms of classification error is the
poor performance in terms of distortion [Fig.~\ref{subfig:rate-mse}] and
perception [Fig.~\ref{subfig:rate-fid}] for large rates. ID-GAN with learned
$\text{SNR}_t$, on the other hand, achieves a good performance across all
metrics [Figs.~\ref{subfig:rate-mse}-\ref{subfig:rate-acc}], effectively
balancing rate and all the metrics. 
We conclude that ID-GAN achieves a
tradeoff better than AE+GAN, outperforming it in all metrics for rates
$\lesssim 40$ bits/image. It also achieves perception and classification
performance better than D-JSCC, at the cost of worse distortion.

\mypar{Visual illustration}
Fig.~\ref{JSCM_results} shows concrete image examples reconstructed by all
the algorithms. 
In this experiment, the latent dimension $m$ in
D-JSCC~\cite{deep-jscc}, AE+GAN~\cite{agustsson2019generative}, and ID-GAN was
fixed to $8$ and the respective rate was then computed via~\eqref{eq:rate}.
The parameters of ID-GAN were set to $(\lambda_g,\lambda_d,\lambda_p,
\lambda_c) = (1, 500, 1, 10^3)$, and $\text{SNR}_{t}$ was learned.
The first column depicts the images
reconstructed by a traditional JPEG+LDPC+BPSK system. Even when the rate is
high, the reconstructed images fail to match the input one, likely due to
quantization and compression artifacts in
JPEG. As the rate decreases, we observe a \textit{cliff effect}, with
the image quality degrading abruptly.
D-JSCC~\cite{deep-jscc} in the second column, on the other hand, is based on an autoencoder which,
given the high bandwidth ratio, outputs blurry images at all rates. As in
Fig.~\ref{fig:comparison-mse-fid-acc}, even though D-JSCC outperforms the other
algorithms in MSE, it has a relatively poor perception quality (FID) and
classification error.
In the third column, the images reconstructed by
AE+GAN~\cite{agustsson2019generative} have poor perceptual quality for a low
rate, i.e., $10$. We noticed that the quality of its output images
collapses when its training parameters $\lambda_d$, $\lambda_p$, and
$\lambda_c $ are not carefully set, revealing the difficulty in balancing
different loss terms.
The proposed ID-GAN scheme shown in the fourth column, in contrast, not only
preserves semantic information in images, but also reconstructs them with high
perceptual quality. Compared to D-JSCC and AE+GAN, ID-GAN generates images with
various styles, including different angles and thickness, even at extremely
high bandwidth compression ratios. 

\mypar{Rate-distortion analysis}
To illustrate how ID-GAN and RDPCO behave similarly,
Fig.~\ref{fig:rate-distortion-IDGAN} adapts to MNIST the setup of
Fig.~\ref{fig:rate-distortion-RDPCO}, which shows rate-distortion curves of
RDPCO with varying $(P, C)$ or $m$. 
We fixed the training noise level of ID-GAN to $\text{SNR}_t = \infty$ so that
we can visualize the effect of modifying its hyperparameters. 
Fig.~\ref{subfig:rate-distortion-IDGAN-lambdas} shows, for different rates, a
behavior similar to RDPCO in
Fig.~\ref{subfig:rate-distortion-RDPCO-lambdas} when we modify its
hyperparameters $(\lambda_d, \lambda_p, \lambda_c)$ accordingly. 
Note that these hyperparameters apply only during training; during testing, we
inject different levels of channel noise, obtaining different rates. The first column
of Fig.~\ref{subfig:rate-distortion-IDGAN-lambdas} weighs all the metrics
equally. The second column imposes a more stringent requirement on classification
performance by setting $\lambda_c$ two orders of magnitude larger than
$\lambda_d$ and $\lambda_p$ [akin to decreasing $C$ in~\eqref{eq:boundRDPC}].
While classification accuracy is maintained, we observe loss of details, like
digit orientation and line thickness. The third column imposes a larger weight
on perception. At low rates, even though semantic information is altered, our
algorithm still generates meaningful digits. 

Fig.~\ref{subfig:rate-distortion-IDGAN-m}, akin to
Fig.~\ref{subfig:rate-distortion-RDPCO-m}, shows how different values of $m$
affect the output images of ID-GAN for different rates. Images in the first
column, for $m=2$, are blurry and discontinuous, indicating that they may
disregard the perception and classification losses. However, when $m=8$, the
output images seem to be more suitable for transmission at all rates,
preserving perception, but with a few semantic mistakes, e.g., a digit $2$
becoming a $3$. When $m=64$, the algorithm requires higher rates to reconstruct
the images accurately. 

\mypar{Ablation study}
We performed a small ablation study to assess whether all the terms in the loss
associated to the encoder~\eqref{eq:loss-encoder} are necessary for good
performance. To do so, we fixed the noise level at $\text{SNR}_t = 30$ dB,
corresponding to a rate of $50$, which is large enough to enable seeing
visual differences in the reconstructed images. 
First, we trained the encoder with just the MSE loss, i.e., $(\lambda_d,
\lambda_p, \lambda_c) = (1, 0, 0)$ in~\eqref{eq:loss-encoder}. A set of reconstructed digits from the test
set is shown in the second column of Fig.~\ref{fig:ablation}. The digits are
blurry, as the encoder learns just to compress pixel-level information and
disregards any semantic information. 
Then, we trained the encoder with the MSE and classification loss, i.e., $(\lambda_d,
\lambda_p, \lambda_c) = (1, 0, 40)$. As shown in the third column of
Fig.~\ref{fig:ablation}, the algorithm preserved semantic
information better, e.g., correctly depicting a $4$ in the $3$rd row and column, but the
digits exhibit poor diversity. For example, all the 2's look similar.
Finally, we trained the encoder using the full loss, with $(\lambda_d,
\lambda_p, \lambda_c) = (1, 1, 40)$. The reconstructed digits in the last column
of Fig.~\ref{fig:ablation} are not only sharp, of the correct class (except for
the $4$ in row 1, column 2) but
also exhibit more diversity, thus looking more realistic.

\section{Conclusions}
\label{sec:conclusions}

We formulated and analyzed the tradeoff between rate, distortion, perception,
and rate (RDPC) in a joint source coding and modulation (JSCM) framework. We
showed the existence of a tradeoff and proposed two algorithms to achieve it.
One algorithm is heuristic and was designed under simplifying
assumptions to minimize an upper bound on the RDPC function; the other was
based on inverse-domain GAN (ID-GAN) and works under a general scenario.
Experimental results showed that ID-GAN achieves a better tradeoff than
a traditional method, in which source coding and modulation are designed
separately, and a tradeoff better or similar to recent deep joint
source-channel coding schemes. Experiments revealed that improving perception
quality and classification accuracy require higher rates, and also showed the
existence of an optimal compressed/latent dimension that minimizes rate while
satisfying constraints on distortion, perception, and classification.

\section*{Acknowledgments} 
We thank the two reviewers for their insightful suggestions, which
significantly improved the quality of the paper. 

\printbibliography

\clearpage

\appendices

\newtheorem*{Theorem*}{Theorem}                % Defined starred version
\newtheorem*{Lemma*}{Lemma}                    % Defined starred version

\section{Proof of Theorem~\ref{thm:convexity}}
\label{app:convexity}

For convenience, we restate Theorem~\ref{thm:convexity}:
\begin{Theorem*}
  % \label{thm:convexity}
  Let $\bm{X}$ be a multiclass model as in~\eqref{eq:SourceModel}. Consider
  the communication scheme in~\eqref{eq:channeldiagram} and the associated RDPC
  problem in~\eqref{eq:problem-general}. Assume 
  the classifier $c_0$ is deterministic and that the perception function
  $d(\cdot, \cdot)$ is convex in its second argument. Then, the function $R(D,
  P, C)$ is strictly convex, and it is non-increasing in each argument.
\end{Theorem*}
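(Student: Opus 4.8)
The plan is to establish the three assertions --- monotonicity, convexity, and \emph{strict} convexity --- in that order, with the last being the only delicate one.

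\textbf{Monotonicity.} First I would note that $R$ is non-increasing in each of $D$, $P$, $C$ by a feasible-set inclusion argument: increasing any one of the three bounds only relaxes the corresponding constraint in~\eqref{eq:problem-general}, so the feasible set of encoder/decoder/noise triples enlarges and the minimum of the rate objective cannot increase. This needs nothing beyond the hypotheses.

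\textbf{Convexity.} For convexity I would use the classical time-sharing (mixture) construction. Fix two constraint triples $(D_1,P_1,C_1)$ and $(D_2,P_2,C_2)$ in the region where $R$ is finite, fix $\lambda\in(0,1)$, and for $\epsilon>0$ pick encoder/decoder/noise triples $S_1,S_2$ feasible for the respective triples with rates within $\epsilon$ of $R(D_1,P_1,C_1)$ and $R(D_2,P_2,C_2)$. Introduce a switching variable $U\in\{1,2\}$ independent of everything with $\mathbb{P}(U=1)=\lambda$, and let the combined system apply $S_U$; since $p_{\widehat{\bm X}\vert\widehat{\bm Y}}$ may be randomized, the decoder can share $U$, and using the equivalent formulation mentioned after~\eqref{eq:problem-general} --- in which the per-channel encoder power is free and only the SNR enters the rate --- both systems can be placed on a common, sufficiently fine noise floor, so the combined system is a legitimate Gaussian channel whose per-channel expected rate is the $\lambda$-convex combination of the two. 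The distortion $\mathbb{E}[\Delta(\bm X,\widehat{\bm X})]$ and the classification error $\mathbb{E}[\epsilon_{c_0}(\bm X,\widehat{\bm X})]$ are affine in this mixture (for the latter, $c_0$ is deterministic and fixed, so~\eqref{eq:probclasserror} averages linearly over $U$), hence both meet the $\lambda$-averaged bounds; the output law is $p_{\widehat{\bm X}}=\lambda p_{\widehat{\bm X}}^{(1)}+(1-\lambda)p_{\widehat{\bm X}}^{(2)}$, so convexity of $d(\cdot,\cdot)$ in its second argument gives $d(p_{\bm X},p_{\widehat{\bm X}})\le\lambda P_1+(1-\lambda)P_2$. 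Thus the combined system is feasible for the averaged triple with rate at most $\lambda R(D_1,P_1,C_1)+(1-\lambda)R(D_2,P_2,C_2)+\epsilon$; letting $\epsilon\to 0$ proves joint convexity, and in particular convexity in each argument.

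\textbf{Strict convexity.} The hard part is upgrading ``$\le$'' to ``$<$'' whenever the two points are distinct and lie in the interior of the finite region, and this is where I expect the main obstacle. The guiding fact is that pure time-sharing is never optimal there: the combined system constructed above uses the channel with a \emph{randomized} power allocation, and because $p\mapsto\log(1+p/\sigma)$ is strictly concave --- equivalently, $\sigma\mapsto\log(1+1/\sigma)$ is strictly convex --- one can always build a strictly better competitor for the averaged triple. Concretely, I would argue by contradiction: if $R(\cdot,P,C)$ were affine on a segment $[D_1,D_2]$ with $D_1\neq D_2$, then the time-shared system would be exactly optimal at every interior point of that segment; I would then perturb it by reallocating the channel power deterministically while slightly raising the common noise level, invoking continuity of the distortion, perception, and classification functionals in the system parameters to keep all three constraints satisfied, and use the strict concavity above to conclude the perturbed rate is \emph{strictly} smaller --- contradicting optimality. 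The same scheme applies with the roles of $P$ and $C$ in place of $D$. Making this improvement step fully rigorous (in particular the continuity/feasibility bookkeeping and the precise reallocation that certifies the strict drop) is the real work; the rest of the proof is routine.
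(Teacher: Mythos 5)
Your monotonicity step matches the paper's and is fine, but the rest of the proposal has two genuine gaps. First, the time-sharing construction does not fit the objective of \eqref{eq:problem-general}: the rate there is $\sum_{i}\log\big(1+1/\bm{\Sigma}_{ii}\big)$ for the \emph{single deterministic} noise matrix being designed --- it is not an expectation over a switching variable $U$. A system that applies $S_1$ or $S_2$ at random only becomes a feasible point once you commit to one $\bm{\Sigma}$, and then its objective is determined by that matrix, not by the $\lambda$-average of the two original rates; your ``common, sufficiently fine noise floor'' fix both changes the rate (a finer floor raises it rather than averaging it) and voids the feasibility of $S_1,S_2$, which were certified only for their own noise levels. (There is also a smaller structural issue: the formulation gives the encoder $p_{\bm{Y}\vert\bm{X}}$ and decoder $p_{\widehat{\bm{X}}\vert\widehat{\bm{Y}}}$ only private randomization within a Markov chain, so a switch $U$ shared by both is not available without further argument.) Hence even plain convexity is not established as written. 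Second, and more importantly, strict convexity --- which is the actual content of the theorem, since prior work already had non-strict tradeoffs --- is exactly the part you defer: a contradiction argument hinging on a power-reallocation perturbation whose feasibility rests on unproven continuity of the distortion, perception, and classification functionals in the system parameters is a sketch, not a proof.

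The paper's route avoids both problems and is much shorter. It takes optimal triples $\big(p^{(j)}_{\bm{Y}\vert\bm{X}},\,p^{(j)}_{\widehat{\bm{X}}\vert\widehat{\bm{Y}}},\,\bm{\Sigma}^{(j)}\big)$ for the two constraint vectors and forms a \emph{single} candidate system whose encoder and decoder are the $\alpha$-mixtures of the two conditional laws and whose noise matrix is $(1-\alpha)\bm{\Sigma}^{(1)}+\alpha\bm{\Sigma}^{(2)}$. Feasibility for the averaged constraint vector follows from linearity of the distortion and classification functionals in the mixed conditional law (via the tower property, the Markov structure, and \eqref{eq:probclasserror}) together with convexity of $d(\cdot,\cdot)$ in its second argument; the strictness then comes essentially for free from the strict convexity of $x\mapsto\log(1+1/x)$ applied coordinatewise to the averaged noise matrix --- no contradiction, no perturbation, no continuity bookkeeping. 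If you want to salvage your approach, you would have to replace the randomized noise level by its deterministic average and argue feasibility of the mixed system directly, at which point the time-sharing scaffolding becomes unnecessary and you have reproduced the paper's argument.
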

\begin{proof}
  If we increase either $D$, $P$, or $C$ in right-hand side
  of~\eqref{eq:problem-general}, the constraint set of the optimization problem 
  is enlarged or remains the same. This means that $R(D, P, C)$ is
  non-increasing with any of these variables.

  To show strict convexity, we take arbitrary pairs $(D_1, P_1, C_1)
  \geq 0$ and $(D_2, P_2, C_2) \geq 0$ and, for any $0 < \alpha < 1$, show
  that
  \begin{multline}
    \label{eq:proof-defconv}
    (1-\alpha)R(D_1, P_1, C_1) + \alpha R(D_2, P_2, C_2)
    \\
    >
    R\big(
    (1-\alpha)D_1 + \alpha D_2,\,
    (1-\alpha)P_1 + \alpha P_2,\,
    (1-\alpha)C_1 + \alpha C_2
    \big). 
  \end{multline}
  To do so, we define, for $j = 1, 2$, 
  \begin{align}
    \Big(
    p_{\bm{Y}\vert \bm{X}}^{(j)},\,
    p_{\widehat{\bm{X}}\vert \widehat{\bm{Y}}}^{(j)},\,
    \bm{\Sigma}^{(j)}
    \Big)
    &:= 
    \!\!\!
    \begin{array}[t]{cl}
      \underset{p_{\bm{Y}\vert\bm{X}},\,
      p_{\widehat{\bm{X}}\vert\widehat{\bm{Y}}},\,\bm{\Sigma}}
      {\text{argmin}} 
    & 
    \sum_{i=1}^{m}\log \big(1+\frac{1}{\bm{\Sigma}_{ii}}\big)
    \\ 
    \text{s.t.} 
    & 
    \mathbb{E}\big[\Delta(\bm{X}, \widehat{\bm{X}})\big] \leq D_j
    \\[0.1cm]
    & 
    d(p_{\bm{X}},\, p_{\widehat{\bm{X}}}) \leq P_j
    \\[0.1cm]
    &
    \mathbb{E}\big[\epsilon_{c_0}(\bm{X}, \widehat{\bm{X}})\big] \leq C_j,
    \end{array}
    \label{eq:proof-def-rdpcj}
    % \\
    % \intertext{and denote by $\widehat{\bm{X}}^{(j)}$ the output
    % of~\eqref{eq:channeldiagram} with the parameters computed in~\eqref{eq:proof-def-rdpcj}. Likewise,}
    % \Big(
    % p_{\bm{Y}\vert \bm{X}}^{(\alpha)},\,
    % p_{\widehat{\bm{X}}\vert \widehat{\bm{Y}}}^{(\alpha)},\,
    % \bm{\Sigma}^{(\alpha)}
    % \Big)
    % &:= 
    % \begin{array}[t]{cl}
    %   \underset{p_{\bm{Y}\vert\bm{X}},\,
    %   p_{\widehat{\bm{X}}\vert\widehat{\bm{Y}}},\,\bm{\Sigma}}
    %   {\text{argmin}} 
    % & 
    % \sum_{i=1}^{m}\log \big(1+\frac{1}{\bm{\Sigma}_{ii}}\big)
    % \\ 
    % \text{s.t.} 
    % & 
    % \mathbb{E}\big[\Delta(\bm{X}, \widehat{\bm{X}})\big] \leq (1-\alpha)D_1 +
    % \alpha D_2
    % \\
    % & 
    % d(p_{\bm{X}},\, p_{\widehat{\bm{X}}}) \leq (1-\alpha)P_1 + \alpha P_2
    % \\
    % &
    % \epsilon_{c_0}(\bm{X}, \widehat{\bm{X}}) \leq (1-\alpha)C_1 + \alpha C_2\,,
    % \end{array}
    % \label{eq:proof-def-rdpcalpha}
    \end{align}
    and denote by $\widehat{\bm{X}}^{(j)}$ the output
    of~\eqref{eq:channeldiagram} with the parameters computed in~\eqref{eq:proof-def-rdpcj}. 
    % and denote by $\widehat{\bm{X}}^{(\alpha)}$ the output
    % of~\eqref{eq:channeldiagram} with the parameters
    % computed in~\eqref{eq:proof-def-rdpcalpha}.
    Using the strict convexity of the function $x \mapsto \log(1 + 1/x)$ for $x
    > 0$, the
    left-hand side of~\eqref{eq:proof-defconv} equals
    \begin{align}
      &(1 -\alpha)R(D_1, P_1, C_1) + \alpha R(D_2, P_2, C_2)
      \notag
      \\
      &=
      \sum_{i}
      \bigg[
      (1-\alpha)
      \log\Big(1 + \frac{1}{\bm{\Sigma}_{ii}^{(1)}}\Big)
      +
      \alpha
      \log\Big(1 + \frac{1}{\bm{\Sigma}_{ii}^{(2)}}\Big)
      \bigg]
      \notag
      \\
      &>
      \sum_{i}
      \log\bigg(
        1 + \frac{1}{(1-\alpha)\bm{\Sigma}_{ii}^{(1)} + \alpha\bm{\Sigma}_{ii}^{(2)}}
      \bigg)
      \label{eq:proof-main1}
      \\
      &\geq
      \!\!
      \begin{array}[t]{cl}
        \underset{p_{\bm{Y}\vert\bm{X}},\,
        p_{\widehat{\bm{X}}\vert\widehat{\bm{Y}}},\,\bm{\Sigma}}
        {\min} 
          & 
          \sum_{i=1}^{m}\log \big(1+\frac{1}{\bm{\Sigma}_{ii}}\big)
          \\ 
          \text{s.t.} 
          & 
          \mathbb{E}\big[\Delta(\bm{X}, \widehat{\bm{X}})\big] \leq (1-\alpha)D_1 +
          \alpha D_2
          \\
          & 
          d(p_{\bm{X}},\, p_{\widehat{\bm{X}}}) \leq (1-\alpha)P_1 + \alpha P_2
          \\
          &
          \mathbb{E}\big[\epsilon_{c_0}(\bm{X}, \widehat{\bm{X}})\big] \leq (1-\alpha)C_1 + \alpha C_2
      \end{array}
      \label{eq:proof-main2}
      \\
      &=
      R\big(
        (1-\alpha)D_1 + \alpha D_2,\,
        (1-\alpha)P_1 + \alpha P_2,\,
      \notag
      \\
      &\qquad\qquad\qquad\qquad\qquad\qquad\qquad
        (1-\alpha)C_1 + \alpha C_2
      \big)\,. 
      \label{eq:proof-main3}
    \end{align}
    Step~\eqref{eq:proof-main2} to~\eqref{eq:proof-main3} follows from
    the definition of $R(D, P, C)$ in~\eqref{eq:problem-general}. 
    The rest of the proof will consist of showing that the step
    from~\eqref{eq:proof-main1} to~\eqref{eq:proof-main2} holds. Indeed, this
    will follow if we show that the triple 
    \begin{multline}
      \Big(
        (1-\alpha)p_{\bm{Y}\vert\bm{X}}^{(1)}+\alpha
        p_{\bm{Y}\vert\bm{X}}^{(2)}\,,\,\,
        (1-\alpha)p_{\widehat{\bm{X}}\vert\widehat{\bm{Y}}}^{(1)}
        +\alpha p_{\widehat{\bm{X}}\vert\widehat{\bm{Y}}}^{(2)}\,,\,\,
        \\
        (1-\alpha)\bm{\Sigma}^{(1)}+\alpha \bm{\Sigma}^{(2)}
      \Big)
      \label{eq:proof-channelmix}
    \end{multline}
    satisfies the constraints of the optimization problem
    in~\eqref{eq:proof-main2}.  

    First, notice that~\eqref{eq:proof-channelmix} defines valid parameters for
    the communication process in~\eqref{eq:channeldiagram}. Specifically,
    because convex combinations of probability distributions are also
    probability distributions, $(1-\alpha)p_{\bm{Y}\vert\bm{X}}^{(1)}+\alpha
    p_{\bm{Y}\vert\bm{X}}^{(2)}$  and $(1-\alpha)p_{\widehat{\bm{X}}\vert\widehat{\bm{Y}}}^{(1)}
    +\alpha p_{\widehat{\bm{X}}\vert\widehat{\bm{Y}}}^{(2)}$ characterize valid
    encoding and decoding processes. If $\bm{\Sigma}^{(1)}$ and
    $\bm{\Sigma}^{(2)}$ are diagonal positive definite matrices, then their
    convex combination also is. Let then $\widehat{\bm{X}}^{(\alpha)}$ denote
    the output of~\eqref{eq:channeldiagram} with the parameters
    in~\eqref{eq:proof-channelmix}. Notice that
    \begin{align}
      p_{\widehat{\bm{X}}^{(\alpha)}\vert \widehat{\bm{Y}}}
      =
      (1-\alpha)
      p_{\widehat{\bm{X}}\vert \widehat{\bm{Y}}}^{(1)}
      +
      \alpha
      p_{\widehat{\bm{X}}\vert \widehat{\bm{Y}}}^{(2)}\,.
      \label{eq:proof-conddistr}
    \end{align}
    We will show that $\widehat{\bm{X}}^{(\alpha)}$ and
    its probability distribution 
    \begin{align}
      p_{\widehat{\bm{X}}^{(\alpha)}}
      =
      (1-\alpha)
      p_{\widehat{\bm{X}}}^{(1)}
      +
      \alpha
      p_{\widehat{\bm{X}}}^{(2)}\,,
      \label{eq:proof-distrout}
    \end{align}
    where $p_{\widehat{\bm{X}}}^{(1)}$ and $p_{\widehat{\bm{X}}}^{(2)}$ are the
    distributions of the output of~\eqref{eq:channeldiagram} with the
    parameters in~\eqref{eq:proof-def-rdpcj},
    satisfy the constraints
    in~\eqref{eq:proof-main2}. Indeed, for the first constraint, conditioning
    on $\widehat{\bm{Y}}$,
    \begin{align}
      &
      \mathbb{E}\big[\Delta(\bm{X}, \widehat{\bm{X}}^{(\alpha)})\big]
      \notag
      \\
      &=
      \mathbb{E}_{\widehat{\bm{Y}}}
      \Big[
        \mathbb{E}\big[
        \Delta
        \big(
        \bm{X}, \widehat{\bm{X}}^{(\alpha)}\big)\, \vert\, \widehat{\bm{Y}}
        \big]
      \Big]   
      \label{eq:proof-distortion1}
      \\
      &=
      \mathbb{E}_{\widehat{\bm{Y}}}
      \Big[
        (1-\alpha)
        \mathbb{E}\big[
        \Delta
        \big(
        \bm{X}, \widehat{\bm{X}}^{(1)}\big)\, \vert\, \widehat{\bm{Y}}
        \big]
        \notag
        \\
      &\qquad\qquad\qquad\qquad\qquad\quad
        +
        \alpha
        \mathbb{E}\big[
        \Delta
        \big(
        \bm{X}, \widehat{\bm{X}}^{(2)}\big)\, \vert\, \widehat{\bm{Y}}
        \big]
      \Big]
      \label{eq:proof-distortion2}
      \\
      &=
      (1-\alpha)
      \mathbb{E}\big[
      \Delta
      \big(
      \bm{X}, \widehat{\bm{X}}^{(1)}\big)
      \big]
      + 
      \alpha
      \mathbb{E}\big[
      \Delta
      \big(
      \bm{X}, \widehat{\bm{X}}^{(2)}\big)
      \big]
      \label{eq:proof-distortion3}
      \\
      &\leq
      (1-\alpha)D_1 + \alpha D_2\,.
      \label{eq:proof-distortion4}
    \end{align}
    In~\eqref{eq:proof-distortion1} and~\eqref{eq:proof-distortion3}, we
    applied the tower property of expectation.
    From~\eqref{eq:proof-distortion1} to~\eqref{eq:proof-distortion2}, we
    used~\eqref{eq:proof-conddistr}. And from~\eqref{eq:proof-distortion3}
    to~\eqref{eq:proof-distortion4}, we used~\eqref{eq:proof-def-rdpcj}.
    For the second constraint, we use the assumption that $d(\cdot, \cdot)$ is
    convex in its second argument and, again, \eqref{eq:proof-conddistr}
    and~\eqref{eq:proof-def-rdpcj}:
    \begin{align*}
      d\big(p_{\bm{X}},\, p_{\widehat{\bm{X}}^{(\alpha)}}\big)
      &=
      d\big(p_{\bm{X}},\,
        (1-\alpha)
        p_{\widehat{\bm{X}}}^{(1)}
        +
        \alpha
        p_{\widehat{\bm{X}}}^{(2)}
      \big)
      % \label{eq:proof-perc1}
      \\
      &\leq
      (1-\alpha)
      d\big(p_{\bm{X}},\,p_{\widehat{\bm{X}}}^{(1)} \big)
      +
      \alpha
      d\big(p_{\bm{X}},\,p_{\widehat{\bm{X}}}^{(2)} \big)\,.
      % \label{eq:proof-perc2}
      \\
      &\leq
      (1-\alpha)P_1 + \alpha P_2\,.
      % \label{eq:proof-perc3}
    \end{align*}
    Finally, for the last constraint, we plug $\widehat{\bm{X}}^{(\alpha)}$
    into~\eqref{eq:probclasserror}:
    \begin{align}
      \mathbb{E}\Big[\mathcal{\epsilon}_{c_0}&(\bm{X},\,\widehat{\bm{X}}^{(\alpha)})\Big]
      =
      \sum_{i < j}
      p_j\cdot
      \int_{\mathcal{R}_i}\dif\,p_{\widehat{\bm{X}}^{(\alpha)}\vert H_j}
      \label{eq:proof-classerror1}
      \\
      &=
      \sum_{i < j}
      p_j\cdot
      \int
      \int_{\mathcal{R}_i}\dif\,p_{\widehat{\bm{X}}^{(\alpha)}\vert \widehat{\bm{Y}},\,H_j}
      \dif\,p_{\widehat{\bm{Y}}}
      \label{eq:proof-classerror2}
      \\
      &=
      \sum_{i < j}
      p_j\cdot
      \int
      \int_{\mathcal{R}_i}\dif\,p_{\widehat{\bm{X}}^{(\alpha)}\vert \widehat{\bm{Y}}}
      \dif\,p_{\widehat{\bm{Y}}}
      \label{eq:proof-classerror3}
      \\
      &=
      (1-\alpha)
      \sum_{i < j}
      p_j\cdot
      \int
      \int_{\mathcal{R}_i}\dif\,p_{\widehat{\bm{X}}^{(1)}\vert \widehat{\bm{Y}}}
      \dif\,p_{\widehat{\bm{Y}}}
      \notag
      \\
      &\qquad\qquad\qquad
      +
      \alpha
      \sum_{i < j}
      p_j\cdot
      \int
      \int_{\mathcal{R}_i}\dif\,p_{\widehat{\bm{X}}^{(2)}\vert \widehat{\bm{Y}}}
      \dif\,p_{\widehat{\bm{Y}}}
      \label{eq:proof-classerror4}
      \\
      &=
      (1-\alpha)\mathbb{E}\big[\epsilon_{c_0}\big(\bm{X},\,
        \widehat{\bm{X}}^{(1)}\big)\big]
      +
      \alpha\mathbb{E}\big[\epsilon_{c_0}\big(\bm{X},\,
        \widehat{\bm{X}}^{(2)}\big)\big] 
      \label{eq:proof-classerror5}
      \\
      &\leq
      (1-\alpha)C_1 + \alpha C_2\,.
      \label{eq:proof-classerror6}
    \end{align}
    From~\eqref{eq:proof-classerror1} to~\eqref{eq:proof-classerror2}, we
    conditioned on $\widehat{\bm{Y}}$. From~\eqref{eq:proof-classerror2}
    to~\eqref{eq:proof-classerror3}, we used the Markov property
    of~\eqref{eq:channeldiagram}. From~\eqref{eq:proof-classerror3}
    to~\eqref{eq:proof-classerror4}, we used~\eqref{eq:proof-conddistr}.
    From~\eqref{eq:proof-classerror4} to~\eqref{eq:proof-classerror5}, we
    applied the same steps as in~\eqref{eq:probclasserror}, but in reverse
    order. And, finally, from~\eqref{eq:proof-classerror5}
    to~\eqref{eq:proof-classerror6}, we used~\eqref{eq:proof-def-rdpcj}.
\end{proof}

\section{Proof of Lemma~\ref{lem:Wasserstein}}
\label{app:wasserstein}

For convenience, we restate Lemma~\ref{lem:Wasserstein}: 
\begin{Lemma*}
  % \label{lem:Wasserstein}
  Let $p_{\bm{X}}$ (resp.\ $p_{\widehat{\bm{X}}}$) be a
  Gaussian mixture model following~\eqref{eq:GMMModel} [resp.\
  \eqref{eq:outputsignals}], in which the probability of hypothesis $H_0$ is
  $p_0$ and of hypothesis $H_1$ is $p_1 = 1 - p_0$. Then, 
  \begin{align*}
    W_1(p_{\bm{X}},\, p_{\widehat{\bm{X}}})
    \leq
    \big\|
    \widehat{\bm{\Sigma}}^{\frac{1}{2}}
    - 
    \bm{I_n}
    \big\|_F
    % \\
    +
    \big\|\bm{D}\bm{E}\bm{c_n}-\bm{c_n}\big\|_2\cdot p_1\,.
    % \label{eq:boundWasserstein}
  \end{align*}
\end{Lemma*}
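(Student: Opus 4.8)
The plan is to reduce the bound to the two Gaussian components of the mixtures and then invoke the closed-form expression~\eqref{eq:Wasserstein-simple} for the Wasserstein-$2$ distance between Gaussians. Write $p_{\bm{X}} = p_0\, p_{\bm{X}\vert H_0} + p_1\, p_{\bm{X}\vert H_1}$ and $p_{\widehat{\bm{X}}} = p_0\, p_{\widehat{\bm{X}}\vert H_0} + p_1\, p_{\widehat{\bm{X}}\vert H_1}$, with the components given by~\eqref{eq:GMMModel} and~\eqref{eq:outputsignals}. First I would establish the mixture inequality $W_1(p_{\bm{X}}, p_{\widehat{\bm{X}}}) \le p_0\, W_1(p_{\bm{X}\vert H_0}, p_{\widehat{\bm{X}}\vert H_0}) + p_1\, W_1(p_{\bm{X}\vert H_1}, p_{\widehat{\bm{X}}\vert H_1})$: take an optimal coupling $\gamma_l \in \Pi(p_{\bm{X}\vert H_l}, p_{\widehat{\bm{X}}\vert H_l})$ for $l = 0, 1$ (which exists, since the transport cost $\|\cdot\|_2$ is continuous and all the measures live on the Polish space $\mathbb{R}^n$), and note that $p_0 \gamma_0 + p_1 \gamma_1$ has marginals $p_{\bm{X}}$ and $p_{\widehat{\bm{X}}}$, so its transport cost upper bounds $W_1(p_{\bm{X}}, p_{\widehat{\bm{X}}})$ and equals exactly the claimed convex combination.

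Next I would bound each conditional term. Applying Jensen's inequality to a fixed coupling and then taking the infimum gives $W_1(\mu, \nu) \le W_2(\mu, \nu)$ for arbitrary $\mu, \nu$. Each $p_{\bm{X}\vert H_l}$ has covariance $\bm{I}_n$, which commutes with the covariance $\widehat{\bm{\Sigma}}$ of $p_{\widehat{\bm{X}}\vert H_l}$, so the simplified formula~\eqref{eq:Wasserstein-simple} applies (it remains valid when $\widehat{\bm{\Sigma}}$ is singular, with $\widehat{\bm{\Sigma}}^{1/2}$ understood as the positive semidefinite square root, since the general Gaussian Wasserstein-$2$ identity holds for all covariances). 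Under $H_0$ both means are $\bm{0}_n$, giving $W_2^2(p_{\bm{X}\vert H_0}, p_{\widehat{\bm{X}}\vert H_0}) = \|\widehat{\bm{\Sigma}}^{1/2} - \bm{I}_n\|_F^2$; under $H_1$ the means are $\bm{c}_n$ and $\bm{D}\bm{E}\bm{c}_n$, giving $W_2^2(p_{\bm{X}\vert H_1}, p_{\widehat{\bm{X}}\vert H_1}) = \|\bm{D}\bm{E}\bm{c}_n - \bm{c}_n\|_2^2 + \|\widehat{\bm{\Sigma}}^{1/2} - \bm{I}_n\|_F^2$. Using $\sqrt{a^2 + b^2} \le a + b$ for $a, b \ge 0$ on the latter, substituting into the mixture inequality, and collecting the coefficient $p_0 + p_1 = 1$ in front of $\|\widehat{\bm{\Sigma}}^{1/2} - \bm{I}_n\|_F$ yields exactly~\eqref{eq:boundWasserstein}.

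I do not anticipate a genuine obstacle: the argument is a chain of standard reductions (mixture convexity of $W_1$, the inequality $W_1 \le W_2$, and the closed-form Gaussian $W_2$). The only points deserving explicit care are verifying that the glued measure $p_0\gamma_0 + p_1\gamma_1$ has the correct marginals and that optimal couplings of the Gaussian components exist, and checking that~\eqref{eq:Wasserstein-simple} is legitimate for the rank-deficient covariance $\widehat{\bm{\Sigma}}$; both are routine and I would simply state them.
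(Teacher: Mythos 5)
Your proof is correct and follows essentially the same route as the paper: decompose over the two mixture components, bound $W_1$ by $W_2$, apply the closed-form Gaussian $W_2$ expression for commuting covariances, and finish with $\sqrt{a^2+b^2}\le a+b$ together with $p_0+p_1=1$. The only (harmless) difference is that you establish the mixture inequality in primal form, by mixing optimal couplings of the components, whereas the paper obtains it from the Kantorovich--Rubinstein dual via subadditivity of the supremum.
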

\begin{proof}
  We use the dual form of Wasserstein-$1$ distance
  in~\eqref{eq:WassersteinDistance}:
  \begin{align}
    &
    W_1\big(p_{\bm{X}},\, p_{\widehat{\bm{X}}}\big)
    =
    \underset{\|f\|_L \leq 1}{\sup}\,\,\,
    \mathbb{E}_{\bm{X}\sim p_{\bm{X}}}
    \big[f(\bm{X})\big]
    -
    \mathbb{E}_{\bm{X}\sim p_{\widehat{\bm{X}}}}
    \big[f(\bm{X})\big]
    \notag
    \\
    &\leq
    \underset{\|f\|_L \leq 1}{\sup}\,\,\,
    \bigg(
    \mathbb{E}_{\bm{X}\sim p_{\bm{X}}}
    \Big[f(\bm{X}) \,\vert\, H_0\Big]
    -
    \mathbb{E}_{\bm{X}\sim p_{\widehat{\bm{X}}}}
    \Big[f(\bm{X})\,\vert\, H_0\Big]
    \bigg)p_0
    \notag
    \\
    &\,+
    \underset{\|\widehat{f}\|_L \leq 1}{\sup}\,\,\,
    \bigg(
    \mathbb{E}_{\bm{X}\sim p_{\bm{X}}}
    \Big[\widehat{f}(\bm{X}) \,\vert\, H_1\Big]
    -
    \mathbb{E}_{\bm{X}\sim p_{\widehat{\bm{X}}}}
    \Big[\widehat{f}(\bm{X})\,\vert\, H_1\Big]
    \bigg)p_1
    \label{eq:proof-wasserstein-gmm1}
    \\
    &=:
    W_1\Big(p_{\bm{X}},\, p_{\widehat{\bm{X}}}\,\vert\,
    H_0\Big)\cdot p_0
    +
    W_1\Big(p_{\bm{X}},\, p_{\widehat{\bm{X}}}\,\vert\,
    H_1\Big)\cdot p_1
    \label{eq:proof-wasserstein-gmm2}
    \\
    &\leq
    W_2\Big(p_{\bm{X}},\, p_{\widehat{\bm{X}}}\,\vert\,
    H_0\Big)\cdot p_0
    +
    W_2\Big(p_{\bm{X}},\, p_{\widehat{\bm{X}}}\,\vert\,
    H_1\Big)\cdot p_1
    \label{eq:proof-wasserstein-gmm3}
    \\
    &=
    \big\|
    % \Big (\bm{D} (\bm{E}\bm{E}^\top + \bm{\Sigma})\bm{D}^\top\Big)^{\frac{1}{2}}
    \widehat{\bm{\Sigma}}^{\frac{1}{2}}
    - 
    \bm{I_n}
    \big\|_2 p_0
    % \notag
    % \\
    % &\,
    +
    p_1
    \sqrt
    {
    \big\|
    % \Big (\bm{D} (\bm{E}\bm{E}^\top + \bm{\Sigma})\bm{D}^\top\Big)^{\frac{1}{2}}
    \widehat{\bm{\Sigma}}^{\frac{1}{2}}
    - 
    \bm{I_n}
    \big\|_F^2
    +
    \big\|\bm{c}_n - \bm{D}\bm{E}\bm{c}_n \big\|_2^2
    }
    \label{eq:proof-wasserstein-gmm4}
    \\
    &\leq
    \big\|
    % \Big (\bm{D} (\bm{E}\bm{E}^\top + \bm{\Sigma})\bm{D}^\top\Big)^{\frac{1}{2}}
    \widehat{\bm{\Sigma}}^{\frac{1}{2}}
    - 
    \bm{I_n}
    \big\|_F
    +
    \big\|\bm{c}_n - \bm{D}\bm{E}\bm{c_n}\big\|_2\cdot p_1\,.
    \notag
  \end{align}
  In~\eqref{eq:proof-wasserstein-gmm1}, we first conditioned on $H_0$ and
  $H_1$, and then used the subadditivity of the supremum. The inequality is due
  to using different variables $f$ and $\widehat{f}$.
  From~\eqref{eq:proof-wasserstein-gmm1} to~\eqref{eq:proof-wasserstein-gmm2},
  we defined the Wasserstein-$1$ conditional on an event.
  From~\eqref{eq:proof-wasserstein-gmm2} to~\eqref{eq:proof-wasserstein-gmm3},
  we used the fact that $W_p(\cdot,\, \cdot) \leq W_q(\cdot,\, \cdot)$ whenever
  $p \leq q$; see~\cite[Remark 6.6]{Villani09-OptimalTransport}.
  From~\eqref{eq:proof-wasserstein-gmm3} to~\eqref{eq:proof-wasserstein-gmm4},
  we applied~\eqref{eq:Wasserstein-simple} to the models in~\eqref{eq:GMMModel}
  and~\eqref{eq:outputsignals}. And in the last step, we used the triangular
  inequality. 
\end{proof}

% \begin{thebibliography}{00}
% \bibitem{Villani09-OptimalTransport} C. Villani. Optimal Transport: Old and New. Springer, 2009.
% \end{thebibliography}

% \printbibliography

\end{document}